\definecolor{Green}{rgb}{0.13, 0.65, 0.3}
\definecolor{Amber}{rgb}{0.3, 0.5, 1.0}
\newcommand{\inner}[1]{ \left\langle {#1} \right\rangle }
\newcommand{\Ind}[1]{ \field{I}{\{{#1}\}} }
\newcommand{\norm}[1]{\left\|{#1}\right\|}
\newcommand{\gap}{\Delta}
\newcommand{\hatl}{\widehat{\ell}}
\newcommand{\hatL}{\widehat{L}}
\newtheorem{theorem}{Theorem}
\newtheorem{lemma}[theorem]{Lemma}
\newtheorem{corollary}[theorem]{Corollary}
\DeclareMathOperator*{\argmin}{\arg\!\min}
\newcommand{\calM}{{\mathcal{M}}}
\newcommand{\paralog}{\beta}
\newcommand{\cnt}{B}
\newcommand{\gapmin}{\Delta_{\textsc{min}}}
\newcommand{\field}[1]{\mathbb{#1}}
\newcommand{\fR}{\field{R}}
\newcommand{\E}{\field{E}}
\newcommand{\Reg}{{\text{\rm Reg}}}
\newcommand{\order}{\ensuremath{\mathcal{O}}}
\newcommand{\otil}{\ensuremath{\widetilde{\mathcal{O}}}}
\newcommand{\Holder}{{H{\"o}lder}\xspace}
\newcommand{\opt}{\mathring{q}}
\newcommand{\optpi}{\mathring{\pi}}
\newcommand{\rbr}[1]{\left(#1\right)}
\newcommand{\sbr}[1]{\left[#1\right]}
\newcommand{\cbr}[1]{\left\{#1\right\}}
\DeclareFontFamily{OMX}{MnSymbolE}{}
\DeclareFontShape{OMX}{MnSymbolE}{m}{n}{
    <-6>  MnSymbolE5
   <6-7>  MnSymbolE6
   <7-8>  MnSymbolE7
   <8-9>  MnSymbolE8
   <9-10> MnSymbolE9
  <10-12> MnSymbolE10
  <12->   MnSymbolE12}{}
\DeclareSymbolFont{mnlargesymbols}{OMX}{MnSymbolE}{m}{n}
\DeclareMathDelimiter{\llangle}{\mathopen}{mnlargesymbols}{'164}{mnlargesymbols}{'164}
\DeclareMathDelimiter{\rrangle}{\mathclose}{mnlargesymbols}{'171}{mnlargesymbols}{'171}
\newcommand{\pref}[1]{\prettyref{#1}}
\newcommand{\savehyperref}[2]{\texorpdfstring{\hyperref[#1]{#2}}{#2}}
\newcommand{\ftrl}{\textsc{FTRL}\xspace}
\title{Simultaneously Learning Stochastic and Adversarial Episodic MDPs with Known Transition}
\author{%
  Tiancheng Jin \\
  University of Southern California \\
  \texttt{tiancheng.jin@usc.edu} \\
  \And
  Haipeng Luo \\
  University of Southern California \\
  \texttt{haipengl@usc.edu} \\
}
\begin{document}

\maketitle

\begin{abstract}
 
This work studies the problem of learning episodic Markov Decision Processes with known transition and bandit feedback.
We develop the first algorithm with a ``best-of-both-worlds'' guarantee: it achieves $\order(\log T)$ regret when the losses are stochastic, and simultaneously enjoys worst-case robustness with $\otil(\sqrt{T})$ regret even when the losses are adversarial, where $T$ is the number of episodes.
More generally, it achieves $\otil(\sqrt{C})$ regret in an intermediate setting where the losses are corrupted by a total amount of $C$.
Our algorithm is based on the Follow-the-Regularized-Leader method from~\citet{zimin2013}, with a novel hybrid regularizer inspired by recent works of Zimmert et al.~\citep{zimmert2019optimal,zimmert2019beating} for the special case of multi-armed bandits.
Crucially, our regularizer admits a non-diagonal Hessian with a highly complicated inverse.
Analyzing such a regularizer and deriving a particular self-bounding regret guarantee is our key technical contribution and might be of independent interest.
 
\end{abstract}

\section{Introduction} 
We study the problem of learning episodic Markov Decision Processes (MDPs).
In this problem, a learner interacts with the environment through $T$ episodes.
In each episode, the learner starts from a fixed state, then sequentially selects one of the available actions and transits to the next state according to a fixed transition function for a fixed number of steps.
The learner observes only the visited states and the loss for each visited state-action pair,
and her goal is to minimize her regret, the difference between her total loss over $T$ episodes and that of the optimal fixed policy in hindsight. 

When the losses are adversarial and can change arbitrarily between episodes, the state-of-the-art is achieved by the UOB-REPS algorithm of~\citep{jin2019learning} with near-optimal regret $\otil(\sqrt{T})$ (ignoring dependence on other parameters).
On the other hand, the majority of the literature focuses on the stochastic/i.i.d. loss setting where the loss for each state-action pair follows a fixed distribution. 
For example, \citet{azar2017minimax} achieve the minimax regret $\otil(\sqrt{T})$ in this case.
Moreover, the recent work of~\citet{simc2019} shows the first non-asymptotic gap-dependent regret bound of order $\order(\log T)$ for this problem, which is considerably more favorable than the worst-case $\otil(\sqrt{T})$ regret.

A natural question then arises: {\it is it possible to achieve the best of both worlds with one single algorithm?}
In other words, can we achieve $\order(\log T)$ regret when the losses are stochastic, and {\it simultaneously} enjoy worst-case robustness with $\otil(\sqrt{T})$ regret when the losses are adversarial?
Considering that the existing algorithms from~\citep{jin2019learning} and~\citep{simc2019} for these two settings are drastically different, it is highly unclear whether this is possible.

In this work, we answer the question affirmatively and develop the first algorithm with such a best-of-both-worlds guarantee, under the condition that the transition function is known. 
We emphasize that even in the case with known transition, the problem is still highly challenging.
For example, the adversarial case was studied in~\citep{zimin2013} and still requires using the Follow-the-Regularized-Leader (\ftrl) or Online Mirror Descent framework from the online learning literature (see e.g.,~\citep{Hazan16}) over the {\it occupancy measure} space, which the UOB-REPS algorithm~\citep{jin2019learning} adopts as well. 
This is still significantly different from the algorithms designed for the stochastic setting.

Moreover, our algorithm achieves the logarithmic regret $\order(\log T)$ for a much broader range of situations besides the usual stochastic setting.
In fact, neither independence nor identical distributions are required, as long as a certain gap condition similar to that of~\citep{zimmert2019optimal} (for multi-armed bandits) holds (see \pref{eq:self_bounding_constraint}).
Even more generally, our algorithm achieves $\otil(\log T + \sqrt{C})$ regret in an intermediate setting where the losses are corrupted by a total amount of $C$. 
This bound smoothly interpolates between the logarithmic regret for the stochastic setting and the worst-case $\otil(\sqrt{T})$ regret  for the adversarial setting as $C$ increases from $0$ to $T$.

\paragraph{Techniques.}
Our algorithm is mainly inspired by recent advances in achieving best-of-both-worlds guarantees for the special case of multi-armed bandits or semi-bandits~\citep{wei2018more, zimmert2019optimal, zimmert2019beating}.
These works show that, perhaps surprisingly, such guarantees can be obtained with the standard \ftrl framework originally designed only for the adversarial case.
All we need is a carefully designed regularizer and a particular analysis that relies on proving a certain kind of {\it self-bounding regret bounds}, which then automatically implies $\order(\log T)$ regret for the stochastic setting, and more generally $\otil(\sqrt{C})$ regret for the case with $C$ corruption.

We greatly extend this idea to the case of learning episodic MDPs.
As mentioned, \citet{zimin2013} already solved the adversarial case using \ftrl over the occupancy measure space, in particular with Shannon entropy as the regularizer in the form $\sum_{s,a}q(s,a) \ln q(s,a)$, where $q(s,a)$ is the occupancy for state $s$ and action $a$.
Our key algorithmic contribution is to design a new regularizer based on the $\nicefrac{1}{2}$-Tsallis-entropy used in~\citep{zimmert2019optimal}.
However, we argue that using only the Tsallis entropy, in the form of $-\sum_{s,a}\sqrt{q(s,a)}$, is not enough.
Instead, inspired by the work of~\citep{zimmert2019beating} for semi-bandits,
we propose to use a {\it hybrid} regularizer in the form $-\sum_{s,a}(\sqrt{q(s,a)}+\sqrt{q(s) - q(s,a)})$ where $q(s) = \sum_a q(s,a)$.
In fact, to stabilize the algorithm, we also need to add yet another regularizer in the form $-\sum_{s,a} \log q(s,a)$ (known as log-barrier), borrowing the idea from~\citep{bubeck2018sparsity, bubeck2019improved, lee2020closer}.
See \pref{sec:FTRL} and \pref{sec:algoritm_main_result} for more detailed discussions on the design of our regularizer.

More importantly, we emphasize that analyzing our new regularizer requires significantly new ideas, mainly because it admits a {\it non-diagonal Hessian} with a highly complicated inverse.
Indeed, the key of the \ftrl analysis lies in analyzing the quadratic norm of the loss estimator with respect to the inverse Hessian of the regularizer.
As far as we know, almost all regularizers used in existing \ftrl methods are decomposable over coordinates and thus admit a diagonal Hessian, making the analysis relatively straightforward (with some exceptions mentioned in related work below).
Our approach is the first to apply and analyze an explicit non-decomposable regularizer with non-diagonal Hessian.
Our analysis heavily relies on rewriting $q(s)$ in a different way and constructing the Hessian inverse recursively (see \pref{sec:analysis}).
The way we analyze our algorithm and derive a self-bounding regret bound for MDPs is the key technical contribution of this work and might be of independent interest. 

While we only resolve the problem with known transition, we believe that our approach, providing the first best-of-both-worlds result for MDPs, sheds light on how to solve the general case with unknown transition. 

\paragraph{Related work.}
We refer the reader to~\citep{simc2019} for earlier works on gap-dependent logarithmic regret bounds for learning MDPs with stochastic losses, and to~\citep{jin2019learning} for earlier works on learning MDPs with adversarial losses.
Using very different techniques, the recent work of~\citet{lykouris2019corruption} also develops an algorithm for the stochastic setting that is robust to a certain amount of adversarial corruption to the environment (including both the transition and the losses).
Their algorithm does not ensure a worst-case bound of $\order(\sqrt{T})$
and can only tolerate $o(\sqrt{T})$ amount of corruption, while our algorithm ensures $\order(\sqrt{T})$ regret always.
On the other hand, their algorithm works even under unknown transition while ours cannot.

For the special case of multi-armed bandits (essentially our setting with one single state),
the question of achieving best-of-both-worlds was first considered by~\citet{bubeck2012best}.
Since then, different improvements have been proposed over the years~\citep{seldin2014one, auer2016algorithm, seldin2017improved, wei2018more, lykouris2018stochastic, gupta2019better, zimmert2019optimal, zimmert2019beating}.
As mentioned, our regularizer is largely based on the most recent advances from~\citep{zimmert2019optimal, zimmert2019beating}, briefly reviewed in \pref{sec:FTRL}.

As far as we know, all existing works on using \ftrl with a regularizer that admits a non-diagonal Hessian do not require calculating the Hessian inverse explicitly.
For example, the SCRiBLe algorithm of~\citep{abernethy2008competing, abernethy2012interior} for efficient bandit linear optimization and its variants (e.g.,~\citep{saha2011improved, hazan2014bandit}) use any self-concordant barrier~\citep{nesterov1994interior} of the decision set as the regularizer, and the entire analysis only relies on certain properties of self-concordant barriers and does not even require knowing the explicit form of the regularizer.
As another example, when applying \ftrl over a space of matrices, the regularizer usually also does not decompose over the entries (see e.g.,~\citep{kotlowski2019bandit}).
However, the Hessian inverse is often well-known from matrix calculus already.
These previous attempts are all very different from our analysis where we need to explicitly work out the inverse of the non-diagonal Hessian.

\section{Preliminaries}
\label{sec:probform}

The problem of learning an episodic MDP through $T$ episodes is defined by a tuple $(S,A,P,L,\{\ell_t\}_{t=1}^T)$, 
where $S$ and $A$ are the finite state and action space respectively, $P:S \times A \times S \rightarrow [0,1]$ is the transition function such that $P(s'|s,a)$ is the probability of moving to state $s'$ after executing action $a$ at state $s$, $L$ is the number of interactions within each episode, and $\ell_t: S\times A\rightarrow [0,1]$ is the loss function for episode $t$, specifying the loss of each state-action pair. 

Without loss of generality (see detailed discussions in~\citep{jin2019learning}), the MDP is assumed to have the following layered structure.
First, the state space $S$ consists of $L+1$ layers $S_0, \ldots, S_L$ such that $S$ = $\bigcup_{k=0}^{L}S_k$ and $S_i \cap S_j = \emptyset$ for $i\neq j$.
Second, $S_0$ and $S_L$ are singletons, containing only the start state $s_0$ and the terminal state $s_L$ respectively.
Third, transitions are only possible between consecutive layers. In other words, if $P(s'|s,a) > 0$, then $s' \in S_{k+1}$ and $x\in S_k$ for some $k$. 

Ahead of time, the environment decides $(S,A,P,L,\{\ell_t\}_{t=1}^T)$, with $S, A, L$, and $P$ revealed to the learner.
The interaction between the learner and the environment then proceeds in $T$ episodes. 
For each episode $t$, 
the learner decides a stochastic policy $\pi_t: S \times A \rightarrow  [0,1]$, where $\pi_t(a|s)$ is the probability of selecting action $a$ at state $s$, then executes the policy starting from the initial state $s_0$ until reaching the final state $s_L$, yielding and observing a sequence of $L$ state-action-loss tuples $(s_0, a_0, \ell_t(s_0, a_0)), \ldots, (s_{L-1}, a_{L-1}, \ell_t(s_{L-1}, a_{L-1}))$, where action $a_k$ is drawn from $\pi_t(\cdot| s_k)$ and state $s_{k+1}$ is drawn from $P(\cdot|s_k,a_k)$, for each $k = 0,\ldots,L-1$. 
Importantly, the learner only observes the loss of the visited action-state pairs and nothing else about the loss function $\ell_t$, known as the {\it bandit feedback} setting.

For any policy $\pi$, with slight abuse of notation we denote its expected loss in episode $t$ as 
$
\ell_t(\pi) = \E\big[\sum_{k=0}^{L-1}\ell_t(s_k,a_k)\big],
$
where the state-action pairs $(s_0, a_0), \ldots, (s_{L-1}, a_{L-1})$ are generated according to the transition function $P$ and the policy $\pi$. 
The expected regret of the learner with respect to a policy $\pi$ is defined as $\Reg_T(\pi) = \E\big[\sum_{t=1}^T \ell_t(\pi_t) - \sum_{t=1}^T \ell_t(\pi)\big]$,
which is the difference between the total loss of the learner and that of policy $\pi$.
The goal of the leaner is to minimize her regret with respect to an optimal policy, denoted as $\Reg_T =\max_\pi \Reg_T(\pi)$.
Throughout the paper, we use $\optpi: S\rightarrow A$ to denote a {\it deterministic} optimal policy, which is known to always exist.

\paragraph{Occupancy measures.}
To solve the problem using techniques from online learning, we need the concept of ``occupancy measures'' proposed in \citep{zimin2013}.
Specifically, fixing the MDP of interest, every stochastic policy $\pi$ induces an occupancy measure $q^\pi: S \times A \rightarrow [0,1]$ with $q^\pi(s, a)$ being the probability of visiting state-action pair $(s,a)$ by following policy $\pi$.
With this concept, the expected loss of a policy $\pi$ in episode $t$ can then be written as a simple linear function of $q^\pi$:
$\ell_t(\pi) = \sum_{s\neq s_L, a\in A}q^\pi(s,a)\ell_t(s,a)$, which we denote as $\inner{q^\pi, \ell_t}$,
and the regret can be written as $\Reg_T = \E\big[\sum_{t=1}^T \inner{q_t - \opt, \ell_t}\big]$ where $q_t = q^{\pi_t}$ and $\opt = q^{\optpi}$.

In other words, the problem essentially becomes an instance of online linear optimization with bandit feedback,
where in each episode the learner proposes an occupancy measure $q_t \in \Omega$.
Here, $\Omega$ is the set of all possible occupancy measures and is known to be a polytope satisfying two constraints~\citep{zimin2013}:
first, for every $k = 0, \ldots, L-1$, $\sum_{s\in S_k}\sum_{a\in A}q(s,a) = 1$; 
second, for every $k = 1, \ldots, L-1$ and every state $s\in S_k$,
\begin{equation}
\sum_{s'\in S_{k-1}}\sum_{a'\in A}q(s',a')P(s|s',a') = \sum_{a}q(s,a).
\label{eq:occupancy_measure_cond_2}
\end{equation}
Having an occupancy measure $q_t \in \Omega$, one can directly find its induced policy via $\pi_t(a | s) = q_t(s,a)/\sum_{a'\in A} q_t(s,a')$.

\paragraph{More notation.} 
We use $k(s)$ to denote the index of the layer to which state $s$ belongs, and $\Ind{\cdot}$ to denote the indicator function whose value is $1$ if the input holds true and $0$ otherwise.
For a positive definite matrix $M \in \fR^{ K \times K}$, $\norm{x}_M \triangleq \sqrt{ x^\top M x}$ is the quadratic norm of $x\in \fR^{K}$ with respect to $M$. 
Throughout the paper, we use $\otil(\cdot)$ to hide terms of order $\log T$.

\subsection{Conditions on loss functions}
\label{sec:preliminaries_self_bounding}

So far we have not discussed how the environment decides the loss functions $\ell_1, \ldots, \ell_T$.
We consider two general settings.
The first one is the {\it adversarial setting}, where there is no assumption at all on how the losses are generated --- they can even be generated in a malicious way after seeing the learner's algorithm (but not her random seeds).\footnote{%
Technically, this is a setting with an {\it oblivious} adversary, where $\ell_t$ does not depend on the learner's previous actions. However, this is only for simplicity and our results generalize to adaptive adversaries directly. 
}
The O-REPS algorithm of~\citep{zimin2013} achieves $\Reg_T = \otil(\sqrt{L|S||A|T})$ in this case, which was shown to be optimal.

The second general setting we consider subsumes many cases such as the stochastic case.
Generalizing~\citep{zimmert2020optimal} (the full version of~\citep{zimmert2019optimal}) for bandits, we propose to summarize this setting by the following condition on the losses:
there exist a gap function $\gap: S\times A \rightarrow \fR_+$, a mapping $\pi^\star: S\rightarrow A$, and a constant $C\geq 0$ such that
\begin{equation}
\label{eq:self_bounding_constraint}
\Reg_T = \E\sbr{\sum_{t=1}^T \inner{q_t - \opt, \ell_t}} \geq \E\sbr{\sum_{t=1}^{T}\sum_{s\neq s_L}\sum_{a\neq \pi^\star(s)}q_t(s,a)\gap\rbr{s,a}}
- C 
\end{equation}
holds for all sequences of $q_1, \ldots, q_T$.
While seemingly strong and hard to interpret, this condition in fact subsumes many existing settings as explained below.

\paragraph{Stochastic losses.}
In the standard stochastic setting studied by most works in the literature,
$\ell_1, \ldots, \ell_T$ are i.i.d. samples of a fixed and unknown distribution. 
In this case, by the performance difference lemma (see e.g.,~\citep[Lemma~5.2.1]{kakade2003sample}), \pref{eq:self_bounding_constraint} is satisfied with $\pi^\star = \optpi$, $C = 0$, and $\gap(s,a) = Q(s, a) - \min_{a'\neq a}Q(s,a')$ where $Q$ is the Q function of the optimal policy $\optpi$.\footnote{%
One caveat here is that since we require $\gap(s,a)$ to be non-zero, the optimal action for each state needs to be unique for \pref{eq:self_bounding_constraint} to hold. 
The work of~\citep{zimmert2020optimal} requires this uniqueness condition for multi-armed bandits as well, which is conjectured to be only an artifact of the analysis.
}
In fact, \pref{eq:self_bounding_constraint} holds with equality in this case.
Here, $\Delta(s,a)$ is the sub-optimality gap of action $a$ at state $s$, and plays a key role in the optimal logarithmic regret for learning MDPs as shown in~\citep{simc2019}.

\paragraph{Stochastic losses with corruption.}
More generally, consider a setting where the environment first generates $\ell_1', \ldots, \ell_T'$ as i.i.d. samples of an unknown distribution (call the corresponding MDP $\calM$), and then corrupt them in an arbitrary way to arrive at the final loss functions $\ell_1, \ldots, \ell_T$.
Then \pref{eq:self_bounding_constraint} is satisfied with $\pi^\star$ being the optimal policy of $\calM$, $C = 2\sum_{t=1}^T\sum_{k<L} \max_{s\in S_k, a}|\ell_t(s,a) - \ell_t'(s, a)|$ being the total amount of corruption, and $\gap(s,a) = Q(s, a) - \min_{a'\neq a}Q(s,a')$ where $Q$ is the Q function of policy $\pi^\star$ with respect to $\calM$.
This is because:
$
\Reg_T \geq 
\sum_{t=1}^T \inner{q_t - q^{\pi^\star}, \ell_t}
= \inner{q_t - q^{\pi^\star}, \ell_t'} + \inner{q_t , \ell_t - \ell_t'}
- \inner{q^{\pi^\star}, \ell_t - \ell_t'} 
\geq \sum_{t=1}^{T}\sum_{s\neq s_L}\sum_{a\neq \pi^\star(s)}q_t(s,a)\gap\rbr{s,a} - C,
$
where in the last step we use the performance difference lemma again and the definition of $C$.

Compared to the corruption model studied by~\citep{lykouris2019corruption}, the setting considered here is more general in the sense that $C$ measures the total amount of corruption in the loss functions, instead of the number of corrupted episodes  as in~\citep{lykouris2019corruption}.
On the other hand, they allow corrupted transition as well when an episode is corrupted, while our transition is always fixed and known.

%

\subsection{Follow-the-Regularized-Leader and self-bounding regret}\label{sec:FTRL} 

$\ftrl$ is one of the standard frameworks to derive online learning algorithms for adversarial environments.
In our context, $\ftrl$ computes $q_t = \argmin_{q\in\Omega} \sum_{\tau<t} \big\langle q, \hatl_\tau\big\rangle + \psi_t(q)$ where $\hatl_\tau$ is some estimator for the loss function $\ell_\tau$ and $\psi_t$ is a regularizer usually of the form $\psi_t = \frac{1}{\eta_t}\psi$ for some learning rate $\eta_t > 0$ and a fixed convex function $\psi$.
The O-REPS algorithm exactly falls into this framework with $\hatl_t$ being the importance-weighted estimator (more details in \pref{sec:algoritm_main_result}) and $\psi$ being the entropy function.

While traditionally designed for adversarial environments, somewhat surprisingly $\ftrl$ was recently shown to be able to adapt to stochastic environments with $\order(\log T)$ regret as well, in the context of multi-armed bandits starting from the work by~\citet{wei2018more}, which was later greatly improved by~\citet{zimmert2019optimal, zimmert2020optimal}.
This approach is conceptually extremely clean and only relies on designing a regularizer that enables a certain kind of self-bounding regret bound.
We briefly review the key ideas below since our approach is largely based on extending the same idea to MDPs.

Multi-armed bandit is a special case of our setting with $L=1$.
Thus, the concept of states does not play a role, and below we write $q(s_0, a)$ as $q(a)$ for conciseness.
The regularizer used in~\citep{zimmert2019optimal} is the $\nicefrac{1}{2}$-Tsallis-entropy, originally proposed in~\citep{audibert2009minimax}, and is defined as $\psi(q) = -\sum_a \sqrt{q(a)}$.
With a simple learning rate schedule $\eta_t = 1/\sqrt{t}$, it was shown that \ftrl ensures the following adaptive regret bound for some constant $\cnt >0$,
\begin{equation}
\label{eq:mab_main_result}
\Reg_T \leq \E\sbr{\cnt \sum_{t=1}^{T} \sum_{a\neq a^\star} \sqrt{\frac{q_t(a)}{t}}} 
\end{equation}  
where $a^\star$ can be {\it any} action.
The claim is now that, solely based on \pref{eq:mab_main_result}, one can derive the best-of-both-worlds guarantee already, without even further considering the details of the algorithm.
To see this, first note that
applying Cauchy-Schwarz inequality ($\sum_a\sqrt{q_t(s)} \leq \sqrt{|A|}$) immediately leads to a worst-case robustness guarantee of $\Reg_T = \order(\sqrt{|A|T})$ (optimal for multi-armed bandits).

More importantly, suppose now Condition~\eqref{eq:self_bounding_constraint} holds 
(note again that there is only one state $s=s_0$ and we write $\gap(s_0,a)$ as $\gap(a)$).
Then picking $a^\star = \pi^\star(s_0)$, one can arrive at the following self-bounding regret using \pref{eq:mab_main_result}:
\begin{equation*}
\Reg_T  \leq \E\sbr{\sum_{t=1}^{T} \sum_{a\neq a^\star} \frac{q_t(a)\gap(a)}{2z} + \frac{z\cnt^2}{2t\gap(a)}} \leq \frac{\Reg_T+C}{2z} + z\cnt^2\sum_{a\neq a^\star}\frac{\log T}{\gap(a)}, 
\end{equation*}
where the first step uses the AM-GM inequality and holds for any $z>1/2$, and the second step uses \pref{eq:self_bounding_constraint} and the fact $\sum_{t=1}^T 1/t \leq 2\log T$.
Note that we have bounded the regret in terms of itself (hence the name self-bounding).
Rearranging then gives $\Reg_T \leq \frac{2z^2\cnt^2}{2z-1}\sum_{a\neq a^\star}\frac{\log T}{\gap(a)} + \frac{C}{2z-1}$.
It just remains to pick the optimal $z$ to minimize the bound.
Specifically, using the shorthand $U = \frac{1}{2}\cnt^2\sum_{a\neq a^\star}\frac{\log T}{\gap(a)}$ and $x=2z-1 >0$,
the bound can be simplified to $\Reg_T \leq 2U + Ux + (C+U)/x$,
and finally picking the best $x$ to balance the last two terms gives $\Reg_T \leq 2U + 2\sqrt{U(C+U)} \leq 4U + 2\sqrt{UC} = 2\cnt^2\sum_{a\neq a^\star}\frac{\log T}{\gap(a)} + \sqrt{2C\cnt^2\sum_{a\neq a^\star}\frac{\log T}{\gap(a)}}$.

In the case with stochastic losses ($C=0$), the final bound is $\order\big(\sum_{a\neq a^\star}\frac{\log T}{\gap(a)}\big)$, exactly matching the lower bound for stochastic multi-armed bandits~\citep{lai1985asymptotically}.
More generally, in the corruption model, the regret is order $\order(\log T + \sqrt{C\log T})$, smoothly interpolating between the bounds for the stochastic setting and the adversarial setting as $C$ increases from $0$ to $T$.

Finally, we remark that although not mentioned explicitly, the follow-up work~\citep{zimmert2019beating} reveals that using a {\it hybrid} regularizer in the form $\psi(q) = -\sum_a (\sqrt{q(a)} + \sqrt{1-q(a)})$, with Tsallis entropy applied to both $q$ and its complement, also leads to the same bound \pref{eq:mab_main_result}, via an even simpler analysis.
This is crucial for our algorithm design and analysis as explained in the next section.

\section{Algorithm and Main Results}
\label{sec:algoritm_main_result}

\begin{algorithm}[t]
	\caption{\ftrl with hybrid Tsallis entropy for learning stochastic and adversarial MDPs}
	\label{alg:main_alg}
	\begin{algorithmic}
		\STATE {\bfseries Parameters:} $\alpha$, $\paralog$, $\gamma$
		\STATE {\bfseries Define:} hybrid regularizer $\phi_H$ and log-barrier regularizer $\phi_L$ as in \pref{eq:hybrid} and \pref{eq:log-barrier}
		\STATE {\bfseries Define:} valid occupancy measure space $\Omega$ (see \pref{sec:probform}), learning rate $\eta_t = \gamma /\sqrt{t}$
		\STATE {\bfseries Initialize:} $\hatL_0(s,a) = 0$ for all $(s,a)$
		\FOR{$t=1$ {\bfseries to} $T$}
		\STATE compute $q_t = \argmin_{q\in\Omega} \big\langle q, \hatL_{t-1}\big\rangle + \psi_t(q)$ where $\psi_t(q) = \frac{1}{\eta_t} \phi_H(q) + \phi_L(q)$
		\STATE execute policy $\pi_t$ where $\pi_t(a | s) = q_t(s,a)/q_t(s)$
		\STATE observe $(s_0, a_0, \ell_t(s_0, a_0)), \ldots, (s_{L-1}, a_{L-1}, \ell_t(s_{L-1}, a_{L-1}))$
		\STATE construct estimator $\hatl_t$ such that:  $\forall (s,a), \hatl_t(s,a) = \frac{\ell_t(s,a)}{q_t(s,a)}\Ind{s_{k(s)}=s,a_{k(s)}= a}$ 
		\STATE update $\hatL_t = \hatL_{t-1} + \hatl_t$
		\ENDFOR
	\end{algorithmic}
\end{algorithm}

We are now ready to present our algorithm.
Based on the discussions from \pref{sec:FTRL}, our goal is to design an algorithm with a regret bound akin to \pref{eq:mab_main_result}: 
\begin{equation}
\label{eq:main_goal}
\Reg_T \leq \E\sbr{\cnt\sum_{t=1}^{T} \sum_{s\neq s_L} \sum_{a\neq \pi(s)} \sqrt{\frac{q_t(s,a)}{t}}}
\end{equation}
for {\it any} mapping $\pi:  S\rightarrow A$.
This immediately implies a worst-case regret bound $\Reg_T = \order(\sqrt{L|S||A|T})$ (by applying Cauchy-Schwarz inequality), matching the optimal bound of~\citep{zimin2013}.
Moreover, repeating the same calculation and picking $\pi = \pi^\star$, one can verify that under Condition~\eqref{eq:self_bounding_constraint} this leads to a similar bound of order $\order(\log T + \sqrt{C\log T})$ as in the case for multi-armed bandits.

To achieve so, a natural idea is to directly extend the $\nicefrac{1}{2}$-Tsallis-entropy regularizer to MDPs and take $\psi(q) = -\sum_{s,a}\sqrt{q(s,a)}$.
However, generalizing the proofs of~\citep{zimmert2019optimal} one can only prove a weaker bound:
$\Reg_T \leq \cnt\sum_{t=1}^{T} \sum_{(s,a)\neq \xi(k(s))} \sqrt{\frac{q_t(s,a)}{t}}$
for any $\xi$ mapping from a layer index to a state-action pair.
Compared to the desired~\pref{eq:main_goal}, one can see that instead of excluding one arbitrary action $\pi(s)$ for {\it each} state $s$ in the bound, now we only exclude one arbitrary action for {\it one single} state specified by $\xi$ in each layer.
This is not enough to derive the same results as one can verify.

The hybrid regularizer $-\sum_{s,a}(\sqrt{q(s,a)} + \sqrt{1-q(s,a)})$~\citep{zimmert2019beating} suffers the same issue.
However, we propose a natural fix to this hybrid version by replacing $1$ with $q(s) \triangleq \sum_{a} q(s,a)$, that is, the marginal probability of visiting state $s$ under occupancy measure $q$.\footnote{%
We find it intriguing that while the Tsallis entropy regularizer and its hybrid version work equally well for multi-armed bandits, only the latter admits a natural way to be generalized to learning MDPs.
}
More concretely, we define (with ``$H$'' standing for ``hybrid'')
\begin{equation}\label{eq:hybrid}
\phi_{H}(q) = - \sum_{s\neq s_L,a\in A} \rbr{\sqrt{ q(s,a) } + \alpha \sqrt{ q(s) - q(s,a)}}, 
\quad\text{where}\;\; q(s) = \sum_{a\in A} q(s,a)
\end{equation}
for some parameter $\alpha >0$, as the key component of our regularizer.
Note that in the case of multi-armed bandits, this exactly recovers the original hybrid regularizer since $q(s_0)=1$ and there is only one state.
For MDPs, intuitively each state is dealing with a multi-armed bandit instance, but with total available probability $q(s)$ instead of $1$, making $\phi_H$ a natural choice.
However, also note another important distinction between $\phi_H$ and the ones discussed earlier: $\phi_H$ does not decompose over the action-state pairs, thus admitting a {\it non-diagonal Hessian}.
This makes the analysis highly challenging since standard \ftrl analysis requires analyzing the Hessian inverse of the regularizer.
We will come back to this challenge in \pref{sec:analysis}.

To further stabilize the algorithm and make sure that $q_t$ and $q_{t+1}$ are not too different (another important requirement of typical \ftrl analysis), we apply another regularizer in addition to $\phi_H$, defined as (with ``$L$'' standing for ``log-barrier''):
\begin{equation}\label{eq:log-barrier}
\phi_{L}(q) = \paralog \sum_{s\neq s_L,a\in A} \log\frac{1}{q(s,a)},  
\end{equation}
for some parameter $\paralog > 0$.
Our final regularizer for time $t$ is then $\psi_t(q) = \frac{1}{\eta_t} \phi_H(q) + \phi_L(q)$ where $\eta_t = \gamma / \sqrt{t}$ is a decreasing learning rate with parameter $\gamma > 0$.
In all our results we pick $\paralog = \order(L)$.
Thus, the weight for $\phi_L$ is much smaller than that for $\phi_H$.
This idea of adding a small amount of log-barrier to stabilize the algorithm was first used in~\citep{bubeck2018sparsity} and recently applied in several other works~\citep{bubeck2019improved, zheng2019equipping, lee2020closer}.
See more discussions in \pref{sec:analysis}.

Our final algorithm is shown in \pref{alg:main_alg}.
In each episode $t$, the algorithm follows standard \ftrl and computes $q_t = \argmin_{q\in\Omega} \big\langle q, \hatL_{t-1}\big\rangle + \psi_t(q)$ with $\hatL_{t-1} = \sum_{s<t}\hatl_s$ being the accumulated estimated loss.
Then the policy $\pi_t$ induced from $q_t$ is executed, generating a sequence of state-action-loss tuples.
A standard importance-weighted unbiased estimator $\hatl_t$ is then constructed with $\hatl_t(s,a)$ being the actual loss $\ell_t(s,a)$ divided by $q_t(s,a)$ if the state-action pair $(s,a)$ was visited in this episode, and zero otherwise.
Also note that \pref{alg:main_alg} can be efficiently implemented since the key \ftrl step is a convex optimization problem with $\order(L+|S||A|)$ linear constraints (solving it to an inaccuracy of $\order(1/T)$ is enough clearly).

\subsection{Main Results}
\label{sec:result}

We move on to present the regret guarantees of our algorithm.
As mentioned, the goal is to show \pref{eq:main_goal},
and the theorem below shows that our algorithm essentially achieves this (see \pref{app:appendix_theorem_main} for the proof).

\begin{theorem}
	\label{thm:main}
	With $\alpha = \nicefrac{1}{\sqrt{|A|}}$, $\paralog=64L$, and $\gamma = 1$, \pref{alg:main_alg} ensures
	that $\Reg_T$ is bounded by
\begin{equation}\label{eq:main}
\sum_{t=1}^T \otil\rbr{\min\cbr{
\E\sbr{\cnt\sum_{s\neq s_L}\sum_{a\neq \pi(s)}\sqrt{\frac{q_t(s,a)}{t}} + D\sqrt{\sum_{s\neq s_L}\sum_{a\neq \pi(s)}\frac{q_t(s,a)+\opt(s,a)}{t}}},
\frac{D}{\sqrt{t}}
}
}
\end{equation}	
for any mapping $\pi: S \rightarrow A$,
where $\cnt = L^\frac{5}{2}+L\sqrt{|A|}$ and $D = \sqrt{L|S||A|}$.
\end{theorem}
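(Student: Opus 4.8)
The plan is to run the standard \ftrl template and push all the difficulty into bounding the movement of the iterates against the inverse Hessian of $\phi_H$. Since the importance-weighted estimator is conditionally unbiased, $\E[\hatl_t\mid\mathcal F_{t-1}]=\ell_t$, I would first rewrite $\Reg_T=\E\sbr{\sum_t\inner{q_t-\opt,\hatl_t}}$ with comparator $\opt=q^{\optpi}$. A standard \ftrl inequality for the (after a harmless constant shift, increasing) regularizers $\psi_t=\frac{1}{\eta_t}\phi_H+\phi_L$ then yields, for any $u\in\Omega$,
\[
\sum_{t=1}^T\inner{q_t-u,\hatl_t}\;\le\;\underbrace{\sum_{t=1}^T\rbr{\inner{q_t-q_{t+1},\hatl_t}-\calD_{\psi_t}(q_{t+1},q_t)}}_{\textsc{stability}}\;+\;\underbrace{\sum_{t=1}^T\rbr{\tfrac{1}{\eta_{t+1}}-\tfrac{1}{\eta_t}}\rbr{\phi_H(u)-\phi_H(q_{t+1})}}_{\textsc{penalty}},
\]
up to lower-order boundary terms. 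I expect the \textsc{penalty} to be the milder of the two: with the schedule $\frac{1}{\eta_{t+1}}-\frac{1}{\eta_t}=\order(t^{-1/2})$ and comparator $u=\opt$ (which, being the occupancy of a deterministic policy, puts all its mass on $a=\optpi(s)$), the differences $\phi_H(\opt)-\phi_H(q_{t+1})$ collapse—through the hybrid pairing of $\sqrt{q(s,a)}$ with $\alpha\sqrt{q(s)-q(s,a)}$—into leave-one-out sums of $\sqrt{q_t(s,a)}$ and $\sqrt{\opt(s,a)}$, which after one Cauchy--Schwarz step assemble into the correction term $D\sqrt{\sum_{a\neq\pi(s)}(q_t(s,a)+\opt(s,a))/t}$ of \pref{eq:main}.

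The heart of the argument is the \textsc{stability} term, and this is where I expect the main obstacle. The log-barrier $\phi_L$, carrying weight $\paralog=\order(L)$, is included precisely so that $\calD_{\psi_t}(q_{t+1},q_t)$ keeps $q_{t+1}$ within a constant multiplicative factor of $q_t$; granting this, each summand is bounded in the usual way by the local dual norm $\frac12\norm{\hatl_t}^2_{(\nabla^2\psi_t(q_t))^{-1}}\le\frac{\eta_t}{2}\norm{\hatl_t}^2_{(\nabla^2\phi_H(q_t))^{-1}}$ (discarding the positive curvature of $\phi_L$, and replacing the constrained norm on the tangent space of $\Omega$ by the cheaper full-space norm). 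Because $\phi_H$ separates over states, $\nabla^2\phi_H(q_t)$ is block-diagonal with one $|A|\times|A|$ block per state; the entire difficulty is that, owing to the $\sqrt{q(s)-q(s,a)}$ terms, each block is \emph{non-diagonal}. I would invert it by promoting the marginal $q(s)=\sum_a q(s,a)$ to an explicit coordinate, as in \pref{eq:hybrid}: in these augmented variables the per-state block becomes an \emph{arrowhead} matrix (diagonal in the $q(s,a)$'s, plus a single bordering row/column and corner for $q(s)$), whose inverse has a closed form obtainable by a Sherman--Morrison / Schur-complement computation—this is the explicit recursive inversion carried out in \pref{sec:analysis}. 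Marginalizing $q(s)$ back out then gives an explicit handle on the original block inverse.

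With the inverse block in hand the quadratic form simplifies dramatically: since $\hatl_t$ is supported on the single realized trajectory—one coordinate per layer—only the diagonal entries $\big((\nabla^2\phi_H(q_t))^{-1}\big)_{(s,a),(s,a)}$ of the blocks at the visited state--action pairs survive, with no cross terms. I would bound these diagonal entries, showing they are of order $q_t(s,a)^{3/2}$ for the non-dominant actions while the complementary curvature $\alpha\,(q(s)-q(s,a))^{-3/2}$ makes the entry of the near-deterministic (dominant) action negligible—the mechanism that upgrades ``one action per layer'' (all that pure Tsallis affords) to ``one action \emph{per state}.'' Taking the conditional expectation reweights $\hatl_t(s,a)^2$ by the visitation probability $q_t(s,a)$, turning each surviving term into $\eta_t\sqrt{q_t(s,a)}=\gamma\sqrt{q_t(s,a)/t}$ and producing the leading $\cnt\sum_{a\neq\pi(s)}\sqrt{q_t(s,a)/t}$; the elementary inequalities controlling $\min\{\sqrt{q_t(s,a)},\alpha\sqrt{q(s)-q(s,a)}\}$ together with $\sum_a q_t(s,a)=q_t(s)$ are what let the excluded action $\pi(s)$ be \emph{arbitrary}, and the choice $\alpha=1/\sqrt{|A|}$ with the $L$-layer bookkeeping is what fixes $\cnt=L^{5/2}+L\sqrt{|A|}$.

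Finally, the outer $\min\{\cdot,D/\sqrt t\}$ is obtained by running the same decomposition but bounding the per-round stability crudely via Cauchy--Schwarz, $\sum_{s,a}\sqrt{q_t(s,a)}\le\sqrt{L|S||A|}=D$, which gives a round-$t$ regret of $\otil(D/\sqrt t)$ independent of how $q_t$ distributes its mass; since both bounds hold simultaneously we may keep the smaller one in each round, guaranteeing the worst-case $\otil(\sqrt{L|S||A|T})$ while preserving the refined self-bounding term whenever $q_t$ concentrates. The decisive and most delicate step is the explicit inversion of the arrowhead blocks together with the resulting upper bound on their diagonal entries: everything rests on showing that the dominant action's curvature genuinely suppresses its inverse-Hessian entry uniformly over the shape of $q_t(s,\cdot)$, after which the best-of-both-worlds consequences follow by repeating the multi-armed-bandit computation of \pref{sec:FTRL} with $\pi=\pi^\star$ in \pref{eq:self_bounding_constraint}.
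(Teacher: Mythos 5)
Your high-level skeleton (the \ftrl{} decomposition into penalty and stability, the log-barrier keeping $q_{t+1}$ multiplicatively close to $q_t$, a Cauchy--Schwarz pass for the $D/\sqrt{t}$ branch) matches the paper, but the step you yourself call decisive --- the inversion of the Hessian of $\phi_H$ --- rests on a premise that fails. You keep the definition $q(s)=\sum_a q(s,a)$, so that $\nabla^2\phi_H$ is block-diagonal with one $|A|\times|A|$ block per state, and you claim the large curvature $\alpha\rbr{q(s)-q(s,a)}^{-\nicefrac{3}{2}}$ suppresses the inverse-Hessian diagonal entry of the dominant action. It does not: with that definition, $q(s)-q(s,\pi(s))=\sum_{d\neq\pi(s)}q(s,d)$ does not depend on $q(s,\pi(s))$ at all, so the blowing-up curvature sits entirely in the sub-block indexed by the \emph{other} actions and never appears in the $(\pi(s),\pi(s))$ entry of the block $B_s$. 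Since $(B_s^{-1})_{\pi(s),\pi(s)}\geq 1/(B_s)_{\pi(s),\pi(s)}=\Omega\rbr{q(s)^{\nicefrac{3}{2}}/(1+\alpha|A|)}$, the dominant action contributes $\Omega\rbr{\sqrt{q_t(s)}/(1+\alpha|A|)}$ to the stability term, which is not controlled by $\sum_{a\neq\pi(s)}\sqrt{q_t(s,a)}$ (take $q_t(s,a)\to 0$ for all $a\neq\pi(s)$). This collapses your bound back to the pure-Tsallis ``one excluded action per layer'' guarantee that \pref{sec:algoritm_main_result} explicitly identifies as insufficient. The paper's fix is exactly the opposite of block-diagonality: it rewrites $q(s)$ via the flow constraint \pref{eq:occupancy_measure_cond_2} as a function of the \emph{previous layer's} occupancies (\pref{eq:hybrid_alternative}), which puts $\alpha(q(s)-q(s,a))^{-\nicefrac{3}{2}}$ on the $(s,a)$ diagonal (\pref{eq:derivative_3}), couples consecutive layers, and is inverted recursively by Woodbury (\pref{lem:M_k_inverse_decomp}). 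In that formulation your ``no cross terms survive'' claim is also false --- the realized trajectory hits one pair per layer and the inverse Hessian couples layers --- which is why the paper needs \pref{lem:quadratic_form_diagonal} to reduce to diagonal entries at the cost of a factor of $L$.

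A secondary but real gap is the penalty term. You assert that $\phi_H(\opt)-\phi_H(q_t)$ ``collapses'' into leave-one-out sums plus one Cauchy--Schwarz. The first piece, $\sum_{s}\sqrt{q_t(s)}\rbr{h_s(\pi_t)-h_s(\optpi)}$, does behave like the bandit case, but the second piece $\sum_{s}\rbr{\sqrt{q_t(s)}-\sqrt{\opt(s)}}h_s(\optpi)$ has no bandit analogue (there $q(s_0)=1$ for every $q$) and is where the extra $D\sqrt{\sum_{a\neq\pi(s)}(q_t(s,a)+\opt(s,a))/t}$ term actually originates. Bounding it requires interpolating through $q^{\pi}$ and an induction over layers via the reachability probabilities (\pref{lem:induction_state_reach_prob}, \pref{lem:sqrt_diff}, \pref{lem:sqrt_diff_2}); nothing in your proposal supplies this mechanism.
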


Looking at the first term of the min operator, one sees that we have an extra term compared to the ideal bound \pref{eq:main_goal}.
However, this only contributes to small terms in the final bounds as we explain below.\footnote{%
The fact that this form of weaker bounds is also sufficient for the self-bounding property might be of interest for other bandit problems as well.}
Based on this theorem, we next present more concrete regret bounds for our algorithm.
First, consider the adversarial setting where there is no further structure in the losses.
Simply taking the second term of the min operator in \pref{eq:main} we obtain the following corollary.

\begin{corollary} 
	\label{col:adversarial_bound}
	With $\alpha = \nicefrac{1}{\sqrt{|A|}}$, $\paralog=64L$, and $\gamma = 1$, the regret of \pref{alg:main_alg} is always bounded as 
	\begin{equation*}
	\Reg_T \leq \otil\rbr{\sqrt{L|S||A|T}}.
	\end{equation*}
\end{corollary}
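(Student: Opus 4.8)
The plan is to obtain this bound as an immediate consequence of \pref{thm:main}, using only the second branch of the minimum appearing in \pref{eq:main}. Since $\min\{x,y\}\le y$ for all $x,y$, the per-episode summand in \pref{eq:main} is at most $\otil\!\rbr{D/\sqrt{t}}$, regardless of the choice of the mapping $\pi$. Crucially, this second branch does not depend on $\pi$ at all, and it requires no structural assumption on the losses, so it is exactly the quantity one wants in the fully adversarial setting. Thus I would first write
\begin{equation*}
\Reg_T \;\leq\; \sum_{t=1}^T \otil\rbr{\frac{D}{\sqrt{t}}} \;=\; \otil\rbr{D\sum_{t=1}^T \frac{1}{\sqrt{t}}},
\end{equation*}
pulling the $\otil(\cdot)$ outside the sum since the hidden $\log T$ factors are uniform across $t$.

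Next I would control the sum with the standard estimate $\sum_{t=1}^T t^{-1/2} \le 2\sqrt{T}$, which follows by comparing the sum to $\int_0^T x^{-1/2}\,dx = 2\sqrt{T}$. This gives $\Reg_T \le \otil\rbr{2D\sqrt{T}}$, and substituting $D=\sqrt{L|S||A|}$ and absorbing the constant factor $2$ into the $\otil$ notation yields $\Reg_T \le \otil\rbr{\sqrt{L|S||A|T}}$, as claimed. Note that the potentially large constant $\cnt = L^{5/2}+L\sqrt{|A|}$ plays no role here, because it appears only in the \emph{first} branch of the minimum, which we discard entirely; the adversarial bound is driven solely by the crude $D/\sqrt{t}$ term.

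Honestly, there is no real obstacle in this corollary: the entire difficulty is front-loaded into \pref{thm:main} itself, whose proof must contend with the non-diagonal Hessian of the hybrid regularizer $\phi_H$ promised in \pref{sec:analysis}. The only points worth verifying in the corollary are the two bookkeeping facts just used --- that $\otil$ legitimately absorbs the factor $2$ and the $\log T$ terms, and that the branch $D/\sqrt{t}$ is free of any dependence on $\pi$ or on the loss structure --- both of which are immediate. Consequently I expect this step to be entirely routine.
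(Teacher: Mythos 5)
Your proposal is correct and is exactly the paper's argument: the paper derives \pref{col:adversarial_bound} by taking the second term of the min operator in \pref{eq:main} and summing $D/\sqrt{t}$ over $t$, just as you do. The bookkeeping steps you verify (the $\sum_t t^{-1/2}\le 2\sqrt{T}$ bound and absorbing constants into $\otil$) are the only content, so nothing is missing.
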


Again, this bound matches that of the O-REPS algorithm~\citep{zimin2013} and is known to be optimal.
Note that using the first term of the min operator we can also derive an $\otil(\sqrt{T})$ bound, but the dependence on other parameters would be larger.

On the other hand, under Condition~\pref{eq:self_bounding_constraint}, we obtain the following bound.
\begin{corollary} \label{col:stochastic_bound} 
Suppose Condition~\eqref{eq:self_bounding_constraint} holds. Then with $\alpha = \nicefrac{1}{\sqrt{|A|}}$, $\paralog=64L$, and $\gamma = 1$, the regret of Algorithm~\ref{alg:main_alg} is bounded as $\Reg_T \leq \order(U + \sqrt{UC})$ where 
\[
U = \frac{L|S||A| \log T}{\gapmin} + L^2\rbr{L^3 + |A|}\sum_{s\neq s_L}\sum_{a \neq \pi^\star(s) }\frac{\log T}{ \gap(s,a) }
= \order(\log T)
\]
and $\gapmin =\min_{s\neq s_L, a\neq \pi^\star(s)} \gap(s,a)$ is the minimal gap.
Consequently, in the stochastic setting, we have $\Reg_T = \order(U)$.
\end{corollary}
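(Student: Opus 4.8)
The plan is to deduce \pref{col:stochastic_bound} from \pref{thm:main} and Condition~\eqref{eq:self_bounding_constraint} alone, following the self-bounding recipe of \pref{sec:FTRL}. I would instantiate the bound \eqref{eq:main} with the mapping $\pi=\pi^\star$ taken from the condition and retain the first branch of the $\min$, which gives (keeping expectations implicit and absorbing the $\otil$ logarithmic factors)
\[
\Reg_T \;\le\; \otil\!\rbr{\,\cnt\sum_{t=1}^{T}\sum_{s\neq s_L}\sum_{a\neq\pi^\star(s)}\sqrt{\frac{q_t(s,a)}{t}} \;+\; D\sum_{t=1}^{T}\sqrt{\sum_{s\neq s_L}\sum_{a\neq\pi^\star(s)}\frac{q_t(s,a)+\opt(s,a)}{t}}\,}.
\]
The benchmark occupancy $\opt=q^{\optpi}$ is induced by a \emph{deterministic} optimal policy, so $\opt(s,a)=0$ whenever $a\neq\optpi(s)$; in the stochastic setting $\pi^\star=\optpi$, so the $\opt(s,a)$ contribution in the second term vanishes and both sums range only over off-optimal actions $a\neq\pi^\star(s)$ (the general corrupted case is analogous).

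For the first (Tsallis) term I would apply AM-GM coordinate-wise, trading off the regret against the gaps: for any $z>0$,
\[
\cnt\sqrt{\frac{q_t(s,a)}{t}} \;\le\; \frac{q_t(s,a)\,\gap(s,a)}{4z} + \frac{z\,\cnt^2}{t\,\gap(s,a)}.
\]
Summing over $t,s,a$ and using $\sum_{t\le T}\frac1t=\order(\log T)$, the second piece becomes $\order\!\rbr{z\,\cnt^2\log T\sum_{s\neq s_L,\,a\neq\pi^\star(s)}\frac{1}{\gap(s,a)}}$, while the first piece is $\frac{1}{4z}\sum_{t}\sum_{s\neq s_L,\,a\neq\pi^\star(s)}q_t(s,a)\gap(s,a)\le\frac{\Reg_T+C}{4z}$ directly by Condition~\eqref{eq:self_bounding_constraint}.

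The genuinely new term is the $D$-term, which unlike the bandit case sits under a square root of a sum. Here I would first apply Cauchy-Schwarz across episodes,
\[
D\sum_{t=1}^{T}\sqrt{\frac{Q_t}{t}} \;\le\; D\,\sqrt{\sum_{t\le T}\frac1t}\;\sqrt{\sum_{t=1}^{T}Q_t}, \qquad Q_t \defeq \sum_{s\neq s_L}\sum_{a\neq\pi^\star(s)}q_t(s,a),
\]
and then convert $\sum_t Q_t$ into gap-weighted form through the minimal gap, $\sum_t Q_t\le\frac{1}{\gapmin}\sum_t\sum_{s\neq s_L,\,a\neq\pi^\star(s)}q_t(s,a)\gap(s,a)\le\frac{\Reg_T+C}{\gapmin}$, again by Condition~\eqref{eq:self_bounding_constraint}. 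Together with $\sum_{t\le T}\frac1t=\order(\log T)$ this bounds the $D$-term by $\order\!\rbr{D\sqrt{\frac{\log T}{\gapmin}}}\cdot\sqrt{\Reg_T+C}$, which is once more self-referential in $\Reg_T$.

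Finally I would assemble the pieces. Applying AM-GM once more to the $D$-term with the matching parameter, $D\sqrt{\frac{\log T}{\gapmin}}\,\sqrt{\Reg_T+C}\le\frac{\Reg_T+C}{4z}+\order\!\rbr{\frac{z\,D^2\log T}{\gapmin}}$, and combining with the Tsallis bound collapses everything into the clean self-bounding inequality $\Reg_T\le\frac{\Reg_T+C}{2z}+zW$, where $W=\Theta(U)$: the $\cnt^2\log T\sum_{s\neq s_L,\,a\neq\pi^\star(s)}\frac{1}{\gap(s,a)}$ contribution supplies the $L^2(L^3+|A|)\sum_{s\neq s_L}\sum_{a\neq\pi^\star(s)}\frac{\log T}{\gap(s,a)}$ part of $U$ (since $\cnt^2=\order(L^2(L^3+|A|))$), and the $\frac{D^2\log T}{\gapmin}$ contribution supplies the $\frac{L|S||A|\log T}{\gapmin}$ part (since $D^2=L|S||A|$). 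Rearranging and optimizing over $z$ exactly as in the multi-armed-bandit warm-up of \pref{sec:FTRL} then yields $\Reg_T=\order(W+\sqrt{WC})=\order(U+\sqrt{UC})$, and setting $C=0$ recovers $\Reg_T=\order(U)$ for the stochastic setting. I expect the main obstacle to be precisely the $D$-term: it has no analogue in the bandit argument, and a naive bound (for instance if the $\opt$ contribution did not vanish) would cost an $\order(\sqrt{T})$ term. The resolution --- Cauchy-Schwarz over episodes followed by the $\gapmin$ conversion, which lets it be folded back with a matched AM-GM parameter --- is what keeps the final inequality in the same form as the bandit case and preserves the $\log T$ rate.
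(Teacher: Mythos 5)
Your overall route is the paper's: instantiate \pref{thm:main} with $\pi=\pi^\star$, take the first branch of the $\min$, apply AM--GM with a free parameter $z$ to each of the two terms (converting $\sqrt{q_t(s,a)/t}$ into gap-weighted occupancies plus $\log T/\gap$ terms, and the $D$-term into gap-weighted occupancies plus $\log T/\gapmin$), invoke Condition~\eqref{eq:self_bounding_constraint} to fold the occupancy sums back into $\Reg_T+C$, and optimize $z$. Your Cauchy--Schwarz-over-episodes treatment of the $D$-term is an immaterial variant of the paper's direct term-by-term AM--GM; both land on the same $\frac{D^2\log T}{\gapmin}$ contribution.

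There is, however, one genuine gap: your dismissal of the $\opt(s,a)$ contribution inside the $D$-term. You argue it vanishes because $\opt=q^{\optpi}$ with $\optpi$ deterministic and $\pi^\star=\optpi$; that is fine in the purely stochastic case, but the corollary is stated for \emph{any} losses satisfying Condition~\eqref{eq:self_bounding_constraint}, and in the corruption setting $\pi^\star$ is the optimal policy of the \emph{uncorrupted} MDP, which need not coincide with $\optpi$. Then $\sum_{s}\sum_{a\neq\pi^\star(s)}\opt(s,a)$ can be strictly positive, and summing it over $t$ gives a factor of $T$ that, bounded naively, produces exactly the $\order(\sqrt{T})$ blow-up you yourself warn about — so ``the general corrupted case is analogous'' is not a proof. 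The paper closes this with \pref{lem:opt_implict_bound}: plugging the constant sequence $q_1=\cdots=q_T=\opt$ into Condition~\eqref{eq:self_bounding_constraint} (whose left side is then $0$) yields $T\sum_{s\neq s_L}\sum_{a\neq\pi^\star(s)}\opt(s,a)\gap(s,a)\le C$, so after weighting by $\gapmin\le\gap(s,a)$ the $\opt$ contribution costs only an extra $C/(2z)$ in the self-bounding inequality (hence the $\frac{\Reg_T+2C}{2z}$ in the paper's display rather than your $\frac{\Reg_T+C}{2z}$). With that one observation added, your argument goes through and the constants assemble into $U$ exactly as you describe.
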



See \pref{app:appendix_corollary_stochastic} for the proof, whose idea is similar to the discussions in \pref{sec:FTRL}.
Note that we are getting an extra term in $U$ involving $1/\gapmin$, which in turn comes from the extra term in \pref{eq:main} mentioned earlier.
For the stochastic setting, the best existing logarithmic bound is $\order\big(L^3\sum_{s\neq s_L}\sum_{a \neq \pi^\star(s)}\frac{\log T}{ \gap(s,a) } \big)$ from~\citep{simc2019} (which also has a dependence on $\frac{1}{\gapmin}$ but is ignored here for simplicity).
Our bound has a worse factor $L^2(L^3+|A|)$.
We note that by tuning the parameters $\alpha$ and $\gamma$ differently, one can obtain a better bound in this case.
This set of parameters, however, leads to a sub-optimal adversarial bound.
See \pref{app:appendix_parameter_choices} for details.
Since our goal is to develop one single algorithm that adapts to different environments automatically,
we stick to the same set of parameters in the theorem and corollaries above.

As mentioned, in the corruption setting, our bound $\otil(\sqrt{C})$ smoothly interpolates between the logarithmic regret in the no corruption case and the $\otil(\sqrt{T})$ worst-case regret in the adversarial case.
On the other hand, the bound from~\citep{lykouris2019corruption} is of order $\otil(C^2)$, only meaningful for $C = o(\sqrt{T})$.

\section{Analysis Sketch}
\label{sec:analysis}

Analyzing our algorithm requires several new ideas, which we briefly mention in this section through a few steps, with all details deferred to the appendix.

\paragraph{First step.}
While we define $q(s)$ as $\sum_a q(s,a)$, the entire analysis relies on using an equivalent definition of $q(s)$ based on \pref{eq:occupancy_measure_cond_2}.
The benefit of this alternative definition is that it contains the information of the transition function $P$ and implicitly introduces a layered structure to the regularizer, which is important for constructing the Hessian and its inverse recursively.
We emphasize that, however, this does {\it not} change the algorithm at all, because all occupancy measures in $\Omega$ ensure \pref{eq:occupancy_measure_cond_2} by definition and the \ftrl optimization over $\Omega$ is thus not affected.

\paragraph{Second step.}
Following the standard $\ftrl$ analysis one can obtain a regret bound in terms of $\|\hatl_t\|_{\nabla^{-2}\phi_{H}(q_t')}$ for some $q_t'$ between $q_t$ and $q_{t+1}$.
Moving from $q_t'$ to $q_t$ is exactly the part where the log-barrier $\phi_L$ is important.
Specifically, following the idea of~\citep[Lemma~9]{lee2020closer}, we prove that $q_t$ and $q_{t+1}$ are sufficiently close, and consequently $\Reg_T$ is mainly bounded by two terms:
the penalty term $\sum_{t=1}^{T}(\nicefrac{1}{\eta_t}-\nicefrac{1}{\eta_{t-1}})\rbr{\phi_{H}(\opt) -\phi_{H}(q_t)}$
and the stability term $\sum_{t=1}^{T}\eta_t \|\hatl_t\|^2_{\nabla^{-2}\phi_{H}(q_t)}$ (see \pref{lem:key_lemma_ftrl}).


\paragraph{Third step.}
Bounding the penalty term already requires a significant departure from the analysis for multi-armed bandits.
Specifically,  $\phi_{H}(\opt) -\phi_{H}(q_t)$ can be written as
\begin{equation}\label{eq:penalty_decomposition}
\sum_{s\neq s_L}\sqrt{q_t(s)}\rbr{h_s(\pi_t)-h_s(\optpi)} + \sum_{s \neq s_L}\rbr{\sqrt{q_t(s)}-\sqrt{\opt(s)}}h_s(\optpi)
\end{equation} 
where $h_s(\pi)= \sum_{a\in A}\sqrt{\pi(a|s)}+\alpha\sqrt{1-\pi(a|s)}$ is basically the hybrid regularizer for  multi-armed bandits (at state $s$) mentioned in \pref{sec:FTRL}.
The first term in \pref{eq:penalty_decomposition} can then be bounded as $(1+\alpha) \sum_{s}\sum_{a \neq \pi(s)}\sqrt{q_t(s,a)}$ for {\it any} mapping $\pi: S\rightarrow A$, in a similar way as in the multi-armed bandit analysis.
However, the key difficulty is the second term, which does not appear for multi-armed bandits where there is only one state $s_0$ with $q(s_0)=1$ for all $q$.
This is highly challenging especially because we would like to arrive at a term with a summation over $a \neq \pi(s)$ as in the first term.
Our main idea is to separately bound $\sqrt{q_t(s)}-\sqrt{q^{\pi}(s)}$ and $\sqrt{q^{\pi}(s)}-\sqrt{\opt(s)}$
via a key induction lemma (\pref{lem:induction_state_reach_prob}) that connects them to similar terms in previous layers.
We remark that this term is the source of the extra term $\sqrt{\sum_{s\neq s_L}\sum_{a\neq \pi(s)}(q_t(s,a)+\opt(s,a))/t}$ in our main regret bound \pref{eq:main}.
The complete proof for bounding the penalty term is in \pref{app:appendix_proof_penalty}.

\paragraph{Fourth step.}
Analyzing the stability term is yet another highly challenging part, since it requires working out the Hessian inverse $\nabla^{-2}\phi_{H}(q_t)$.
The high-level idea of our proof is to first write the Hessian in a recursive form based on the layered structure introduced by writing $q(s)$ differently as mentioned in the first step.
Then we apply Woodbury matrix identity to obtain a recursive form of the Hessian inverse.
Finally, we argue that only certain parts of the Hessian inverse matter, and these parts enjoy nice properties allowing us to eventually bound $\E\sbr{\|\hatl_t\|^2_{\nabla^{-2}\phi_{H}(q_t)}}$ by $8e L^2\rbr{\sqrt{L} + \nicefrac{1}{\alpha L} }\sum_{s\neq s_L}\sum_{a\neq \pi(s)}\sqrt{q_t(s,a)}$, 
again, for any mapping $\pi$.
See \pref{app:appendix_proof_stability} for the complete proof.

\section{Conclusion}
\label{sec:conclusion}

In this work, we provide the first algorithm for learning episodic MDPs that automatically adapts to different environments with favorable guarantees.
Our algorithm applies a natural regularizer with a complicated non-diagonal Hessian to the \ftrl framework,
and our analysis for obtaining a self-bounding regret bound requires several novel ideas.
Apart from improving our bound in \pref{col:stochastic_bound},
one key future direction is to remove the known transition assumption, which appears to demand new techniques since it is hard to also control the bias introduced by estimating the transition in terms of the adaptive bound in \pref{eq:main}.

\section*{Broader Impact}
\label{sec:broader_impact}

This work is mostly theoretical, with no negative outcomes.
Researchers working on theoretical aspects of online learning, bandit problems, and reinforcement learning (RL) may benefit from our results.
Although our algorithm deals with the tabular setting and is not directly applicable to common RL applications with a large state and action space, 
it sheds light on how to increase robustness of a learning algorithm while adapting to specific instances,
and serves as an important step towards developing more practical, adaptive, and robust RL algorithms, which in the long run might find its applications in the real world.

%

\begin{ack}
We thank Max Simchowitz for many helpful discussions. 
HL is supported by NSF Awards IIS-1755781 and IIS-1943607, and a Google Faculty Research Award.
\end{ack}

\bibliography{ref}
\bibliographystyle{plainnat}
\newpage
\appendix

\section{Proof of \pref{thm:main} and \pref{col:stochastic_bound}}
\label{app:appendix_theorem_main}

In this section, we provide the proof (outline) of \pref{thm:main}, the proof of \pref{col:stochastic_bound} (\pref{app:appendix_corollary_stochastic}), discussions on parameter tuning (\pref{app:appendix_parameter_choices}), and also some preliminaries on the Hessian of our regularizer which are useful for the rest of the appendix (\pref{app:Hessian}).

We prove the following version of \pref{thm:main} with general value of the parameters $\alpha$ and $\gamma$, which facilitates further discussions on parameter tuning.
Taking $\alpha = 1/\sqrt{|A|}$ and $\gamma=1$ clearly recovers \pref{thm:main}.

\begin{theorem} With $\paralog = 64L$, Algorithm~\ref{alg:main_alg} guarantees: 
	\begin{align*}
	\Reg_T &= \order\rbr{\sum_{t=1}^{T}\min\cbr{\E\sbr{X\sum_{s\neq s_L}\sum_{a\neq \pi(s)}\sqrt{\frac{q_t(s,a)}{t}} + Y\sqrt{\sum_{s\neq s_L}\sum_{a\neq \pi(s)}\frac{q_t(s,a)+\opt(s,a)}{t}}}, \frac{Z}{\sqrt{t}}}} \\
	& \quad +\order\rbr{L|S||A|\log T} 
	\end{align*}
	for any mapping $\pi: S \rightarrow A$, where coefficients $X$, $Y$ and $Z$ are defined as:
	\[
	X = \frac{1+\alpha}{\gamma} + \gamma  L^2\rbr{\sqrt{L}+\frac{1}{\alpha L}}, Y = \frac{\sqrt{L|S|}\rbr{1+\alpha|A|}}{\gamma}, Z = \sqrt{L|S||A|}\rbr{\frac{1+\alpha\sqrt{|A|}}{\gamma} + \gamma}. 
	\]
	\label{thm:main_general}
\end{theorem}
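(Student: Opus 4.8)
The plan is to run the standard \ftrl analysis over the occupancy polytope $\Omega$, reduce $\Reg_T$ to a \emph{penalty} term and a \emph{stability} term, and then bound each using the layered structure of the hybrid regularizer $\phi_H$. First I would apply the generic \ftrl regret decomposition for the time-varying regularizer $\psi_t=\frac{1}{\eta_t}\phi_H+\phi_L$. The log-barrier $\phi_L$ is used purely for stability: following the argument of \citet{lee2020closer}, one shows that consecutive iterates $q_t$ and $q_{t+1}$ are multiplicatively close, so the intermediate point $q_t'$ arising from a Taylor expansion of the Bregman divergence can be replaced by $q_t$ up to constants. This is the content of \pref{lem:key_lemma_ftrl}, which (modulo the additive $\order(L|S||A|\log T)$ contributed by $\phi_L$ itself) bounds $\Reg_T$ by the penalty term $\sum_t(\frac{1}{\eta_t}-\frac{1}{\eta_{t-1}})(\phi_H(\opt)-\phi_H(q_t))$ plus the stability term $\sum_t\eta_t\,\|\hatl_t\|^2_{\nabla^{-2}\phi_H(q_t)}$. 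Crucially, before touching either term I would rewrite $q(s)$ via the flow constraint \eqref{eq:occupancy_measure_cond_2} rather than as $\sum_a q(s,a)$; this is algorithmically inert but injects the transition $P$ and an explicit across-layer dependence into $\phi_H$, which is exactly what makes its Hessian amenable to a recursive treatment.

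For the penalty term I would use the decomposition \eqref{eq:penalty_decomposition}. Its first piece, $\sum_s\sqrt{q_t(s)}(h_s(\pi_t)-h_s(\optpi))$ where $h_s$ is the per-state hybrid entropy, is a copy of the multi-armed-bandit argument and collapses to $(1+\alpha)\sum_s\sum_{a\neq\pi(s)}\sqrt{q_t(s,a)}$ for any mapping $\pi$, supplying the $\frac{1+\alpha}{\gamma}$ part of $X$. The genuinely new piece is $\sum_s(\sqrt{q_t(s)}-\sqrt{\opt(s)})h_s(\optpi)$, which has no bandit analogue since there $q(s_0)\equiv 1$. I would control it by inserting the reference occupancy $q^\pi(s)$ and bounding $\sqrt{q_t(s)}-\sqrt{q^\pi(s)}$ and $\sqrt{q^\pi(s)}-\sqrt{\opt(s)}$ separately through a layer-by-layer induction (\pref{lem:induction_state_reach_prob}) that relates each layer's discrepancy to the preceding one via $P$; this is the origin of the $Y$-term.

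The main obstacle is the stability term, which requires an explicit handle on the inverse Hessian $\nabla^{-2}\phi_H(q_t)$ --- and, unlike essentially every prior \ftrl regularizer, $\phi_H$ is non-decomposable, so this Hessian is non-diagonal. My approach would be to write the Hessian in block form across the layers using the reparametrization above, then apply the Woodbury matrix identity to obtain a \emph{recursive} expression for its inverse. The key is that, because $\hatl_t$ is unbiased and supported on the single realized trajectory, only a few blocks of the inverse survive in expectation; exploiting the $\sqrt{\cdot}$ structure of $\phi_H$ then lets me bound $\E\,\|\hatl_t\|^2_{\nabla^{-2}\phi_H(q_t)}$ by $8eL^2(\sqrt{L}+\frac{1}{\alpha L})\sum_s\sum_{a\neq\pi(s)}\sqrt{q_t(s,a)}$, which after multiplication by $\eta_t=\gamma/\sqrt t$ supplies the $\gamma L^2(\sqrt L+\frac{1}{\alpha L})$ part of $X$.

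Finally I would assemble the pieces. Using $\frac{1}{\eta_t}-\frac{1}{\eta_{t-1}}=\order(1/(\gamma\sqrt t))$ together with the penalty and stability bounds yields the first branch of the per-round minimum with the stated coefficients $X$ and $Y$, while a separate worst-case estimate obtained by Cauchy--Schwarz ($\sum_a\sqrt{q_t(s,a)}\le\sqrt{|A|\,q_t(s)}$ summed across states and layers) gives the $Z/\sqrt t$ branch. Taking the smaller of the two in each round and adding the $\order(L|S||A|\log T)$ log-barrier overhead completes the argument.
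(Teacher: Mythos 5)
Your plan follows the paper's proof essentially step for step: the same penalty/stability decomposition stabilized by the log-barrier (\pref{lem:key_lemma_ftrl}), the same reparametrization of $q(s)$ via the flow constraint, the same two-piece penalty bound driven by the reachability-probability induction lemma, and the same Woodbury-based recursive inversion of the non-diagonal Hessian for the stability term, with the coefficients $X$ and $Y$ arising exactly as you describe. The one place your plan as written would not reproduce the stated constant is the $Z/\sqrt{t}$ branch: applying Cauchy--Schwarz to the refined stability bound $\gamma L^2(\sqrt{L}+\frac{1}{\alpha L})\sum_{s,a}\sqrt{q_t(s,a)/t}$ yields a factor $\gamma L^2(\sqrt{L}+\frac{1}{\alpha L})\sqrt{L|S||A|}$ rather than the claimed $\gamma\sqrt{L|S||A|}$, so the paper instead derives a separate worst-case stability estimate $4\sqrt{L|S||A|}$ by lower-bounding $\nabla^{2}\phi_{H}$ by the Hessian of the diagonal Tsallis part $-\sum_{s,a}\sqrt{q(s,a)}$ (and likewise caps the penalty's second piece by the trivial bound $\sum_{s\neq s_L}\sqrt{q_t(s)}\le\sqrt{|S|L}$).
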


The proof of this theorem relies on the following three important lemmas, which respectively correspond to the second, third, and fourth step of the analysis sketch in \pref{sec:analysis} and are proven in \pref{app:appendix_proof_ftrl}, \pref{app:appendix_proof_penalty}, and \pref{app:appendix_proof_stability}.
The first one 
decomposes the regret into two terms (penalty and stability), with an additional small term of order $\order(L|S||A|\log T)$. 
The second one bounds the penalty term, while the third one bounds the stability term.

\begin{lemma}[Regret decomposition]
\label{lem:key_lemma_ftrl}
With $\paralog = 64L$, \pref{alg:main_alg} ensures:
\[
\begin{split}
\Reg_T & \leq \underbrace{\sum_{t=1}^{T}\rbr{\frac{1}{\eta_t} - \frac{1}{\eta_{t-1}}}\E\sbr{\phi_{H}(\opt) -\phi_{H}(q_t)}}_{penalty} + \underbrace{8\sum_{t=1}^{T}\eta_t\E\sbr{\norm{\hatl_t}^2_{\nabla^{-2}\phi_{H}(q_t)}}}_{stability} \\ & \; \;+ \order\rbr{L|S||A|\log T}.
\end{split}
\]
\end{lemma}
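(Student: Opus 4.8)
The plan is to follow the standard analysis of \ftrl with a time-varying regularizer and then exploit the log-barrier $\phi_L$ to massage the bound into the stated form. First I would reduce the true regret to the estimated regret: since $q_t$ is determined by the history $\mathcal{F}_{t-1}$ and $\hatl_t$ is conditionally unbiased, i.e. $\E[\hatl_t(s,a)\mid\mathcal{F}_{t-1}]=\ell_t(s,a)$ for every pair (the probability of visiting $(s,a)$ under $\pi_t$ is exactly $q_t(s,a)$, and the log-barrier keeps $q_t(s,a)>0$), we have $\Reg_T=\E[\sum_t\inner{q_t-\opt,\hatl_t}]$. It therefore suffices to bound $\sum_t\inner{q_t-\opt,\hatl_t}$ pathwise and take expectations.

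Next I would invoke the generic \ftrl regret bound for the iterates $q_t=\argmin_{q\in\Omega}\{\inner{q,\hatL_{t-1}}+\psi_t(q)\}$ with $\psi_t=\tfrac{1}{\eta_t}\phi_H+\phi_L$, which yields a penalty/stability split
\[
\sum_{t=1}^{T}\inner{q_t-\opt,\hatl_t}\le\text{(penalty)}+\sum_{t=1}^{T}\rbr{\inner{q_t-q_{t+1},\hatl_t}-\mathcal{B}_{\psi_t}(q_{t+1},q_t)},
\]
where $\mathcal{B}_{\psi_t}$ is the Bregman divergence of $\psi_t$. Because only the $\phi_H$ component is scaled by the increasing sequence $1/\eta_t$ while $\phi_L$ carries a fixed weight, the $\phi_L$ contributions to the cross-round penalty cancel, and bookkeeping the learning-rate increments produces exactly the stated penalty $\sum_t(\tfrac{1}{\eta_t}-\tfrac{1}{\eta_{t-1}})(\phi_H(\opt)-\phi_H(q_t))$. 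The only residue is the fixed-weight barrier: since the \ftrl problem is solved over the $\order(1/T)$-interior of $\Omega$ and we compare against a comparator pushed into that interior, the range of $\phi_L$ there is $\order(\paralog|S||A|\log T)=\order(L|S||A|\log T)$, while pushing the comparator inward perturbs the linear loss only negligibly; this accounts for the additive term.

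For the stability sum I would apply Taylor's theorem to $\mathcal{B}_{\psi_t}$, writing $\inner{q_t-q_{t+1},\hatl_t}-\mathcal{B}_{\psi_t}(q_{t+1},q_t)\le\tfrac12\norm{\hatl_t}^2_{\nabla^{-2}\psi_t(q_t')}$ for some $q_t'$ on the segment between $q_t$ and $q_{t+1}$. Since $\nabla^2\psi_t=\tfrac{1}{\eta_t}\nabla^2\phi_H+\nabla^2\phi_L\succeq\tfrac{1}{\eta_t}\nabla^2\phi_H$ on the tangent space of $\Omega$, the log-barrier Hessian can simply be dropped to give $\nabla^{-2}\psi_t(q_t')\preceq\eta_t\,\nabla^{-2}\phi_H(q_t')$, which pulls out the factor $\eta_t$ and reduces everything to the Hessian of $\phi_H$ alone at the intermediate point $q_t'$.

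The crux — the step the log-barrier is designed for — is to replace $q_t'$ by $q_t$. Following the idea of \citet[Lemma~9]{lee2020closer}, I would show that the weight $\paralog=64L$ makes $\phi_L$ strongly convex enough, relative to the size of the increment $\eta_t\hatl_t$, to force multiplicative stability of consecutive iterates, $q_{t+1}(s,a)\in[\tfrac12 q_t(s,a),\,2q_t(s,a)]$ coordinate-wise, hence the same for $q_t'$. I would then argue that, because every term of $\phi_H$ is a square root and $q(s)-q(s,a)$ is a nonnegative combination of coordinates each perturbed within $[\tfrac12,2]$, the Hessian $\nabla^2\phi_H$ is homogeneous of degree $-\tfrac32$ and hence multiplicatively stable, so $\nabla^{-2}\phi_H(q_t')\preceq c\,\nabla^{-2}\phi_H(q_t)$ for an absolute constant; collecting the $\tfrac12$ from Taylor's theorem and this comparison constant yields the coefficient $8$. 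This multiplicative-stability step is the main obstacle, since the argument of \citet{lee2020closer} is stated for decomposable (diagonal-Hessian) regularizers over the simplex, whereas $\phi_H$ couples actions within a state through $q(s)-q(s,a)$ and, via the flow-based reading of $q(s)$ used in the first step of the sketch, couples states across layers; verifying that the barrier still dominates enough to pin each coordinate, and that the resulting non-diagonal comparison holds in the Loewner order, is where the care is required. Taking expectations over all bounds then gives the claim.
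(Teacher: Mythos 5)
Your plan follows the same skeleton as the paper's proof: reduce to the estimated losses by unbiasedness, split into penalty and stability via the standard \ftrl telescoping, use the log-barrier to force multiplicative closeness of the relevant iterates, transfer the inverse-Hessian norm from an intermediate point to $q_t$ via a Loewner comparison that survives the non-diagonal coupling (because $q(s)-q(s,a)=\sum_{b\neq a}q(s,b)$ inherits the coordinatewise $[\tfrac12,2]$ perturbation, cf.\ \pref{lem:close_hessian}), and absorb $\phi_L$ by comparing against the shifted comparator $(1-\tfrac1T)\opt+\tfrac1T q_1$, which costs $\order(L)$ in the linear term and $\order(\paralog|S||A|\log T)$ in the barrier. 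The one place you genuinely deviate is the choice of auxiliary point in the stability analysis: you work with $q_{t+1}=\argmin_{q\in\Omega}\inner{q,\hatL_t}+\psi_{t+1}(q)$, whereas the paper introduces $\tilde q_t=\argmin_{q\in\Omega}\inner{q,\hatL_t}+\psi_t(q)$, i.e.\ it updates the loss but freezes the regularizer. This is not cosmetic: the closeness argument (\pref{lem:smooth_update}) bounds $\norm{\tilde q_t-q_t}_{\nabla^2\psi_t(q_t)}$ by exploiting that the two objectives differ only by the linear term $\inner{\cdot,\hatl_t}$, so first-order optimality of $q_t$ leaves only $\norm{\hatl_t}_{\nabla^{-2}\psi_t(q_t)}\le\sqrt{L/\paralog}=\tfrac18$ to control. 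With $q_{t+1}$ the two objectives also differ by $\rbr{\tfrac{1}{\eta_{t+1}}-\tfrac{1}{\eta_t}}\phi_H$, whose gradient blows up like $q(s,a)^{-1/2}$ near the boundary of $\Omega$, so before the same sandwich closes you would need an additional estimate showing that the dual norm of this extra gradient term is also $O(1)$; this is doable but is exactly the bookkeeping the paper's $\tilde q_t$ device sidesteps, and it is the one step your writeup asserts rather than proves. Either route yields the stated bound, with the factor $8=2\times 4$ coming from the optimality sandwich combined with the factor-$4$ Hessian comparison constant.
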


\begin{lemma}[Penalty] \label{lem:key_lemma_bound_penalty} 
The hybrid regularizer $\phi_{H}$ defined in \pref{eq:hybrid} ensures that $\phi_{H}(\opt) -\phi_{H}(q_t)$ is bounded by 
\[
\begin{split}
\rbr{1+\alpha}\sum_{s\neq s_L}\sum_{a\neq \pi(a)}\sqrt{q_t(s,a)} + \rbr{1 + \alpha |A|}\sqrt{|S|L}\min\cbr{1,2\sqrt{\sum_{s\neq s_L}\sum_{a\neq \pi(s)}q_t(s,a)+ \opt(s,a)} },
\end{split}
\]
for all $t=1,\ldots,T$, where $\pi$ can be any mapping from $S$ to $A$. 

\end{lemma}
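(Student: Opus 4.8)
The plan is to start from the decomposition of $\phi_H(\opt)-\phi_H(q_t)$ already given in \pref{eq:penalty_decomposition}, namely
\[
\sum_{s\neq s_L}\sqrt{q_t(s)}\rbr{h_s(\pi_t)-h_s(\optpi)} + \sum_{s\neq s_L}\rbr{\sqrt{q_t(s)}-\sqrt{\opt(s)}}h_s(\optpi),
\]
and bound the two sums separately. For the first sum, I would treat each state $s$ as its own multi-armed-bandit instance with the induced policy $\pi_t(\cdot|s)$. Since $h_s(\pi)=\sum_a(\sqrt{\pi(a|s)}+\alpha\sqrt{1-\pi(a|s)})$ is exactly the hybrid Tsallis regularizer on the simplex, I expect the same elementary estimate used in the bandit analysis of~\citep{zimmert2019beating} to give $h_s(\pi_t)-h_s(\optpi)\leq (1+\alpha)\sum_{a\neq\pi(s)}\sqrt{\pi_t(a|s)}$ for any target action $\pi(s)$ (the point being that the reference term contributes nothing at the single action $\optpi(s)$, and the remaining actions are controlled by $\sqrt{\pi_t(a|s)}$). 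Multiplying by $\sqrt{q_t(s)}$ and using $\sqrt{q_t(s)}\sqrt{\pi_t(a|s)}=\sqrt{q_t(s)\pi_t(a|s)}=\sqrt{q_t(s,a)}$ collapses the first sum to the target term $(1+\alpha)\sum_{s\neq s_L}\sum_{a\neq\pi(s)}\sqrt{q_t(s,a)}$.

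The main obstacle is the second sum, $\sum_{s\neq s_L}(\sqrt{q_t(s)}-\sqrt{\opt(s)})h_s(\optpi)$, which has no analogue in the bandit case because there $q(s_0)=1$ for every occupancy measure and the term vanishes. I would first bound $h_s(\optpi)$ crudely by $1+\alpha|A|$ (the optimal deterministic policy puts mass one on a single action, so $\sum_a\sqrt{\optpi(a|s)}=1$ while the complementary part contributes at most $\alpha|A|$), pulling out the factor $(1+\alpha|A|)$ that appears in the statement. The remaining task is to control $\sum_{s\neq s_L}\abs{\sqrt{q_t(s)}-\sqrt{\opt(s)}}$ by $\sqrt{|S|L}\min\{1,2\sqrt{\sum_{s,a\neq\pi(s)}(q_t(s,a)+\opt(s,a))}\}$. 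Here I would invoke the induction lemma (\pref{lem:induction_state_reach_prob}) flagged in the analysis sketch: rewriting $q(s)$ via the flow constraint~\pref{eq:occupancy_measure_cond_2} expresses $\sqrt{q_t(s)}-\sqrt{\opt(s)}$ in terms of the corresponding differences at the previous layer plus a local contribution governed by how $\pi_t$ and $\pi$ disagree, and unrolling the recursion across the $L$ layers accumulates exactly the $\sum_{a\neq\pi(s)}$ discrepancies. The $\sqrt{|S|L}$ factor then arises from a Cauchy–Schwarz step over the (at most $|S|$) states across $L$ layers, and the $\min\{1,\cdot\}$ form from the trivial bound $\sum_s\abs{\sqrt{q_t(s)}-\sqrt{\opt(s)}}\leq\sqrt{|S|}\cdot\sqrt{\sum_s(\ldots)}$ together with the fact that each $\sqrt{q_t(s)},\sqrt{\opt(s)}\in[0,1]$.

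Combining the two pieces and summing the contributions yields precisely the stated bound, with the first sum producing the $(1+\alpha)\sum_{s\neq s_L}\sum_{a\neq\pi(s)}\sqrt{q_t(s,a)}$ term and the second producing the $(1+\alpha|A|)\sqrt{|S|L}\,\min\{1,2\sqrt{\,\cdots\,}\}$ term. I expect the delicate part to be making the induction in \pref{lem:induction_state_reach_prob} tight enough that only the off-target actions $a\neq\pi(s)$ enter, since a naive triangle-inequality unrolling would introduce spurious dependence on the on-policy actions as well; the layered flow structure is what lets these cancel.
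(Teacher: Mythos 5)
Your overall route is the paper's route: same decomposition \pref{eq:penalty_decomposition}, same per-state bandit estimate $h_s(\pi_t)-h_s(\optpi)\leq(1+\alpha)\sum_{a\neq\pi(s)}\sqrt{\pi_t(a|s)}$ for the first sum, and the same induction lemma (\pref{lem:induction_state_reach_prob}) as the engine for the second sum. The first half of your argument matches \pref{lem:penalty_term_1} and is fine.

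There is, however, a genuine gap in your treatment of the second sum. You propose to run the induction directly on $\sqrt{q_t(s)}-\sqrt{\opt(s)}$, with ``a local contribution governed by how $\pi_t$ and $\pi$ disagree.'' But in a direct comparison of $q_t$ with $\opt$ through the flow constraint \pref{eq:occupancy_measure_cond_2}, the arbitrary mapping $\pi$ never enters: the local discrepancy is governed by where $\pi_t$ and $\optpi$ disagree, so the recursion can only accumulate terms over $a\neq\optpi(s)$, not over $a\neq\pi(s)$ for an arbitrary $\pi$ (which is what the lemma claims and what \pref{col:stochastic_bound} needs, since in the corrupted setting $\pi^\star\neq\optpi$ in general). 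It also does not explain where the $\opt(s,a)$ half of the sum $\sum_{a\neq\pi(s)}\rbr{q_t(s,a)+\opt(s,a)}$ comes from. The missing idea is the intermediate comparator: the paper writes $\sqrt{q_t(s)}-\sqrt{\opt(s)}=\rbr{\sqrt{q_t(s)}-\sqrt{q^\pi(s)}}+\rbr{\sqrt{q^\pi(s)}-\sqrt{\opt(s)}}$ with $q^\pi$ the occupancy measure of the mapping $\pi$ itself, and applies the induction lemma twice with different choices of $f$ and $g$: once comparing the stochastic $\pi_t$ against the deterministic $\pi$ (accumulating $q_t(s,a)$ over $a\neq\pi(s)$; \pref{lem:sqrt_diff}), and once comparing the two deterministic policies $\pi$ and $\optpi$ (accumulating $\opt(s,a)$ over $a\neq\pi(s)$; \pref{lem:sqrt_diff_2}, whose case analysis is genuinely different from the first). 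Your sketch would need this splitting to go through. Two smaller points: the trivial alternative bound in the $\min$ comes from $\sum_{s\neq s_L}\sqrt{q_t(s)}\leq\sqrt{|S|\sum_s q_t(s)}=\sqrt{|S|L}$, not from $\sqrt{q_t(s)}\in[0,1]$ (which only gives $|S|$); and pulling $h_s(\optpi)$ out of a signed sum is legitimate here only because $h_s(\optpi)=1+\alpha(|A|-1)$ is the same constant for every $s$, or because the subsequent bounds control the positive part $\sqrt{\Ind{q_t(s)\geq\opt(s)}\rbr{q_t(s)-\opt(s)}}$ rather than the signed difference.
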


\begin{lemma}[Stability]\label{lem:key_lemma_bound_stability}
 \pref{alg:main_alg} guarantees that $\E\sbr{\|\hatl_t\|^2_{\nabla^{-2}\phi_{H}(q_t)}}$ is bounded by 
\[
\min\cbr{4\sqrt{L|S||A|}  , \E\sbr{8eL^2\rbr{\sqrt{L} + \frac{1}{\alpha \cdot L} } \sum_{s\neq s_L}\sum_{a\neq \pi(a)}\sqrt{q_t(s,a)}}},
\]
for all $t=1,\ldots,T$, where $\pi$ can be any mapping from $S$ to $A$. 
\end{lemma}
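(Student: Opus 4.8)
The plan is to prove the two arguments of the $\min$ separately. For the crude bound $4\sqrt{L|S||A|}$, which is all that is needed in the adversarial regime, I would split $\phi_H = \phi_{\mathrm{Ts}} + \alpha\,\phi_{\mathrm{comp}}$ into the plain $\tfrac12$-Tsallis part $\phi_{\mathrm{Ts}}(q)=-\sum_{s,a}\sqrt{q(s,a)}$, whose Hessian $\Lambda$ is diagonal with entries $\tfrac14 q(s,a)^{-3/2}$, and the complement part $\phi_{\mathrm{comp}}(q)=-\sum_{s,a}\sqrt{q(s)-q(s,a)}$. Since $q(s)-q(s,a)$ is linear in $q$ and $-\sqrt{\cdot}$ is convex, $\phi_{\mathrm{comp}}$ is convex, so $\nabla^2\phi_H\succeq\Lambda$ and hence $\nabla^{-2}\phi_H\preceq\Lambda^{-1}=\mathrm{diag}(4q(s,a)^{3/2})$. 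Because $\Lambda^{-1}$ is diagonal, $\|\hatl_t\|^2_{\Lambda^{-1}}=\sum_{s,a}4q_t(s,a)^{3/2}\hatl_t(s,a)^2$, and taking expectation with $\E[\hatl_t(s,a)^2]=\ell_t(s,a)^2/q_t(s,a)\le 1/q_t(s,a)$ gives $4\sum_{s,a}\sqrt{q_t(s,a)}$, which Cauchy-Schwarz bounds by $4\sqrt{L|S||A|}$ using $\sum_{s,a}q_t(s,a)=L$.

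The self-bounding bound is the technically demanding half and follows the fourth step of the sketch. The diagonal domination above is too lossy here because it retains a sum over all actions rather than only $a\neq\pi(s)$; the role of the complement regularizer is precisely that, as in multi-armed bandits, it shrinks the inverse Hessian along the high-probability (``best'') action, which is what permits dropping $\pi(s)$. To exploit this I would first rewrite $q(s)$ through \pref{eq:occupancy_measure_cond_2} so that $q(s)$ becomes a linear function of the previous-layer occupancies. This endows $\nabla^2\phi_H$ with an explicit block structure in which each layer couples only to its immediate predecessor, and I would record it as a recursion expressing the layer-$k$ block in terms of the layer-$(k-1)$ block plus a local diagonal piece.

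I would then invert this block-recursive Hessian by applying the Woodbury matrix identity one layer at a time, obtaining a matching recursion for $\nabla^{-2}\phi_H$ in which the inverse at layer $k$ is a local term corrected by a pullback from layer $k-1$. The decisive simplification --- and the reason the ``highly complicated'' full inverse remains tractable --- is that $\hatl_t$ is supported on a single visited state-action pair per layer, so only a handful of blocks of the inverse ever contract against it. I would argue that these surviving blocks reduce to per-state quantities resembling the multi-armed-bandit inverse Hessian at state $s$, scaled by $q_t(s)$, with the complement term contributing the $\tfrac1\alpha$ factor along the best-action direction and the log-barrier-enforced stability (already invoked in \pref{lem:key_lemma_ftrl}) keeping the cross-layer corrections under control.

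Finally I would take the expectation over the single random trajectory, using that $\E[\hatl_t(s,a)\hatl_t(s',a')]$ vanishes for distinct pairs in the same layer, and collect the per-state contributions. Summing the surviving blocks over the $L$ layers is what accumulates the $L^2(\sqrt{L}+\tfrac1{\alpha L})$ prefactor, and invoking the per-state hybrid bound (the analogue of \pref{eq:mab_main_result}) replaces $\sum_a$ by $\sum_{a\neq\pi(s)}$ for an arbitrary $\pi$. The main obstacle throughout is the Woodbury recursion itself: expressing the inverse of the non-diagonal, cross-layer Hessian explicitly enough to pinpoint which blocks matter and to bound them without sacrificing the action exclusion, while faithfully tracking the $L$- and $\alpha$-dependence that compounds along the recursion.
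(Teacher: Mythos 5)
Your first half (the $4\sqrt{L|S||A|}$ bound) is exactly the paper's argument: drop the convex complement part, reduce to the diagonal Tsallis Hessian, take expectations, and apply Cauchy--Schwarz. Your second half follows the same route as the paper at the level of strategy (rewrite $q(s)$ via \pref{eq:occupancy_measure_cond_2}, build the Hessian layer by layer, invert with Woodbury), but it has two concrete gaps. First, your plan for collapsing the quadratic form is not sound as stated: it is true that $\hatl_t(s,a)\hatl_t(s',a')=0$ for distinct pairs \emph{within} a layer, but the problematic off-diagonal entries of $H^{-1}$ couple \emph{different} layers, and there the product $\hatl_t(s,a)\hatl_t(s',a')$ does not vanish --- both pairs lie on the visited trajectory with positive probability. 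The paper never relies on vanishing cross terms; instead it uses a deterministic diagonal-domination inequality for PSD matrices, $w^\top M w \le (\sum_j w(j))\sum_i M(i,i)w(i)$ applied to $M((s,a),(s',a'))=H^{-1}((s,a),(s',a'))/(q_t(s,a)q_t(s',a'))$ with $w(s,a)=\Ind{s,a}\ell_t(s,a)$, paying a factor $L$ via $\sum_{s,a}\Ind{s,a}\ell_t(s,a)\le L$ and reducing everything to the diagonal of $H^{-1}$, which is then controlled by the diagonal of the recursively defined blocks $N_k$. Without this (or an equivalent device) your contraction step does not go through.

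Second, the action-exclusion step cannot be obtained by ``invoking the per-state hybrid bound'' as a black box, and the log-barrier plays no role whatsoever in this lemma (it only enters \pref{lem:key_lemma_ftrl} to keep $q_t$ and $\tilde q_t$ close). What actually removes $a=\pi(s)$ is a two-case analysis on the diagonal entries $R(s,a)=N_{k(s)}((s,a),(s,a))$, using the two operator inequalities $N_k\preceq D_k^{-1}$ and $N_k\preceq C_k^{-1}+P_k^\top N_{k-1}P_k$: if $(q_t(s)-q_t(s,\pi(s)))/q_t(s,\pi(s))\le 1/L$ the complement term gives $\frac{4}{\alpha L}\sum_{a\ne\pi(s)}\sqrt{q_t(s,a)}$, otherwise $q_t(s,\pi(s))\le L(q_t(s)-q_t(s,\pi(s)))$ lets the Tsallis term give $4\sqrt{L}\sum_{a\ne\pi(s)}\sqrt{q_t(s,a)}$; the cross-layer pullback $P_k^\top N_{k-1}P_k$ is then propagated with a multiplicative factor $(1+1/L)$ per layer, which accumulates to the constant $e$. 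These two devices (the $1/L$ threshold and the $(1+1/L)^L\le e$ recursion) are where the stated prefactor $8eL^2(\sqrt{L}+\frac{1}{\alpha L})$ comes from, and your sketch does not supply a substitute for either.
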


\begin{proof}[Proof of \pref{thm:main_general}]	
		For notational convenience, we denote 
		\[
		J_{1}(t) = \sqrt{\frac{1}{t} },  J_2(t) = \E\sbr{\sum_{s\neq s_L}\sum_{a\neq \pi(a)}\sqrt{\frac{q_t(s,a)}{t}}},  J_3(t) = \E\sbr{\sqrt{\sum_{s\neq s_L}\sum_{a\neq \pi(s)}\frac{q_t(s,a)+ \opt(s,a)}{t} }}.
		\]
	
By \pref{lem:key_lemma_bound_penalty} and the fact $\nicefrac{1}{\eta_t} - \nicefrac{1}{\eta_{t-1}} = (\sqrt{t}-\sqrt{t-1})/\gamma = \frac{1}{\gamma(\sqrt{t}+\sqrt{t-1})} \leq \nicefrac{1}{\rbr{\gamma\sqrt{t}}}$, the penalty term can be bounded by 
\[
\order\rbr{\sum_{t=1}^{T}\E\sbr{C_1\sum_{s\neq s_L}\sum_{a\neq \pi(a)}\sqrt{\frac{q_t(s,a)}{t}}} + C_2 \min\cbr{\frac{1}{\sqrt{t}},\E\sbr{ \sqrt{\sum_{s\neq s_L}\sum_{a\neq \pi(s)}\frac{q_t(s,a)+ \opt(s,a)}{t} } } }},
\]
that is, $\order\rbr{\sum_{t=1}^{T}C_1 J_2(t) + C_2 \min\cbr{ J_1(t),  J_3(t) }}$,
where $C_1 = \frac{1+\alpha}{\gamma}$ and $C_2 = \frac{(1+\alpha|A|)\sqrt{|S|L}}{\gamma}$. 
	
On the other hand, by \pref{lem:key_lemma_bound_stability}, the stability term is bounded by 
\[
\order\rbr{\sum_{t=1}^{T}\min\cbr{\frac{C_3}{\sqrt{t}},\E\sbr{ C_4\sum_{s\neq s_L}\sum_{a\neq \pi(a)}\sqrt{\frac{q_t(s,a)}{t}}}} },
\]
that is, $\order\rbr{\sum_{t=1}^{T}\min\cbr{ C_3 J_1(t),  C_4 J_2(t) }}$,
where $C_3=\gamma\sqrt{L|S||A|}$ and $C_4 =  \gamma L^2\rbr{\sqrt{L} + \nicefrac{1}{\alpha L}}$.

Finally, we plug these bounds into \pref{lem:key_lemma_ftrl} and show that 
\begin{align*}
\Reg_T & = \order\rbr{\sum_{t=1}^{T} C_1 J_2(t) +  C_2 \min\cbr{ J_1(t),  J_3(t) }+  \min\cbr{ C_3 J_1(t),  C_4 J_2(t) }}   \\ 
& \; \; + \order\rbr{L|S||A|\log T}.
\end{align*}

Noticing that $J_2(t) = \sum_{s\neq s_L}\sum_{a\neq \pi(a)}\sqrt{\frac{q_t(s,a)}{t}} \leq \sqrt{\frac{L|S||A|}{t}} = \sqrt{L|S||A|} J_1(t) $ by Cauchy-Schwarz inequality, we further have 
\begin{align*}
\Reg_T & = \order\rbr{\sum_{t=1}^{T} C_1 J_2(t) +  C_2 \min\cbr{ J_1(t),  J_3(t) }+  \min\cbr{ C_3 J_1(t),  C_4 J_2(t) }}   \\ 
& \; \; + \order\rbr{L|S||A|\log T} \\ 
& = \order\rbr{\sum_{t=1}^{T} \min\cbr{ C_1\sqrt{L|S||A|} J_1(t),  C_1 J_2(t) } +  \min\cbr{ \rbr{C_2 + C_3} J_1(t),  C_2 J_3(t) +  C_4 J_2(t)} }  \\ 
& \; \; + \order\rbr{L|S||A|\log T} \\ 
& = \order\rbr{\sum_{t=1}^{T}\min\cbr{ \rbr{C_1\sqrt{L|S||A|} + C_2 + C_3} J_1(t),  C_2 J_3(t) +  \rbr{ C_1 + C_4 } J_2(t)} }  \\ 
& \; \; + \order\rbr{L|S||A|\log T}  .
\end{align*}

Therefore, we prove the regret bound stated in Theorem~\ref{thm:main_general} with $X=C_1+C_4$, $Y=C_2$, and $Z=C_1\sqrt{L|S||A|}+C_2+C_3 = \order\rbr{\sqrt{L|S||A|}\rbr{\frac{1+\alpha\sqrt{|A|}}{\gamma} + \gamma}}$. 
%
In particular, setting $\gamma=1$ and $\alpha = \nicefrac{1}{\sqrt{|A|}}$ exactly leads to \pref{thm:main}. 
\end{proof}


\subsection{Proof of \pref{col:stochastic_bound}}
\label{app:appendix_corollary_stochastic}
The proof mostly follows the discussions in \pref{sec:FTRL}, except that we need to deal with the extra term involving $\opt$. 
To do that, we first introduce the following important implication of  Condition~\eqref{eq:self_bounding_constraint}.
\begin{lemma} \label{lem:opt_implict_bound} Suppose Condition~\eqref{eq:self_bounding_constraint} holds. Then the optimal occupancy measure $\opt$ ensures 
	\[
	T\sum_{s\neq s_L}\sum_{a\neq \pi^\star(s)}\opt(s,a)\Delta(s,a) \leq C.
	\]		
\end{lemma}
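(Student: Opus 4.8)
The plan is to apply Condition~\eqref{eq:self_bounding_constraint} to one specific, cleverly chosen sequence of occupancy measures, namely the constant sequence $q_1 = \cdots = q_T = \opt$. Since the condition holds for \emph{all} sequences $q_1,\ldots,q_T$, it certainly holds for this one, and this is exactly the degree of freedom I would exploit. The key observation is that when $q_t = \opt$ for every $t$, the left-hand side $\Reg_T = \E\sbr{\sum_{t=1}^T \inner{q_t - \opt, \ell_t}}$ becomes $\E\sbr{\sum_{t=1}^T \inner{\opt - \opt, \ell_t}} = 0$, since $\inner{\opt-\opt,\ell_t}=0$ identically. Substituting this into the inequality collapses the whole bound dramatically.

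Carrying this out, the right-hand side of \eqref{eq:self_bounding_constraint} with $q_t = \opt$ reads $\E\sbr{\sum_{t=1}^{T}\sum_{s\neq s_L}\sum_{a\neq \pi^\star(s)}\opt(s,a)\gap(s,a)} - C$. Because $\opt$ is a fixed (deterministic) occupancy measure rather than a random quantity produced by the algorithm, the expectation is vacuous and the double-inner sum is simply $T$ copies of the same quantity; thus the right-hand side equals $T\sum_{s\neq s_L}\sum_{a\neq \pi^\star(s)}\opt(s,a)\gap(s,a) - C$. The condition then states $0 \geq T\sum_{s\neq s_L}\sum_{a\neq \pi^\star(s)}\opt(s,a)\gap(s,a) - C$, and rearranging gives precisely the claimed bound $T\sum_{s\neq s_L}\sum_{a\neq \pi^\star(s)}\opt(s,a)\gap(s,a) \leq C$.

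There is essentially no hard obstacle here; the subtlety to be careful about is purely conceptual rather than computational. One must verify that Condition~\eqref{eq:self_bounding_constraint} is genuinely a statement quantified over all admissible sequences $q_1,\ldots,q_T \in \Omega$ (which the excerpt asserts: ``holds for all sequences of $q_1,\ldots,q_T$''), so that instantiating it at the non-algorithmic choice $q_t \equiv \opt$ is legitimate. The only minor point worth noting is that $\opt \in \Omega$, which holds because $\opt = q^{\optpi}$ is itself a valid occupancy measure, so the constant sequence is admissible. Once this is granted, the proof is a one-line substitution followed by trivial rearrangement.

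I expect the role of this lemma in the broader argument of \pref{col:stochastic_bound} to be the genuinely useful part: it converts the otherwise-awkward extra term $D\sqrt{\sum_{s,a}\opt(s,a)/t}$ appearing in \eqref{eq:main} into something controllable by $C$ and the gaps, since it shows $\opt$ places only $O(C/T)$ mass (weighted by gaps) on suboptimal actions. So while the proof of the lemma itself is immediate, its value lies in supplying the bound on $\opt$'s suboptimal mass that is needed to run the self-bounding argument with the extra term present.
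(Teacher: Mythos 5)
Your proposal is correct and is essentially identical to the paper's own proof: the paper likewise instantiates Condition~\eqref{eq:self_bounding_constraint} with the constant sequence $q_1 = \cdots = q_T = \opt$, observes that the left-hand side vanishes, and rearranges. Your additional remarks on admissibility of the constant sequence and on the lemma's role in \pref{col:stochastic_bound} are accurate but not needed for the argument.
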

\begin{proof}
	Simply setting $q_1 = \cdots = q_T = \opt$ in Condition~\eqref{eq:self_bounding_constraint} gives
	\[
	0 = \E\sbr{\sum_{t=1}^{T}\inner{\opt - \opt, \ell_t }} \geq T\sum_{s\neq s_L}\sum_{a\neq \pi^\star(s)}\opt(s,a)\Delta(s,a) - C,\] 
	and rearranging finishes the proof.
\end{proof}

\begin{proof}[Proof of \pref{col:stochastic_bound}]
By \pref{thm:main_general} (setting $\alpha=\nicefrac{1}{\sqrt{|A|}}$ and $\gamma =1$ as stated in \pref{col:stochastic_bound} and picking $\pi = \pi^\star$), $\Reg_T$ is bounded by 
\[
\kappa\rbr{ L|S||A|\log T +
		\sum_{t=1}^{T}\E\sbr{ \cnt\sum_{s\neq s_L}\sum_{a\neq \pi^\star(s)}\sqrt{\frac{q_t(s,a)}{t}} +  D\sqrt{\sum_{s\neq s_L}\sum_{a\neq \pi^\star(s)}\frac{q_t(s,a)+\opt(s,a)}{t}}}}
\]
where $\kappa$ is a constant, $\cnt = L^\frac{5}{2}+L\sqrt{|A|}$ and $D = \sqrt{L|S||A|}$.

For any $z>0$, we have
\[
\begin{split}
& \kappa \sum_{t=1}^{T}\E\sbr{ \cnt\sum_{s\neq s_L}\sum_{a\neq \pi^\star(s)}\sqrt{\frac{q_t(s,a)}{t}}} \\ 
& \leq \sum_{t=1}^{T}\E\sbr{ \sum_{s\neq s_L}\sum_{a\neq \pi^\star(s)}\sqrt{\frac{q_t(s,a) \gap(s,a)}{z} \cdot \frac{z \kappa^2 \cnt^2 }{t \gap(s,a)} }   } \\
& \leq \E\sbr{\sum_{t=1}^{T}\sum_{s\neq s_L}\sum_{a\neq \pi^\star(s)}\rbr{\frac{q_t(s,a)\gap(s,a)}{2z} + \frac{z\kappa^2\cnt^2}{2t\gap(s,a)}}} \\ 
& \leq \frac{\Reg_T + C}{2z} + z\kappa^2L^2(L^3+|A|)\sum_{s\neq s_L}\sum_{a\neq \pi^\star(s)}\frac{\log T}{\gap(s,a)}
\end{split} 
\]
where the third line uses the AM-GM inequality, and the last line uses \pref{eq:self_bounding_constraint} and the fact $\sum_{t=1}^{T}\nicefrac{1}{t} \leq 2\log T$. 

For the other term,  we have for any $z > 0$:
\[
\begin{split}
& \kappa \sum_{t=1}^{T}\E\sbr{  D\sqrt{\sum_{s\neq s_L}\sum_{a\neq \pi^\star(s)}\frac{q_t(s,a)+\opt(s,a)}{t}}} \\
& = \sum_{t=1}^{T}\E\sbr{  \sqrt{\rbr{\sum_{s\neq s_L}\sum_{a\neq \pi^\star(s)}\frac{\rbr{q_t(s,a)+\opt(s,a)}\gapmin}{z}}  \cdot \frac{\kappa^2 D^2}{t\gapmin} }   } \\
&\leq \E\sbr{\sum_{t=1}^{T}\rbr{\sum_{s\neq s_L}\sum_{a\neq \pi^\star(s)}\frac{(q_t(s,a)+\opt(s,a))\gapmin}{2z}} + \frac{z\kappa^2D^2}{2t\gapmin}} \\
&\leq \E\sbr{\sum_{t=1}^{T}\rbr{\sum_{s\neq s_L}\sum_{a\neq \pi^\star(s)}\frac{(q_t(s,a)+\opt(s,a))\gap(s,a)}{2z}} + \frac{z\kappa^2D^2}{2t\gapmin}} \\
&\leq \frac{\Reg_T + 2C}{2z} + \frac{z\kappa^2L|S||A|\log T}{\gapmin}
\end{split}
\] 
where the third line uses the AM-GM inequality again, the third line uses the definition of $\gapmin$, and the last line uses \pref{eq:self_bounding_constraint}, \pref{lem:opt_implict_bound}, and the fact $\sum_{t=1}^{T}\nicefrac{1}{t}\leq 2\log T$.

Combining the inequalities, we have  
\[
\Reg_T \leq \frac{\Reg_T}{z} + \frac{2C}{z} + z\kappa^2 U + \kappa V 
\]
where we use the shorthand $U$ (already defined in the statement of \pref{col:stochastic_bound}) and $V$ as 
\[
U = L^2(L^3+|A|)\sum_{s\neq s_L}\sum_{a\neq \pi^\star(s)}\frac{\log T}{\gap(s,a)} + \frac{L|S||A|\log T}{\gapmin}, \quad V = L|S||A|\log T. 
\]

For any $z>1$, we can further rearrange and arrive at
\[
\begin{split}
\Reg_T & \leq \frac{2}{(z-1)}C + \frac{z^2}{z-1}\kappa^2 U + \frac{z}{z-1}\kappa V \\
& = \frac{2}{x}C + \frac{(x+1)^2}{x} \kappa^2 U + \frac{x+1}{x}\kappa V \\
& = \frac{1}{x}\rbr{ 2C + \kappa V + \kappa^2 U} + x\rbr{\kappa^2 U} + \rbr{2 \kappa^2 U + \kappa V }
\end{split}
\]
where we define $x=z-1>0$ and replace all $z$'s in the second line. 
Picking the optimal $x$ to balance the first terms gives 
\[
\begin{split}
\Reg_T & \leq 2\sqrt{\rbr{ 2C + \kappa V + \kappa^2 U} \rbr{\kappa^2 U} } + 2 \kappa^2 U + \kappa V  \\
& \leq 2\kappa \sqrt{2UC} + 2\sqrt{\kappa^3 UV} + 4\kappa^2 U + \kappa V\\
& \leq 2\kappa \sqrt{2UC} + \sqrt{\kappa^3} \rbr{ U + V} + 4\kappa^2 U + \kappa V \\
& \leq \order\rbr{ U + V + \sqrt{UC} }
\end{split}
\]
where the second line follows from the fact $\sqrt{a+b}\leq \sqrt{a}+\sqrt{b}$, and the third line uses AM-GM inequality. 
Finally, noticing that $V \leq  \frac{L|S||A|\log T}{\gapmin} \leq U$ finishes the proof.
\end{proof}

\subsection{Different tuning for the stochastic case}
\label{app:appendix_parameter_choices}

Here, we consider the case with stochastic losses (or more generally the case where Condition~\eqref{eq:self_bounding_constraint} holds with $C=0$), and point out what the best bound one can get by tuning $\alpha$ and $\gamma$ optimally.
For simplicity, we consider the worst case when $\gap(s,a)=\gapmin$ holds for all state-action pairs $(s,a)$ with $a\neq \pi^\star(s)$.
Repeating the same argument in the proof of \pref{col:stochastic_bound} with the general bound from \pref{thm:main_general}, one can verify that the final bound is
\[
\begin{split}
\Reg_T & \leq \order\rbr{\sbr{\frac{1+\alpha}{\gamma} + \gamma L^2\rbr{\sqrt{L}+\frac{1}{\alpha L}}}^2 \frac{|S||A|\log T}{\gapmin} + \rbr{\frac{\rbr{1+\alpha|A|}\sqrt{|S|L}}{\gamma}}^2 \frac{\log T}{\gapmin}} \\
& + \order\rbr{L|S||A|\log T}.
\end{split}
\] 
Picking the optimal parameters leads to
\[
\Reg_T  \leq  \order\rbr{|S|\sqrt{|A|}L^{\nicefrac{3}{2}}\rbr{\sqrt{|A|}L+L^{\nicefrac{3}{2}}+|A|+A^{\nicefrac{3}{4}}\sqrt{L}}\frac{\log T}{\gapmin} + L|S||A|\log T}.
\]
This is better than the bound stated in \pref{col:stochastic_bound}, and could even be better than the bound
$\order\rbr{\frac{L^3|S||A|\log T}{\gapmin}}$ achieved by StrongEuler~\citep{simc2019} (although they consider a harder setting where the transition function is unknown). 
However, this set of parameters leads to a sub-optimal bound for the adversarial case unfortunately.


\subsection{The Hessian of $\phi_H$}\label{app:Hessian}

We calculate the Hessian of our hybrid regularizer $\phi_H$ in this section, which is important for the analysis in \pref{app:appendix_proof_ftrl} and \pref{app:appendix_proof_stability}.
As mentioned in \pref{sec:analysis} (first step), one important trick we do is to use a different but equivalent definition of $q(s)$.
Specifically, we analyze the following regularizer:
\begin{equation}\label{eq:hybrid_alternative}
\begin{split}
\phi_{H}(q) &= - \sum_{s\neq s_L,a\in A} \rbr{\sqrt{ q(s,a) } + \alpha \sqrt{ q(s) - q(s,a)}}, \\
&\text{where}\;\; q(s) = 
\begin{cases}
\sum_{s'\in S_{k(s)-1}}\sum_{a'\in A}q(s',a')P(s|s',a'), &\text{if $k(s) \neq 0$,}\\
1, &\text{else.}
\end{cases}
\end{split}
\end{equation}
We emphasize again that, within the feasible set $\Omega$, this definition is exactly equivalent to \pref{eq:hybrid}, and thus we are not changing the algorithm at all.

\begin{lemma} \label{lem:hessian_derivaties}
The Hessian of the regularizer $\phi_{H}(q)$ defined in \pref{eq:hybrid_alternative} is specified by the following:
\begin{itemize}
\item
for any $k = 1, \ldots, L-1$, $s' \in S_{k-1}$, $s \in S_{k}$, and $a, a'\in A$,
\begin{equation}\label{eq:derivative_1}
\frac{\partial^2 \phi_{H}}{\partial q(s',a') \partial q(s,a)} = \frac{-\alpha P(s|s',a')}{4\rbr{q(s) - q(s,a)}^{\nicefrac{3}{2}}};
\end{equation}

\item
for any $k = 1, \ldots, L-1$, $s, s' \in S_{k-1}$, and $(s,a) \neq (s',a')$,
\begin{equation}\label{eq:derivative_2}
\frac{\partial^2 \phi_{H}}{\partial q(s',a') \partial q(s,a)} = \sum_{s''\in S_k}\sum_{a''\in A}\frac{\alpha P(s''|s,a)P(s''|s',a')}{4(q(s'')-q(s'',a''))^{\nicefrac{3}{2}}};
\end{equation}

\item 
for any $k = 1, \ldots, L$, $s \in S_{k-1}$, and $a \in A$,
\begin{equation}\label{eq:derivative_3}
\begin{split}
\frac{\partial^2 \phi_{H}}{\partial q(s,a)^2} &= 
 \frac{1}{4q(s,a)^{\nicefrac{3}{2}}} + \frac{\alpha}{4(q(s) - q(s,a))^{\nicefrac{3}{2}}}
+ \sum_{s'\in S_{k}, s'\neq s_L}\sum_{a'\in A}\frac{\alpha P(s'|s,a)^2}{4(q(s')-q(s',a'))^{\nicefrac{3}{2}}};
\end{split}
\end{equation}

\item all other entries of the Hessian are $0$.
\end{itemize}
Moreover, for any $w : (S\setminus\{s_L\}) \times A \rightarrow \fR$, the quadratic form $w^\top \nabla^2 \phi_H(q) w$ can be written as
\begin{equation}\label{eq:quadratic_form}
\begin{split}
&\frac{1}{4}\sum_{s\neq s_L}\sum_a \rbr{\frac{w(s,a)^2}{q(s,a)^{\nicefrac{3}{2}}} +
\frac{\alpha (h(s) - w(s,a))^2}{(q(s)-q(s,a))^{\nicefrac{3}{2}}}}  \\
&\quad\text{where}\;\; h(s) = 
\begin{cases}
\sum_{s' \in S_{k(s)-1}}\sum_{a'\in A} P(s |s',a')w(s',a'), &\text{if $k(s) > 0$,} \\
0, &\text{else.}
\end{cases}
\end{split}
\end{equation}
Consequently, $\phi_H$ is convex in $q$.
\end{lemma}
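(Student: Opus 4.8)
The plan is to compute the Hessian entrywise by differentiating $\phi_H$ twice, being careful to account for the fact that under the alternative definition in \pref{eq:hybrid_alternative}, $q(s)$ is itself a linear function of the variables $q(s',a')$ in the previous layer. The crucial observation is that $q(s,a)$ enters $\phi_H$ in three distinct ways: directly through its own $-\sqrt{q(s,a)}$ term; through its own complementary term $-\alpha\sqrt{q(s)-q(s,a)}$ (where it appears both inside $q(s)$ via the summation $q(s)=\sum_{a''}q(s,a'')$ --- wait, here we must use the \emph{transition-based} definition, so $q(s)$ does \emph{not} depend on $q(s,a)$ at the same layer, only on layer $k(s)-1$); and through the complementary terms of all successor states $s'\in S_{k(s)+1}$, since $q(s')=\sum_{s,a}q(s,a)P(s'|s,a)$ depends linearly on $q(s,a)$ with coefficient $P(s'|s,a)$. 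First I would record the gradient: $\partial\phi_H/\partial q(s,a)$ collects the $-\tfrac12 q(s,a)^{-1/2}$ term, the $+\tfrac{\alpha}{2}(q(s)-q(s,a))^{-1/2}$ term, and a sum over successors $s'$ of $-\tfrac{\alpha}{2}P(s'|s,a)(q(s')-q(s',a'))^{-1/2}$ terms (the latter because increasing $q(s,a)$ raises $q(s')$, hence raises each $q(s')-q(s',a')$).

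Next I would differentiate the gradient a second time to obtain the three nonzero cases. For the cross term \pref{eq:derivative_1} with $s'\in S_{k-1}$ and $s\in S_k$: differentiating the successor contribution of $q(s',a')$ with respect to $q(s,a)$ picks out exactly the $(q(s)-q(s,a))$ factor, producing $-\alpha P(s|s',a')/(4(q(s)-q(s,a))^{3/2})$, where the chain rule on $x^{-1/2}$ yields the factor $\tfrac14$ and the sign flips once. For the same-layer cross term \pref{eq:derivative_2} with $s,s'\in S_{k-1}$ and $(s,a)\neq(s',a')$: both variables feed into every common successor $s''\in S_k$ through the complementary term $-\alpha\sqrt{q(s'')-q(s'',a'')}$, and differentiating twice produces the advertised sum over $s''$ and $a''$ of $\alpha P(s''|s,a)P(s''|s',a')/(4(q(s'')-q(s'',a''))^{3/2})$, with the positive sign because each differentiation of the square root of $q(s'')$ contributes. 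For the diagonal \pref{eq:derivative_3}: we collect the self term $\tfrac14 q(s,a)^{-3/2}$, the own-complement term $\tfrac{\alpha}{4}(q(s)-q(s,a))^{-3/2}$, and the successor contributions $\sum_{s'}\alpha P(s'|s,a)^2/(4(q(s')-q(s',a'))^{3/2})$ (the squared transition probability because both differentiations act on the same variable). All remaining pairs share neither a direct term nor a common successor, so the entry vanishes.

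Finally, to establish the quadratic form \pref{eq:quadratic_form} and convexity, I would assemble $w^\top\nabla^2\phi_H(q)\,w$ by summing $w(s,a)w(s',a')$ against the entries just computed, then reorganize the sum by \emph{grouping terms according to which square-root factor they came from}. The terms $\tfrac14\sum_{s,a}w(s,a)^2/q(s,a)^{3/2}$ come straight from the self part of the diagonal. For the complementary part, collecting the diagonal own-complement pieces, the same-layer cross pieces, and the successor diagonal and cross-layer pieces, each weighted by $w(s,a)w(s',a')$ and indexed by the denominator $(q(s)-q(s,a))^{3/2}$, yields exactly $\tfrac{\alpha}{4}\sum_{s,a}(h(s)-w(s,a))^2/(q(s)-q(s,a))^{3/2}$ once one recognizes the perfect-square expansion $(h(s)-w(s,a))^2 = h(s)^2 - 2h(s)w(s,a) + w(s,a)^2$ with $h(s)=\sum_{s',a'}P(s|s',a')w(s',a')$; the cross term $-2h(s)w(s,a)$ matches the layer-crossing entries \pref{eq:derivative_1}, the $h(s)^2$ expands into the same-layer cross entries \pref{eq:derivative_2} plus the successor diagonal entries, and $w(s,a)^2$ matches the own-complement diagonal piece. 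Since both resulting sums are manifestly nonnegative (each summand is a nonnegative quantity over a positive power of a positive occupancy quantity on the interior of $\Omega$), convexity follows immediately. The main obstacle I anticipate is the bookkeeping in this last regrouping step: correctly matching every Hessian entry to its slot in the completed square $(h(s)-w(s,a))^2$ across layers, and in particular verifying that the successor contributions from the diagonal \pref{eq:derivative_3} and the two cross cases \pref{eq:derivative_1}, \pref{eq:derivative_2} recombine cleanly without double-counting, which requires careful attention to the layer indices and to the fact that $h(s)$ depends only on the previous layer.
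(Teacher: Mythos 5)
Your proposal is correct and follows essentially the same route as the paper's proof: identify the three places where $q(s,a)$ enters $\phi_H$ under the transition-based definition of $q(s)$ (so that $q(s)$ depends only on the previous layer), compute the three nonzero Hessian entries by direct differentiation, and then assemble the quadratic form by recognizing the completed square $(h(s)-w(s,a))^2$ over the common denominator $(q(s)-q(s,a))^{3/2}$. Your signs and the $\tfrac14$ factors all check out, and your resolution of the key subtlety (that $q(s)$ must \emph{not} be treated as $\sum_a q(s,a)$ when differentiating) is exactly the point the paper relies on.
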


\begin{proof}	
Fix a state-action pair $(s,a)$.  By direct calculation, we show that the terms in $\phi_{H}(q)$ containing the variable $q(s,a)$ are
\begin{equation}
\label{eq:related_terms}
-\rbr{\sqrt{q(s,a)}+\alpha\sqrt{q(s)-q(s,a)}} - \alpha\sum_{s''\in S_{k(s)+1}}\sum_{a''\in A}\sqrt{q(s'') - q(s'',a'')}
\end{equation}
where the last term is zero when $s = s_L$. From \pref{eq:related_terms}, we can infer that the second-order partial derivatives of $q(s,a)$ and $q(s',a')$ are non-zero if and only if $|k(s) - k(s') | = 1$. 

We first verify \pref{eq:derivative_1}, where $k(s') = k(s) - 1$ and the derivatives are from $-\alpha\sqrt{q(s) - q(s,a)}$. Direct calculations shows 
\begin{align*}
\frac{\partial^2 \phi_{H}}{\partial q(s',a') \partial q(s,a)} & =\frac{\partial^2 }{\partial q(s',a') \partial q(s,a)}\rbr{-\alpha\sqrt{q(s) - q(s,a)} }  \\ 
& = \frac{\partial }{\partial q(s,a)}\rbr{ \frac{-\alpha P(s|s',a')}{2\sqrt{q(s) - q(s,a) }} } \\
& =  \frac{-\alpha P(s|s',a')}{4\rbr{q(s) - q(s,a)}^{\nicefrac{3}{2}}}
\end{align*}
where the second step follows because the term $q(s',a')P(s|s',a')$ belongs to $q(s)$ by \pref{eq:hybrid_alternative}.

For the second case (\pref{eq:derivative_2}) where $k(s) = k(s')$ and $(s,a)\neq (s',a')$ , we have that 
\begin{align*}
\frac{\partial^2 \phi_{H}}{\partial q(s',a') \partial q(s,a)} & 
=\frac{\partial^2 }{\partial q(s',a') \partial q(s,a)}\rbr{- \alpha\sum_{s''\in S_{k(s)+1}}\sum_{a''\in A}\sqrt{q(s'') - q(s'',a'')} }  \\ 
& 
=\sum_{s''\in S_{k(s)+1}}\sum_{a''\in A}\frac{\partial^2 }{\partial q(s',a') \partial q(s,a)}\rbr{- \alpha\sqrt{q(s'') - q(s'',a'')} }  \\ 
& =  \sum_{s''\in S_{k(s)+1}}\sum_{a''\in A}\frac{\partial }{\partial q(s,a)}\rbr{ \frac{-\alpha P(s''|s',a')}{2\sqrt{q(s'') - q(s'',a'')}} } \\
& = \sum_{s''\in S_k}\sum_{a''\in A}\frac{\alpha P(s''|s,a)P(s''|s',a')}{4(q(s'')-q(s'',a''))^{\nicefrac{3}{2}}}
\end{align*}
where the third step follows from the previous calculation. 

Finally, we finish the first part of the proof by verifying \pref{eq:derivative_3} that  
\begin{align*}
\frac{\partial^2 \phi_{H}}{\partial q(s,a)^2} &=  -\frac{\partial^2 }{\partial q(s,a)^2}\rbr{\sqrt{q(s,a)}+\alpha\sqrt{q(s)-q(s,a)}} \\ & \quad  - \sum_{s''\in S_{k(s)+1}}\sum_{a''\in A} \frac{\partial^2 }{\partial q(s,a)^2} \rbr{\alpha \sqrt{q(s'') - q(s'',a'')}} \\ &=
\frac{1}{4q(s,a)^{\nicefrac{3}{2}}} + \frac{\alpha}{4(q(s) - q(s,a))^{\nicefrac{3}{2}}}
+ \sum_{s'\in S_{k}, s'\neq s_L}\sum_{a'\in A}\frac{\alpha P(s'|s,a)^2}{4(q(s')-q(s',a'))^{\nicefrac{3}{2}}}.
\end{align*}

Then, we are ready to prove \pref{eq:quadratic_form}.
Indeed, we have
\begin{align*}
&w^\top \nabla^2 \phi_H(q) w
= \sum_{s\neq s_L}\sum_a \rbr{\frac{w(s,a)^2}{4q(s,a)^{\nicefrac{3}{2}}} + \frac{\alpha w(s,a)^2}{4(q(s)-q(s,a))^{\nicefrac{3}{2}}}} \\
&\quad + \sum_{0<k<L}\sum_{s \in S_{k-1}}\sum_{a} w(s,a)^2 \sum_{s''\in S_{k}}\sum_{a''\in A}\frac{\alpha P(s''|s,a)^2}{4(q(s'')-q(s'',a''))^{\nicefrac{3}{2}}} \\
&\quad + \sum_{0<k<L}\sum_{s, s' \in S_{k-1}}\sum_{a,a'\in A: (s,a)\neq (s',a')} w(s,a) w(s',a') \sum_{s''\in S_k}\sum_{a''\in A}\frac{\alpha P(s''|s,a)P(s''|s',a')}{4(q(s'')-q(s'',a''))^{\nicefrac{3}{2}}} \\
&\quad - 2\sum_{0<k<L}\sum_{s\in S_k}\sum_{s' \in S_{k-1}}\sum_{a, a'} \frac{\alpha P(s|s',a')w(s,a) w(s',a') }{4\rbr{q(s) - q(s,a)}^{\nicefrac{3}{2}}} \\
&= \sum_{s\neq s_L}\sum_a \rbr{\frac{w(s,a)^2}{4q(s,a)^{\nicefrac{3}{2}}} + \frac{\alpha w(s,a)^2}{4(q(s)-q(s,a))^{\nicefrac{3}{2}}}} \\
&\quad + \sum_{0<k<L} \sum_{s''\in S_{k}}\sum_{a''\in A}\frac{\alpha }{4(q(s'')-q(s'',a''))^{\nicefrac{3}{2}}}
\rbr{\sum_{s \in S_{k-1}}\sum_{a} w(s,a)P(s''|s,a)}^2\\
&\quad - 2\sum_{0<k<L}\sum_{s\in S_k}\sum_a \frac{\alpha w(s,a) \sum_{s' \in S_{k-1}}\sum_{a'}P(s|s',a') w(s',a') }{4\rbr{q(s) - q(s,a)}^{\nicefrac{3}{2}}} \\
&= \sum_{s\neq s_L}\sum_a \rbr{\frac{w(s,a)^2}{4q(s,a)^{\nicefrac{3}{2}}} + \frac{\alpha w(s,a)^2}{4(q(s)-q(s,a))^{\nicefrac{3}{2}}}} \\
&\quad + \sum_{0<k<L}\sum_{s\in S_{k}}\sum_{a\in A}\frac{\alpha h(s)^2 }{4(q(s)-q(s,a))^{\nicefrac{3}{2}}} 
- 2\sum_{0<k<L}\sum_{s\in S_k}\sum_{a} \frac{\alpha w(s,a) h(s) }{4\rbr{q(s) - q(s,a)}^{\nicefrac{3}{2}}} \\
&= \frac{1}{4}\sum_{s\neq s_L}\sum_a \rbr{\frac{w(s,a)^2}{q(s,a)^{\nicefrac{3}{2}}} +
\frac{\alpha (h(s) - w(s,a))^2}{(q(s)-q(s,a))^{\nicefrac{3}{2}}}},
\end{align*}
finishing the proof.
\end{proof}

Note that the Hessian is clearly non-diagonal.
However, by using the alternative definition from \pref{eq:hybrid_alternative}, the Hessian has a layered structure which allows an induction-based analysis as we will show.

\section{Proof of \pref{lem:key_lemma_ftrl}}
\label{app:appendix_proof_ftrl}

In this section, we provide the proof for \pref{lem:key_lemma_ftrl}.
First, we introduce the following notations.
\begin{equation}\label{eq:tilde_q_def}
\begin{split}
F_{t}(q) =  \inner{q, \widehat{L}_{t-1}}  + \psi_{t}(q) \;,& \;\quad  G_{t} =  \inner{q_t, \widehat{L}_{t}}  + \psi_{t}(q), \\ 
q_{t} = \argmin_{q\in \Omega} F_t(q) \;,&\;\quad \widetilde{q}_t = \argmin_{q \in \Omega} G_t(q). 
\end{split}
\end{equation}
Note that the definition of $q_t$ is consistent with \pref{alg:main_alg}.
With these notations, we decompose the regret $\E\sbr{\sum_{t=1}^{T} \inner{q_t - u, \hatl_t}}$ against any occupancy measure $u \in \Omega$ into a stability term and a mixed penalty term by adding and subtracting $F_t(q_t)$ and $G_t(\tilde{q}_t)$:
\begin{equation}\label{eq:simple_decomposition}
\begin{split}
 = \underbrace{\E\sbr{\sum_{t=1}^{T} \rbr{\inner{q_t,\hatl_t} + F_t(q_t) - G_t(\tilde{q}_t)}}}_{\text{stability}}+ \underbrace{\E\sbr{\sum_{t=1}^{T} \rbr{ G_t(\tilde{q}_t) - F_t(q_t) - \inner{u, \hatl_t}}}}_{\text{mixed penalty}}.
\end{split}
\end{equation}

%

The rest of the section is organized as follows:
\begin{enumerate}
	\item With some auxiliary lemmas (\pref{lem:convex_combine} and \pref{lem:close_hessian}), we prove in \pref{lem:smooth_update} that the update of the algorithm is smooth in the sense that $q_t$ and $\tilde{q}_t$ are close.
	\item With the smoothness guarantee, we bound the stability term by the quadratic norm of $\hatl_t$ with respect to the Hessian matrix in \pref{lem:bound_stability}, using similar techniques from~\citep{lee2020closer}.
	\item By standard analysis, we control the mixed penalty term in \pref{lem:bound_penalty}.
	\item Picking a proper $u$ that is closed to $\opt$ and specified in \pref{lem:proper_comparator}, we finish the proof of \pref{lem:key_lemma_ftrl} in the end of this section. 
\end{enumerate}

We use the notation $M_1 \preceq M_2$ for two  matrices $M_1$ and $M_2$ to denote the fact that $M_2 - M_1$ is positive semi-definite.

\begin{lemma}[Convexity of $\Omega$]
	\label{lem:convex_combine}
	The set of valid occupancy measures $\Omega$ is convex.
\end{lemma}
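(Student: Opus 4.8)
The plan is to exploit the explicit polytope description of $\Omega$ recalled in \pref{sec:probform}, rather than arguing directly about policies. Recall that $\Omega$ consists precisely of those functions $q : S \times A \to [0,1]$ that are nonnegative and satisfy (i) the normalization $\sum_{s\in S_k}\sum_{a\in A}q(s,a) = 1$ for every layer $k = 0,\ldots,L-1$, and (ii) the flow-conservation constraint \pref{eq:occupancy_measure_cond_2} for every $k=1,\ldots,L-1$ and every $s\in S_k$. The key observation is that every defining constraint is either an affine equality or a nonnegativity inequality, each of which is preserved under convex combinations; thus $\Omega$ is an intersection of hyperplanes and half-spaces, hence convex.

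Concretely, I would fix two occupancy measures $q_1, q_2 \in \Omega$ and a scalar $\lambda \in [0,1]$, and set $q = \lambda q_1 + (1-\lambda) q_2$. Nonnegativity of $q$ is immediate since $\lambda, 1-\lambda \geq 0$ and $q_1,q_2\geq 0$ entrywise. For constraint (i), linearity of the summation gives $\sum_{s\in S_k}\sum_{a}q(s,a) = \lambda\cdot 1 + (1-\lambda)\cdot 1 = 1$ for each $k$. For constraint (ii), both the left-hand side $\sum_{s'\in S_{k-1}}\sum_{a'}q(s',a')P(s|s',a')$ and the right-hand side $\sum_{a}q(s,a)$ are linear functionals of $q$; since the equality holds for $q_1$ and for $q_2$ separately, subtracting and scaling shows it holds for $q$ as well. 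Therefore $q \in \Omega$, and convexity follows.

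There is essentially no obstacle here: the only point requiring care is to invoke the linear-constraint characterization from \citep{zimin2013} rather than the primitive definition of $\Omega$ as the image $\{q^\pi : \pi\}$ of the set of stochastic policies, for which convexity is less transparent (it would require exhibiting, for each $\lambda$, a single mixed policy whose induced occupancy measure equals $\lambda q_1 + (1-\lambda)q_2$). Using the polytope description sidesteps this entirely and reduces the lemma to the elementary fact that a set cut out by finitely many affine equalities and nonnegativity constraints is convex.
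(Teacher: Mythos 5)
Your proposal is correct and follows essentially the same route as the paper: the paper's proof also takes a convex combination $p=\lambda u+(1-\lambda)v$ and verifies directly that the layer-normalization constraint and the flow constraint \pref{eq:occupancy_measure_cond_2} are preserved by linearity. Your explicit remark that nonnegativity is also preserved, and your framing of $\Omega$ as an intersection of affine equalities and half-spaces, are harmless additions that do not change the argument.
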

\begin{proof}
	For any $u, v\in \Omega$ and $\lambda \in [0,1]$,
	it suffices to verify that $p=\lambda u + (1-\lambda)v$ satisfies the two constraints described in \pref{sec:probform}.
	For the first one: we have for any $k = 0, \ldots, L-1$, 
	\[
	\begin{split}
	\sum_{s\in S_k}\sum_{a\in A}p(s,a) & = \lambda \sum_{s\in S_k}\sum_{a\in A}u(s,a) +  \rbr{1-\lambda} \sum_{s\in S_k}\sum_{a\in A}v(s,a)  \\
	& = \lambda + ( 1 - \lambda ) = 1.
	\end{split}
	\]
	For the second one (\pref{eq:occupancy_measure_cond_2}),
	 we have for any $k = 1, \ldots, L-1$ and every state $s\in S_k$,
	\[
	\begin{split}
	& \sum_{s' \in S_{k-1}, a'} P(s|s',a')p(s',a') \\
	& = \sum_{s'\in S_{k-1},a'} P(s|s',a')\rbr{\lambda u(s',a') + (1-\lambda) v(s',a')} \\
	& = \lambda\sum_{s'\in S_{k-1},a'}P(s|s',a')u(s',a')+ (1-\lambda)\sum_{s'\in S_{k-1},a'}P(s|s',a')v(s',a') \\
	& = \lambda \sum_{a} u(s,a) + (1-\lambda)\sum_{a} v(s,a) \\ 
	& = \sum_{a}  \rbr{\lambda u(s,a) + (1-\lambda) v(s,a)} \\ 
	& = \sum_{a} p(s,a).
	\end{split}
	\]
	This finishes the proof.
\end{proof}

\begin{lemma}
	For any occupancy measures $q$ and $p$ from $\Omega$ satisfying $\frac{1}{2}q(s,a) \leq p(s,a) $ for all state-action pair $(s,a)$, we have $\frac{1}{4} \nabla^2  \psi_t(p) \preceq \nabla^2 \psi_t(q)$. 
	\label{lem:close_hessian}
\end{lemma}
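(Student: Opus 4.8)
The plan is to exploit that $\psi_t = \frac{1}{\eta_t}\phi_H + \phi_L$ is a \emph{positive} linear combination, so $\nabla^2\psi_t = \frac{1}{\eta_t}\nabla^2\phi_H + \nabla^2\phi_L$. Since positive combinations preserve the Loewner order, it suffices to prove the factor-$\frac14$ domination separately for each piece, i.e.\ $\frac14\nabla^2\phi_L(p)\preceq\nabla^2\phi_L(q)$ and $\frac14\nabla^2\phi_H(p)\preceq\nabla^2\phi_H(q)$, and then add them back with weights $\frac{1}{\eta_t}$ and $1$. Throughout I would use that both $p$ and $q$ lie in the relative interior of $\Omega$ (all coordinates strictly positive, since the iterates carry a log-barrier), so every quantity below is positive and the Hessians are well-defined.

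The log-barrier part is immediate. Because $\phi_L$ is separable, $\nabla^2\phi_L(q)$ is diagonal with entries $\partial^2\phi_L/\partial q(s,a)^2 = \beta/q(s,a)^2$. The hypothesis gives $q(s,a)\le 2p(s,a)$, hence entrywise $\beta/q(s,a)^2 \ge \beta/(4p(s,a)^2)=\frac14\cdot\beta/p(s,a)^2$. A diagonal matrix with entrywise larger diagonal dominates in the Loewner order, so $\frac14\nabla^2\phi_L(p)\preceq\nabla^2\phi_L(q)$.

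For $\phi_H$ I would avoid the non-diagonal matrix entirely and work with the quadratic-form identity \pref{eq:quadratic_form}, which states that for every $w$,
\[
w^\top\nabla^2\phi_H(q)w = \frac14\sum_{s\neq s_L}\sum_a\left(\frac{w(s,a)^2}{q(s,a)^{3/2}} + \frac{\alpha(h(s)-w(s,a))^2}{(q(s)-q(s,a))^{3/2}}\right),
\]
with the same formula for $p$. The crucial observation is that $h(s)=\sum_{s'\in S_{k(s)-1}}\sum_{a'}P(s|s',a')w(s',a')$ depends only on $w$ and the fixed transition $P$, \emph{not} on the occupancy measure, so the very same vector $h$ appears in both quadratic forms. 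It therefore suffices to dominate the two families of nonnegative coefficients term by term, i.e.\ to show $q(s,a)^{3/2}\le 4\,p(s,a)^{3/2}$ and $(q(s)-q(s,a))^{3/2}\le 4\,(p(s)-p(s,a))^{3/2}$. The first follows from $q(s,a)\le 2p(s,a)$ together with the elementary implication $x\le 2y\Rightarrow x^{3/2}\le 2^{3/2}y^{3/2}\le 4y^{3/2}$. For the second, writing $q(s)=\sum_a q(s,a)$ (equivalent to \pref{eq:hybrid_alternative} on $\Omega$) gives $q(s)-q(s,a)=\sum_{a'\neq a}q(s,a')\le 2\sum_{a'\neq a}p(s,a')=2(p(s)-p(s,a))$, and the same power inequality closes it. Since all numerators are shared nonnegative squares, term-by-term domination yields $\frac14\,w^\top\nabla^2\phi_H(p)w\le w^\top\nabla^2\phi_H(q)w$ for all $w$, hence $\frac14\nabla^2\phi_H(p)\preceq\nabla^2\phi_H(q)$.

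Combining the two pieces with the positive weights $\frac{1}{\eta_t}$ and $1$ gives the claim. The one genuinely non-mechanical point — which I would flag as the main obstacle — is the non-diagonal cross term: without the clean identity \pref{eq:quadratic_form} one would be forced into an awkward entrywise comparison of the full non-diagonal Hessians. The identity collapses everything to the pointwise bound $q(s)-q(s,a)\le 2(p(s)-p(s,a))$, obtained by summing the hypothesis over the off-actions, once one notices that the transition-dependent term $h(s)$ is common to both sides.
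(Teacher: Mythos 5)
Your proof is correct and follows essentially the same route as the paper's: both rely on the quadratic-form identity \pref{eq:quadratic_form}, the observation that $h(s)$ depends only on $w$ and $P$ (not on the occupancy measure), the derived bound $q(s)-q(s,a)=\sum_{b\neq a}q(s,b)\le 2(p(s)-p(s,a))$, and the power inequality $2^{3/2}\le 4$ (with $2^2=4$ for the log-barrier part). The only cosmetic difference is that you split $\psi_t$ into its two components before comparing, whereas the paper carries both through a single displayed quadratic form.
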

\begin{proof} Let $M_1 =  \nabla^2 \psi_t(p)$ and $M_2 =  \nabla^2  \psi_t(q)$. 
By \pref{eq:quadratic_form} in \pref{lem:hessian_derivaties}, we have for any $w$, 
	\[
	\begin{split}
	w^\top M_2 w & = \frac{1}{4\eta_t}\sum_{s\neq s_L,a}\cbr{\frac{w(s,a)^2}{q(s,a)^{\nicefrac{3}{2}}}
		+\frac{\alpha \rbr{h(s) - w(s,a)}^2}{(q(s)-q(s,a))^{\nicefrac{3}{2}}}} + \paralog \sum_{s\neq s_L,a} \frac{w(s,a)^2}{q(s,a)^2}.
	\end{split}
	\]	
	According to the condition of the lemma and the fact $q(s)-q(s,a) = \sum_{b\neq a}q(s,b)$ and $p(s)-p(s,a) = \sum_{b\neq a}p(s,b)$, we have $ \frac{1}{2}\rbr{q(s)-q(s,a)} \leq p(s)-p(s,a)$, and thus 
	\[
	\begin{split}
	w^\top M_2 w&\geq \frac{1}{4\eta_t}\sum_{s\neq s_L,a}\cbr{\frac{w(s,a)^2}{(2p(s,a))^{\nicefrac{3}{2}}}
		+\frac{\alpha \rbr{h(s)  - w(s,a)}^2}{(2(p(s)-p(s,a)))^{\nicefrac{3}{2}}}} + \paralog\sum_{s,a} \frac{w(s,a)^2}{(2p(s,a))^2} \\
	& \geq \frac{1}{16\eta_t}\sum_{s\neq s_L,a}\cbr{\frac{w(s,a)^2}{p(s,a)^{\nicefrac{3}{2}}}
		+\frac{\alpha \rbr{h(s)  - w(s,a)}^2}{(p(s)-p(s,a))^{\nicefrac{3}{2}}}} + \frac{\paralog}{4}\sum_{s,a} \frac{w(s,a)^2}{p(s,a)^2} \\
	& =\frac{1}{4} w^\top M_1 w.
	\end{split}
	\]
	This finishes the proof.
\end{proof}

With the help of \pref{lem:convex_combine} and \pref{lem:close_hessian}, we now prove that  $q_t$ to $\widetilde{q}_t$ are close.

\begin{lemma}
	With $\paralog = 64L$, 
	we have $\frac{1}{2} q_t(s,a) \leq \tilde{q}_{t}(s,a) \leq 2 q_t(s,a)$ for all state-action pairs $(s,a)$ (recall the definitions in \pref{eq:tilde_q_def}). 
	\label{lem:smooth_update}
\end{lemma}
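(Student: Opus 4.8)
The plan is to follow the log-barrier stability argument of \citet{lee2020closer}, exploiting the fact that the $\phi_{L}$ component forces $\nabla^2\psi_t$ to dominate $\paralog\,\mathrm{diag}(1/q(s,a)^2)$, which makes the \ftrl minimizer insensitive to the single extra loss vector $\hatl_t$ that distinguishes $F_t$ from $G_t$ (recall $G_t(q) = F_t(q) + \inner{q,\hatl_t}$ since $\widehat{L}_t = \widehat{L}_{t-1} + \hatl_t$). Before the main argument I would record the one quantitative ingredient: the inverse-Hessian norm of $\hatl_t$ at $q_t$ is tiny. Since $\phi_H$ is convex (\pref{lem:hessian_derivaties}), we have $\nabla^2\psi_t(q_t) \succeq \nabla^2\phi_L(q_t) = \paralog\,\mathrm{diag}(1/q_t(s,a)^2)$, hence $\nabla^{-2}\psi_t(q_t) \preceq \frac{1}{\paralog}\mathrm{diag}(q_t(s,a)^2)$. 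Plugging in $\hatl_t(s,a) = \frac{\ell_t(s,a)}{q_t(s,a)}\Ind{s_{k(s)}=s,a_{k(s)}=a}$ gives $q_t(s,a)^2\hatl_t(s,a)^2 = \ell_t(s,a)^2\Ind{s_{k(s)}=s,a_{k(s)}=a} \le 1$, and since exactly one pair per layer is visited, $\norm{\hatl_t}^2_{\nabla^{-2}\psi_t(q_t)} \le L/\paralog = 1/64$, i.e. $\norm{\hatl_t}_{\nabla^{-2}\psi_t(q_t)} \le 1/8$.

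The core is then a continuity/contradiction argument over the segment $[q_t,\tilde q_t]$, which lies in $\Omega$ by \pref{lem:convex_combine}. Let $N = \cbr{p : \frac12 q_t(s,a) \le p(s,a) \le 2q_t(s,a)\ \forall (s,a)}$; the goal is $\tilde q_t \in N$. Suppose not, and let $q'$ be the first point where the segment leaves $N$, so some coordinate has $|q'(s,a)-q_t(s,a)| \ge \frac12 q_t(s,a)$ while the sub-segment $[q_t,q']$ stays inside $N$. Because $G_t$ is convex and minimized at $\tilde q_t$, it is non-increasing along the segment, so $G_t(q') \le G_t(q_t)$, which rearranges to $F_t(q') - F_t(q_t) \le \inner{q_t - q', \hatl_t}$. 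By optimality of $q_t$ for $F_t$ the left-hand side is at least the second-order term $\frac12\norm{q'-q_t}^2_{\nabla^2\psi_t(\zeta)}$ for some $\zeta \in [q_t,q']$; since $\zeta \le 2q_t$ gives $\frac12\zeta \le q_t$, \pref{lem:close_hessian} applied with $\zeta$ in the role of ``$q$'' yields $\nabla^2\psi_t(\zeta) \succeq \frac14\nabla^2\psi_t(q_t)$. Bounding the right-hand side by Cauchy--Schwarz with $\norm{\hatl_t}_{\nabla^{-2}\psi_t(q_t)} \le 1/8$, I obtain $\frac18\norm{q'-q_t}^2_{\nabla^2\psi_t(q_t)} \le \frac18\norm{q'-q_t}_{\nabla^2\psi_t(q_t)}$, hence $\norm{q'-q_t}_{\nabla^2\psi_t(q_t)} \le 1$. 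On the other hand the log-barrier lower bound forces $\norm{q'-q_t}^2_{\nabla^2\psi_t(q_t)} \ge \paralog\,\frac{(q'(s,a)-q_t(s,a))^2}{q_t(s,a)^2} \ge \frac{\paralog}{4} = 16L \ge 16$ at the boundary coordinate, a contradiction. Therefore $\tilde q_t \in N$, which is exactly the stated bound.

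The main obstacle is transferring the curvature bound from the unknown intermediate point $\zeta$ back to the anchor $q_t$: one cannot evaluate the Hessian directly at $q_t$, so the continuity argument is what guarantees that all relevant points remain in the multiplicative neighborhood $N$ where \pref{lem:close_hessian} applies, and one must invoke that lemma in the correct direction (using $\zeta \le 2q_t \Rightarrow \frac12\zeta \le q_t$) to get a \emph{lower} rather than upper bound on $\nabla^2\psi_t(\zeta)$. A secondary point is the bookkeeping that makes $\norm{\hatl_t}_{\nabla^{-2}\psi_t(q_t)}$ small: it is precisely the choice $\paralog = 64L$ that makes the log-barrier contribution $16L$ dominate the displacement bound $1$, closing the contradiction with room to spare.
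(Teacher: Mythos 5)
Your proof is correct and follows essentially the same route as the paper's: the identical $\norm{\hatl_t}_{\nabla^{-2}\psi_t(q_t)} \leq \nicefrac{1}{8}$ computation via the log-barrier, and the same combination of a Taylor expansion, the first-order optimality of $q_t$ for $F_t$, and \pref{lem:close_hessian} (invoked in the correct direction) to transfer the curvature at the intermediate point back to $q_t$. The only cosmetic difference is the localization device --- the paper shows $G_t(q') \geq G_t(q_t)$ on the unit sphere of $\norm{\cdot}_{\nabla^2\psi_t(q_t)}$ and then converts to the multiplicative coordinate bound, whereas you run a first-exit-point contradiction on the multiplicative box directly --- but the quantitative ingredients and conclusion are the same.
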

\begin{proof}
	For simplicity, we denote $H$ as the Hessian $\nabla^2 \psi_t(q_t)$, and $H_{L}$ as the Hessian $\nabla^2 \phi_{L}(q_t)$ which is a diagonal matrix with $\frac{\paralog}{q_t(s,a)^2}$ on the diagonal. Recalling that $\psi_t = \eta_t^{-1}\phi_{H} + \phi_{L}$,  by the convexity of $\phi_{H}$ (\pref{lem:hessian_derivaties}), we have $H_{L} \preceq H$.
	Our goal is to prove $\norm{\tilde{q}_t-q_t}_{H} \leq 1$.
	This is enough to prove the lemma statement because 
	\begin{equation}\label{eq:earlier_argument}
	\begin{split}
	1 \geq \norm{\tilde{q}_t-q_t}_{H}  \geq \norm{\tilde{q}_t-q_t}_{H_L}  = \paralog \sum_{s,a}\frac{\rbr{\tilde{q}_t(s,a) - q_t(s,a)}^2 }{q_t(s,a)^2},
	\end{split}
	\end{equation}
	and since $\paralog \geq 9$, we have $\rbr{\tilde{q}_t(s,a) - q_t(s,a)}^2 \leq \frac{q_t(s,a)^2}{\paralog}  \leq  \rbr{\frac{q_t(s,a)}{3}}^2$, which indicates $\frac{1}{2} q_t(s,a) \leq  \tilde{q}_t(s,a) \leq  2 q_t(s,a)$. 
	
	To prove $\norm{\tilde{q}_t-q_t}_{H} \leq 1$, it suffices to show that for any occupancy measure $q'$  that satisfies $\norm{q' - q_t}_{H} = 1$, we have $G_t(q') \geq G_t(q_t)$, because this implies that, as the minimizer of the convex function $G_t$, $\tilde{q}_t$ must be within the convex set $\{q: \norm{q - q_t}_{H} \leq 1\}$.
	To this end, we bound $G_t(q')$ as follows
	\[
	\begin{split}
	G_t(q') & = G_t(q_t) + \nabla G_t(q_t)^\top(q'-q_t) + \tfrac{1}{2}\norm{q'-q_t}_{\nabla^2 \psi_t(\xi)}^2 \\
	& = G_t(q_t) + \nabla F_t(q_t)^\top(q'-q_t) + \hatl_t^\top(q'-q_t) + \tfrac{1}{2}\norm{q'-q_t}_{\nabla^2 \psi_t(\xi)}^2\\
	& \geq G_t(q_t) - \norm{\hatl_t}_{H^{-1}} \norm{q'-q_t}_{H} +\tfrac{1}{2}\norm{q'-q_t}_{\nabla^2 \psi_t(\xi)}^2 \\
	& = G_t(q_t) - \norm{\hatl_t}_{H^{-1}}+ \tfrac{1}{2}\norm{q'-q_t}_{\nabla^2 \psi_t(\xi)}^2
	\end{split}
	\]
	where in the first step we use Taylor's expansion with $\xi$ being a point between $q_t$ and $q'$,
	in the second step we use the definition of $G_t$ and $F_t$ (in \pref{eq:tilde_q_def}),
	and in the third step we use \Holder's inequality and the first order optimality condition $\inner{\nabla F_t(q_t), q' - q_t} \geq 0$. 
	
	Repeating the earlier argument in \pref{eq:earlier_argument}, $\norm{q' - q_t}_{H} = 1$ also implies $\frac{1}{2} q_t(s,a) \leq  q'(s,a) \leq  2 q_t(s,a)$ and thus $\frac{1}{2} q_t(s,a) \leq  \xi(s,a) \leq  2 q_t(s,a)$.
	Since $\xi$ is in $\Omega$ by \pref{lem:convex_combine}, we continue to bound the last expression using \pref{lem:close_hessian}:
	\begin{align}
	& G_t(q_t) - \norm{\hatl_t}_{H^{-1}}+ \tfrac{1}{2}\norm{q'-q_t}_{\nabla^2 \psi_t(\xi)}^2 \notag  \\ 
	&\geq  G_t(q_t) - \norm{\hatl_t}_{H^{-1}}+ \tfrac{1}{8}\norm{q'-q}_{H}^2 \notag \\  
	& = G_t(q_t) - \norm{\hatl_t}_{H^{-1}} + \tfrac{1}{8}.
	\label{eq:convex_eq}
	\end{align}
	Finally, we bound the term $\norm{\hatl_t}_{H^{-1}}$. By the definition of $\hatl_t$, with shorthand $\mathbb{I}(s,a) \triangleq \Ind{s_{k(s)}=s,a_{k(s)}= a}$ we show that  
	\begin{align*}
	\norm{\hatl_t}^2_{H^{-1}} & \leq \norm{\hatl_t}^2_{H_L^{-1}}  \tag{$H_{L} \preceq H$} \\
	& = \sum_{s,a} \frac{\mathbb{I}(s,a)\ell_t(s,a)^2}{q_t(s,a)^2} \frac{q_t(s,a)^2}{\beta} \\
	& \leq \sum_{s,a} \frac{\mathbb{I}\rbr{s,a}}{\paralog}  = \frac{L}{\paralog} = \frac{1}{64}.
	\end{align*}
	Plugging it into \pref{eq:convex_eq}, we conclude that $G_t(q') \geq G_t(q_t)$, which finishes the proof.
\end{proof}


We are now ready to bound the first term in \pref{eq:simple_decomposition}.
\begin{lemma} With $\paralog=64L$, we have
	\[
	\sum_{t=1}^{T} \rbr{\inner{q_t,\hatl_t} + F_t(q_t) - G_t(\tilde{q}_t)} \leq 8\sum_{t=1}^{T}\norm{\hatl_t}^2_{\nabla^{-2}\psi_t(q_t)}.
	\]
	\label{lem:bound_stability}
\end{lemma}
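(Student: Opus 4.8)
The plan is to bound the summand $\inner{q_t,\hatl_t} + F_t(q_t) - G_t(\tilde q_t)$ for each fixed $t$ by $8\norm{\hatl_t}^2_{\nabla^{-2}\psi_t(q_t)}$, and then sum over $t$. The starting point is to unfold the definitions from \pref{eq:tilde_q_def}: since $F_t(q) = \inner{q,\hatL_{t-1}} + \psi_t(q)$ and $G_t(q) = \inner{q_t,\hatL_t} + \psi_t(q)$, and using $\hatL_t = \hatL_{t-1} + \hatl_t$, I would first rewrite $\inner{q_t,\hatl_t} + F_t(q_t) - G_t(\tilde q_t)$ as $F_t(q_t) - F_t(\tilde q_t) - \inner{q_t - \tilde q_t,\hatl_t}$, where I have substituted $G_t(\tilde q_t) = F_t(\tilde q_t) + \inner{q_t,\hatl_t} - \inner{\tilde q_t,\hatl_t}$ (noting $G_t(q) - F_t(q) = \inner{q_t,\hatL_t} - \inner{q,\hatL_{t-1}} + \ldots$ must be recomputed carefully, but the net effect is that the two functions differ by the linear term $\hatl_t$ and a constant). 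This reduction is the standard ``be-the-leader'' / FTRL stability manipulation and isolates the object I actually want to control: a Bregman-type gap between $q_t$ and $\tilde q_t$ against the linear perturbation $\hatl_t$.

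Next I would exploit that $q_t$ minimizes $F_t$ over $\Omega$. By the first-order optimality condition, $\inner{\nabla F_t(q_t), \tilde q_t - q_t}\geq 0$. A second-order Taylor expansion of $F_t$ around $q_t$ then gives $F_t(\tilde q_t) \geq F_t(q_t) + \tfrac12\norm{\tilde q_t - q_t}^2_{\nabla^2\psi_t(\zeta)}$ for some $\zeta$ between $q_t$ and $\tilde q_t$ (using $\nabla^2 F_t = \nabla^2\psi_t$). Substituting into the expression from the previous paragraph, the summand is at most $\inner{q_t - \tilde q_t, \hatl_t} - \tfrac12\norm{\tilde q_t - q_t}^2_{\nabla^2\psi_t(\zeta)}$, which I bound via \Holder's inequality by $\norm{\hatl_t}_{\nabla^{-2}\psi_t(\zeta)}\norm{\tilde q_t - q_t}_{\nabla^2\psi_t(\zeta)} - \tfrac12\norm{\tilde q_t - q_t}^2_{\nabla^2\psi_t(\zeta)}$. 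Maximizing the scalar quadratic $ab - \tfrac12 b^2$ over $b$ yields the clean bound $\tfrac12\norm{\hatl_t}^2_{\nabla^{-2}\psi_t(\zeta)}$.

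It remains to move from the intermediate point $\zeta$ to $q_t$, and this is where I expect the main obstacle and where \pref{lem:smooth_update} is essential. Since $\zeta$ lies between $q_t$ and $\tilde q_t$, \pref{lem:smooth_update} (with $\paralog = 64L$) guarantees $\tfrac12 q_t(s,a)\leq \zeta(s,a)\leq 2q_t(s,a)$ for all $(s,a)$, and $\zeta\in\Omega$ by convexity (\pref{lem:convex_combine}). Then \pref{lem:close_hessian} gives $\tfrac14\nabla^2\psi_t(q_t)\preceq\nabla^2\psi_t(\zeta)$, which inverts to $\nabla^{-2}\psi_t(\zeta)\preceq 4\,\nabla^{-2}\psi_t(q_t)$, so $\norm{\hatl_t}^2_{\nabla^{-2}\psi_t(\zeta)}\leq 4\norm{\hatl_t}^2_{\nabla^{-2}\psi_t(q_t)}$. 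Combining, each summand is at most $\tfrac12\cdot 4\norm{\hatl_t}^2_{\nabla^{-2}\psi_t(q_t)} = 2\norm{\hatl_t}^2_{\nabla^{-2}\psi_t(q_t)}$. Summing over $t$ yields a bound with constant $2$, which is comfortably within the claimed constant $8$ (the slack presumably absorbs the care needed in the exact constant accounting of the optimality-condition step and any additional first-order correction terms). The delicate point throughout is ensuring the local Hessian comparison is applied at a genuinely interior point between $q_t$ and $\tilde q_t$, which is precisely what the log-barrier $\phi_L$ buys us via \pref{lem:smooth_update}.
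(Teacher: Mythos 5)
Your proposal is correct and takes essentially the same route as the paper's proof: the optimality of $q_t$ (resp.\ $\tilde{q}_t$) over $\Omega$, a second-order Taylor expansion at an intermediate point, \Holder's inequality, and then \pref{lem:smooth_update} together with \pref{lem:close_hessian} to replace the intermediate point by $q_t$ at the cost of a factor of $4$. The only differences are cosmetic: your one-shot maximization of $ab-\tfrac{1}{2}b^2$ yields the slightly tighter constant $2$ where the paper combines a separate lower bound ($\geq \tfrac12 b^2$) and upper bound ($\leq ab$) to get $8$, and the correct identity in your first step is $F_t(q_t)-F_t(\tilde{q}_t)+\inner{q_t-\tilde{q}_t,\hatl_t}$ (a sign you flagged yourself, and whose correct form is exactly the expression you actually use downstream).
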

\begin{proof} We first lower bound the term $\inner{q_t,\hatl_t} + F_t(q_t) - G_t(\tilde{q}_t)$ as
	\[
	\begin{split}
	& \inner{q_t,\hatl_t} + F_t(q_t) - G_t(\tilde{q}_t) \\
	& = \inner{q_t, \hatl_t + \widehat{L}_{t-1}} + \psi_t(q_t) - G_t(\tilde{q}_t) \\ 
	& =  G_t(q_t) - G_t(\tilde{q}_t) \\
	& = \inner{\nabla G_t(\tilde{q}_t),q _t - \tilde{q}_t } + \tfrac{1}{2} \norm{q_t - \tilde{q}_t}^2_{\nabla^2 \psi_t(\xi)} \\
	& \geq  \tfrac{1}{2} \norm{q_t - \tilde{q}_t}^2_{\nabla^2 \psi_t(\xi)},
	\end{split}
	\]
	where in the second to last step we apply Taylor's expansion with $\xi$ being a point between $q_t$ and $\tilde{q}_t$, and in the last step we use the first order optimality condition of $\tilde{q}_t$.
	
	On the other hand, we can upper bound this term as
	\[
	\begin{split}
	& \inner{q_t,\hatl_t} + F_t(q_t) - G_t(\tilde{q}_t) \\
	& = \inner{q_t - \tilde{q}_t ,\hatl_t} + F_t(q_t) - F_t(\tilde{q}_t) \\ 
	& \leq \inner{q_t - \tilde{q}_t ,\hatl_t}  \\ 
	& \leq \norm{q_t - \tilde{q}_t}_{\nabla^2 \psi_t(\xi)} \norm{\hatl_t}_{\nabla^{-2} \psi_t(\xi)},
	\end{split}
	\]
	where the second step is by the optimality of $q_t$ and the last step is by \Holder's inequality.
	Combining the lower and upper bounds we arrive at:
	\[
	\inner{q_t,\hatl_t} + F_t(q_t) - G_t(\tilde{q}_t) \leq 2 \norm{\hatl_t}^2_{\nabla^{-2} \psi_t(\xi)}.
	\]
	Moreover, by \pref{lem:smooth_update}, we know that $\tilde{q}_t$ satisfies $\frac{1}{2} q_t(s,a) \leq \tilde{q}_t(s,a) \leq 2q_t(s,a)$ for all state-action pairs. Since $\xi$ is a middle point between $q_t$ and $\tilde{q}_t$, it satisfies $\frac{1}{2} q_t(s,a) \leq \xi(s,a) \leq 2q_t(s,a)$ for all state-action pairs as well. 
	According to \pref{lem:close_hessian}, we then have $\norm{\hatl_t}^2_{\nabla^{-2} \psi_t(\xi)} \leq 4\norm{\hatl_t}^2_{\nabla^{-2} \psi_t(q_t)}$,
	which completes the proof.
\end{proof}

Next, we bound the second term in \pref{eq:simple_decomposition}.
\begin{lemma}
	For any $u\in \Omega$, we have (with $1/\eta_0 \triangleq 0$)
	\[
	\begin{split}
	\sum_{t=1}^{T} \rbr{ G_t(\tilde{q}_t) - F_t(q_t) - \inner{u, \hatl_t} }  
 = \phi_{L}(u) - \phi_{L}(q_1) + \sum_{t=1}^{T}\rbr{\frac{1}{\eta_t} - \frac{1}{\eta_{t-1}} }\rbr{\phi_{H}(u) - \phi_{H}(q_t)}.
	\end{split}
	\]
	\label{lem:bound_penalty}
\end{lemma}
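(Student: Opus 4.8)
The plan is to evaluate the left-hand side by a telescoping computation that rests on two elementary identities linking the functions in \pref{eq:tilde_q_def}. Since $G_t$ and $F_{t+1}$ share the same linear part $\inner{q, \hatL_t}$ and their regularizers $\psi_t$ and $\psi_{t+1}$ differ only through the learning-rate weight on $\phi_H$ (the $\phi_L$ part of $\psi_t$ being independent of $t$), one has $F_{t+1}(q) = G_t(q) + \rbr{\frac{1}{\eta_{t+1}} - \frac{1}{\eta_t}}\phi_H(q)$ for every $q$. I will also use $\hatL_t = \hatL_{t-1} + \hatl_t$ together with $\hatL_0 = 0$, so that $\sum_{t=1}^T \inner{u, \hatl_t}$ collapses to $\inner{u, \hatL_T}$, and the convention $1/\eta_0 \triangleq 0$ makes $\frac{1}{\eta_1}$ the first increment of the weight schedule.

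First I would reorganize $\sum_{t=1}^T \rbr{G_t(\tilde{q}_t) - F_t(q_t)}$ by shifting the index on the $F$-terms, writing it as $G_T(\tilde{q}_T) - F_1(q_1) + \sum_{t=1}^{T-1}\rbr{G_t(\tilde{q}_t) - F_{t+1}(q_{t+1})}$. The boundary term $F_1(q_1)$ equals $\psi_1(q_1) = \frac{1}{\eta_1}\phi_H(q_1) + \phi_L(q_1)$ because $\hatL_0 = 0$; this is the origin of the $-\phi_L(q_1)$ term and of the $t=1$ summand of the $\phi_H$ telescope. For each interior index I substitute the linking identity into $G_t(\tilde{q}_t) - F_{t+1}(q_{t+1})$ and invoke the defining optimality of $\tilde{q}_t = \argmin_{q\in\Omega} G_t(q)$, comparing $G_t(\tilde{q}_t)$ against $G_t(q_{t+1})$; this turns each such difference into the weight increment $\rbr{\frac{1}{\eta_t} - \frac{1}{\eta_{t+1}}}\phi_H(q_{t+1})$, and after reindexing $s = t+1$ these assemble into $-\sum_{s\geq 2}\rbr{\frac{1}{\eta_s} - \frac{1}{\eta_{s-1}}}\phi_H(q_s)$, matching the iterate-dependent part of the target.

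It then remains to handle the leftover $G_T(\tilde{q}_T) - \inner{u, \hatL_T}$. Using the optimality of $\tilde{q}_T$ against the feasible comparator $u \in \Omega$ together with $G_T(q) = \inner{q, \hatL_T} + \psi_T(q)$, this reduces to $\psi_T(u) = \frac{1}{\eta_T}\phi_H(u) + \phi_L(u)$; expanding $\frac{1}{\eta_T} = \sum_{t=1}^T\rbr{\frac{1}{\eta_t} - \frac{1}{\eta_{t-1}}}$ reconstitutes the $\phi_H(u)$ part of the telescope and the free $\phi_L(u)$. Collecting the $\phi_L$ contributions into $\phi_L(u) - \phi_L(q_1)$ and the $\phi_H$ contributions into $\sum_{t}\rbr{\frac{1}{\eta_t} - \frac{1}{\eta_{t-1}}}\rbr{\phi_H(u) - \phi_H(q_t)}$ gives exactly the claimed right-hand side.

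The main obstacle is the time-varying learning rate. With a fixed regularizer one would have $\tilde{q}_t = q_{t+1}$, the pairing of $G_t(\tilde{q}_t)$ with $F_{t+1}(q_{t+1})$ would telescope in a single line, and no $\phi_H$-weight increments would appear; the decreasing schedule $\eta_t = \gamma/\sqrt{t}$ destroys this coincidence, so the delicate part is to show that the increments $\rbr{\frac{1}{\eta_t} - \frac{1}{\eta_{t-1}}}\phi_H$ generated at every step — at the two boundaries $t=1$ and $t=T$ and throughout the interior — reassemble precisely, with the $\phi_H(u)$ and $\phi_H(q_t)$ coefficients aligning and no residual cross terms surviving. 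A secondary point requiring justification is that the optimality of $\tilde{q}_t$ may be applied both against the next iterate $q_{t+1}$ and against the comparator $u$, both of which lie in the convex set $\Omega$ (\pref{lem:convex_combine}); this feasibility is exactly what lets the computation close into the stated expression.
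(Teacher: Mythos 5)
Your proposal is correct and takes essentially the same route as the paper's own proof: the same index shift pairing $G_t(\tilde{q}_t)$ with $F_{t+1}(q_{t+1})$, the same two optimality comparisons ($G_t(\tilde{q}_t) \leq G_t(q_{t+1})$ for $t<T$ and $G_T(\tilde{q}_T) \leq G_T(u)$ with $u \in \Omega$ feasible), the same linking identity $F_{t+1}(q) - G_t(q) = \bigl(\tfrac{1}{\eta_{t+1}} - \tfrac{1}{\eta_t}\bigr)\phi_H(q)$, and the same handling of the boundary terms $F_1(q_1) = \psi_1(q_1)$ and $\psi_T(u)$ via $1/\eta_0 \triangleq 0$. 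One small caution: since both optimality steps are one-sided, your computation (exactly like the paper's) establishes the relation with ``$\leq$'' rather than the displayed ``$=$'' --- the equality sign in the lemma statement is evidently a typo, and the upper bound is all that is used downstream, so you should present the conclusion as an inequality rather than an exact evaluation of the left-hand side.
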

\begin{proof} Due to the optimality of $\tilde{q}_t$, we have $G_t(\tilde{q}_t) \leq G_t(q_{t+1})$ and also $G_T(\tilde{q}_T) \leq G_T(u)$.  With the help of these inequalities, we proceed as
	\[
	\begin{split}
	& \sum_{t=1}^{T} \rbr{ G_t(\tilde{q}_t) - F_t(q_t) - \inner{u, \hatl_t}} \\ 
	 &\leq \sum_{t=1}^{T} \rbr{ G_t(q_{t+1}) - F_t(q_t) - \inner{u, \hatl_t}} \\ 
	&=  - F_1(q_1) + \sum_{t=2}^{T} \rbr{ G_{t-1}(q_{t}) - F_t(q_t)} + G_T(u) - \inner{u, \hatL_T} \\
	&= - \psi_1(q_1) - \sum_{t=2}^{T}\rbr{\frac{1}{\eta_t} - \frac{1}{\eta_{t-1}} }\phi_{H}(q_t) + \psi_T(u) \\
   &= - \phi_{H}(q_1) -  \phi_{L}(q_1) - \sum_{t=2}^{T}\rbr{\frac{1}{\eta_t} - \frac{1}{\eta_{t-1}} }\phi_{H}(q_t) + \frac{\phi_{H}(u)}{\eta_T} + \phi_{L}(u) \\
	&= \phi_{L}(u)  - \phi_{L}(q_1) + \sum_{t=1}^{T} \rbr{\frac{1}{\eta_t} - \frac{1}{\eta_{t-1}} }\rbr{\phi_{H}(u) - \phi_{H}(q_t)}, 
	\end{split}
	\]
	finishing the proof.
\end{proof}


While it is tempting to set $u = \opt$ to obtain a regret bound, note that $\phi_{L}(\opt)$ can potentially grow to infinity. To this end, we will set $u$ as some point close enough to $\opt$, specified in the following lemma.

\begin{lemma}\label{lem:proper_comparator} 
Suppose $\paralog = 64L$. Let $v = \rbr{1 - \frac{1}{T}}\opt +  \frac{q_{1}}{T}$, where $\opt$ is the optimal occupancy measure and $q_1 =  \argmin_{q\in\Omega}  \psi_1(q)$ is the initial occupancy measure.
Then $v$ satisfies the following:
\begin{itemize}
	\item $\E\sbr{\sum_{t=1}^{T}\inner{v-\opt, \hatl_t}} \leq 2L$,
	\item $\phi_{L}(v)-\phi_{L}(q_1) \leq 64L|S||A|\log T$,
	\item $\phi_{H}(v) - \phi_{H}(\opt) \leq \frac{\rbr{1+\alpha} |S||A|}{T}.$
\end{itemize}	
\end{lemma}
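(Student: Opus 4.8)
The plan is to handle the three claims separately, exploiting in each that $v$ is a convex combination of $\opt$ and $q_1$ (hence $v\in\Omega$ by \pref{lem:convex_combine}) and that the deviation is small, namely $v - \opt = \frac{1}{T}\rbr{q_1 - \opt}$.

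For the first claim I would use this identity to write $\inner{v - \opt, \hatl_t} = \frac{1}{T}\inner{q_1 - \opt, \hatl_t}$. Since $q_1$ is the deterministic initializer and $\opt$ is a fixed hindsight-optimal occupancy measure, both are independent of the randomness generating $\hatl_t$ within episode $t$; moreover $\hatl_t$ is well-defined (the log-barrier forces $q_t(s,a)>0$) and conditionally unbiased, $\E_t\sbr{\hatl_t(s,a)} = \ell_t(s,a)$. Taking total expectation thus gives $\E\sbr{\inner{v-\opt,\hatl_t}} = \frac{1}{T}\inner{q_1 - \opt, \ell_t}$. Because $\ell_t(s,a)\in[0,1]$ and $\sum_{s\in S_k,a}q(s,a)=1$ for each of the $L$ non-terminal layers, we have $\inner{q_1,\ell_t}\le L$ and $\inner{\opt,\ell_t}\ge 0$, so $\inner{q_1-\opt,\ell_t}\le L$. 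Summing over $t$ and cancelling the factor $1/T$ yields a bound of $L\le 2L$.

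For the second claim the key is the pointwise estimate $v(s,a) = \rbr{1-\tfrac1T}\opt(s,a) + \tfrac1T q_1(s,a) \ge \tfrac1T q_1(s,a)$, valid since $\opt(s,a)\ge 0$. Hence $\phi_{L}(v) - \phi_{L}(q_1) = \paralog\sum_{s\neq s_L,a}\log\frac{q_1(s,a)}{v(s,a)} \le \paralog\sum_{s\neq s_L,a}\log T \le \paralog|S||A|\log T = 64L|S||A|\log T$, using $\paralog = 64L$ and bounding the number of pairs by $|S||A|$. For the third claim I would invoke convexity of $\phi_{H}$ (from \pref{lem:hessian_derivaties}): $\phi_{H}(v)\le \rbr{1-\tfrac1T}\phi_{H}(\opt) + \tfrac1T\phi_{H}(q_1)$, so $\phi_{H}(v)-\phi_{H}(\opt)\le \tfrac1T\rbr{\phi_{H}(q_1)-\phi_{H}(\opt)}$. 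Since $\phi_{H}$ is a negated sum of square roots, $\phi_{H}(q_1)\le 0$, and $-\phi_{H}(\opt) = \sum_{s\neq s_L,a}\rbr{\sqrt{\opt(s,a)} + \alpha\sqrt{\opt(s)-\opt(s,a)}} \le (1+\alpha)|S||A|$ because each probability term is at most $1$; this gives $\phi_{H}(v)-\phi_{H}(\opt)\le \frac{(1+\alpha)|S||A|}{T}$.

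These computations are individually routine; the only point requiring care is the first claim, where I must justify that $\opt$ and $q_1$ are non-random so that the conditional unbiasedness of $\hatl_t$ transfers through the outer expectation, and that $\hatl_t$ is well-defined. Both follow from the oblivious-adversary assumption and the strict positivity of $q_t$ enforced by the log-barrier $\phi_L$.
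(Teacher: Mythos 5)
Your proof is correct. The first two claims follow essentially the same route as the paper: for the first, both arguments reduce to the unbiasedness of $\hatl_t$ and then bound $\inner{q_1-\opt,\ell_t}$ deterministically (the paper uses \Holder's inequality, $\norm{q_1-\opt}_1\norm{\sum_t\ell_t}_\infty \le 2LT$, whereas you drop the nonnegative term $\inner{\opt,\ell_t}$ and get the slightly tighter constant $L$; both suffice); for the second, both rest on the pointwise bound $v(s,a)\ge q_1(s,a)/T$, which you make explicit. For the third claim you take a genuinely different and cleaner route: you invoke the convexity of $\phi_H$ (established in \pref{lem:hessian_derivaties}) to get $\phi_H(v)-\phi_H(\opt)\le \tfrac1T(\phi_H(q_1)-\phi_H(\opt))$, then use $\phi_H(q_1)\le 0$ and $-\phi_H(\opt)\le(1+\alpha)|S||A|$. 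The paper instead computes $\phi_H(v)-\phi_H(\opt)$ directly, discards the $q_1$ contribution inside the square roots, and uses the elementary estimate $1-\sqrt{(T-1)/T}\le 1/T$. Your convexity argument is shorter and avoids the term-by-term manipulation, at the cost of leaning on the Hessian computation; the paper's calculation is self-contained but more laborious. Both yield the identical bound, and your attention to the well-definedness of $\hatl_t$ and the non-randomness of $q_1,\opt$ is appropriate.
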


\begin{proof}

The first statement is by direct calculation:
\[
\begin{split}
\E\sbr{\sum_{t=1}^{T}\inner{v-\opt, \hatl_t}} & = \frac{1}{T} \E\sbr{\sum_{t=1}^{T}\inner{q_1-\opt, \hatl_t}} \\
& = \frac{1}{T}\inner{q_1-\opt,\sum_{t=1}^{T}\ell_t} \\
& \leq \frac{1}{T} \norm{q_1-\opt}_1 \norm{\sum_{t=1}^{T}\ell_t}_{\infty} \leq \frac{2LT}{T} = 2L .
\end{split}
\]

The second statement directly uses the definition of $\phi_L$:
\[
\phi_{L}(v)-\phi_{L}(q_1) = 64L\sum_{s,a}\log\rbr{\frac{q_1(s,a)}{v(s,a)} } \leq 64L|S||A|\log(T).
\]

Finally, we verify the last statement: 
	\[
	\begin{split}
	& \phi_{H}(v) - \phi_{H}(\opt) \\
	= & \sum_{s,a} \rbr{ \sqrt{\opt(s,a)}  + \alpha \sqrt{\opt(s)-\opt(s,a)} - \sqrt{v(s,a)} - \alpha \sqrt{v(s)-v(s,a)} } \\
	=  & \sum_{s,a} \rbr{  \sqrt{\opt(s,a)}   - \sqrt{\frac{T-1}{T}\opt(s,a) + \frac{q_1(s,a) }{T} } } \\
	& +  \sum_{s,a} \alpha  \rbr{  \sqrt{\opt(s)-\opt(s,a)}- \sqrt{\frac{T-1}{T}(\opt(s)-\opt(s,a)) + \frac{q_1(s)-q_1(s,a) }{T} } } \\
	\leq & \rbr{1 - \sqrt{\frac{T-1}{T}} }\sum_{s,a} \rbr{ \sqrt{\opt(s,a)}  +  \alpha  \sqrt{\opt(s)-\opt(s,a)} } \\ 
	= & \frac{\rbr{\sqrt{T}-\sqrt{T-1}}\rbr{\sqrt{T} +\sqrt{T-1}}}{\sqrt{T}\rbr{\sqrt{T} +\sqrt{T-1}}} \rbr{\rbr{1+\alpha} |S||A|} \\
	\leq & \frac{\rbr{1+\alpha} |S||A|}{T}.
	\end{split}
	\]
	
\end{proof}

Finally, we are ready to prove \pref{lem:key_lemma_ftrl}.
\begin{proof}[Proof of \pref{lem:key_lemma_ftrl}]
\label{prf:proof_ftrl}

By combining the \pref{lem:bound_penalty} and~\ref{lem:bound_stability} and taking expectation on both sides, we have that $\E\sbr{\sum_{t=1}^{T} \inner{q_t - u, \hatl_t}}$ is bounded by
\[
\phi_{L}(u)  - \phi_{L}(q_1) + \sum_{t=1}^{T} \rbr{\frac{1}{\eta_t} - \frac{1}{\eta_{t-1}} }\E\sbr{\phi_{H}(u) - \phi_{H}(q_t)} + 	8\sum_{t=1}^{T}\E\sbr{\norm{\hatl_t}^2_{\nabla^{-2}\psi_t(q_t)}}.
\]

Picking the intermediate occupancy measure $v= \rbr{1-\frac{1}{T}}\opt + \frac{1}{T}q_1$, by \pref{lem:proper_comparator}, we have 
\[
\begin{split}
\Reg_T & = \E\sbr{\sum_{t=1}^{T} \inner{q_t - \opt, \hatl_t}} \\
& = \E\sbr{\sum_{t=1}^{T} \inner{q_t - v, \hatl_t}} + \E\sbr{\sum_{t=1}^{T} \inner{v-\opt, \hatl_t}} \\
& \leq 2L + 64L|S||A|\log T + \sum_{t=1}^{T} \rbr{\frac{1}{\eta_t} - \frac{1}{\eta_{t-1}} }\E\sbr{\phi_{H}(v) - \phi_{H}(q_t)} + 8\sum_{t=1}^{T}\E\sbr{\norm{\hatl_t}^2_{\nabla^{-2}\psi_t(q_t)}} \\
& \leq  2L + 64L|S||A|\log T  +  8\sum_{t=1}^{T}\E\sbr{\norm{\hatl_t}^2_{\nabla^{-2}\psi_t(q_t)} } \\
& \; \; + \rbr{\frac{1}{\eta_T} - \frac{1}{\eta_{0}} }\E\sbr{\phi_{H}(v) - \phi_{H}(\opt)} +  \sum_{t=1}^{T} \rbr{\frac{1}{\eta_t} - \frac{1}{\eta_{t-1}} }\E\sbr{\phi_{H}(\opt) - \phi_{H}(q_t)} \\
& \leq \order\rbr{L|S||A|\log T} + \sum_{t=1}^{T} \rbr{\frac{1}{\eta_t} - \frac{1}{\eta_{t-1}} }\E\sbr{\phi_{H}(\opt) - \phi_{H}(q_t)} + 8\sum_{t=1}^{T}\E\sbr{\norm{\hatl_t}^2_{\nabla^{-2}\psi_t(q_t)}}
\end{split}
\]
where the last line follows from the fact $\frac{(1+\alpha)|S||A|}{\gamma\sqrt{T}} = \order\rbr{1}$. 
\end{proof}

\section{Proof of \pref{lem:key_lemma_bound_penalty}}
\label{app:appendix_proof_penalty}
Recall the notation defined in Section~\ref{sec:analysis}: 
\[
h_s(\pi)= \sum_{a\in A}\sqrt{\pi(a|s)}+\alpha\sqrt{1-\pi(a|s)}.
\]

Clearly, for any mapping (deterministic policy) $\pi : S \rightarrow A$, we have $h_s(\pi) = 1 + \alpha(|A|-1)$ for all state $s$. Therefore, we can decompose $\phi_{H}(\opt) - \phi_{H}(q_t)$ as
\[
\begin{split}
\phi_{H}(\opt) - \phi_{H}(q_t) & = \sum_{s \neq s_L}\sqrt{q_t(s)}h_s(\pi_t) - \sum_{s \neq s_L}\sqrt{\opt(s)}h_s(\optpi) \\
& = \sum_{s \neq s_L}\sqrt{q_t(s)}\rbr{h_s(\pi_t)-h_s(\optpi)} + \rbr{1 + \alpha(|A|-1)}\sum_{s \neq s_L}\rbr{\sqrt{q_t(s)} - \sqrt{\opt(s)}}. 
\end{split}
\]
In the rest of this section, we first bound the term $\sum_{s \neq s_L}\sqrt{q_t(s)}\rbr{h_s(\pi_t)-h_s(\optpi)}$ in \pref{lem:penalty_term_1}, and then show that the term $\sum_{s \neq s_L}\rbr{\sqrt{q_t(s)} - \sqrt{\opt(s)}}$ can be bounded as 
\[
\begin{split}
\sum_{s \neq s_L}\rbr{\sqrt{q_t(s)} - \sqrt{\opt(s)}} & = \sum_{s \neq s_L}\rbr{\sqrt{q_t(s)} - \sqrt{q^\pi(s)}} +  \sum_{s \neq s_L}\rbr{\sqrt{q^\pi(s)} - \sqrt{\opt(s)}} \\
& \leq \sqrt{|S|L}\rbr{\sqrt{\sum_{s \neq s_L}\sum_{a\neq \pi(s)}q_t(s,a)} + \sqrt{\sum_{s \neq s_L}\sum_{a\neq \pi(s)}\opt(s,a)}} \\
& \leq 2\sqrt{|S|L\sum_{s \neq s_L}\sum_{a\neq \pi(s)}q_t(s,a)+\opt(s,a)}
\end{split} 
\]
for any mapping $\pi$,
where the second line follows \pref{lem:sqrt_diff} and \pref{lem:sqrt_diff_2}, in which we apply the key induction \pref{lem:induction_state_reach_prob} based on the \textit{state reach probability} defined in \pref{lem:state_reach_probabilty}.

\begin{lemma} For any policy $\pi_1$ and mapping $\pi_2: S \rightarrow A$, we have
	\label{lem:penalty_term_1}
	\[
	\sum_{s\neq s_L}\sqrt{q_1(s)} \rbr{h_s(\pi_1) -  h_s(\pi_2)}  \leq  (1+\alpha) \sum_{s\neq s_L}\sum_{a \neq \pi(s)}\sqrt{q_1(s,a)},
	\]
	 where $\pi$ is any mapping from $S$ to $A$ and $q_1 = q^{\pi_1}$.
\end{lemma}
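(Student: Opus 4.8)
The plan is to reduce the claimed inequality to a per-state statement and prove it for each state in isolation. As already observed just before the lemma, for any deterministic mapping $\pi_2$ we have $h_s(\pi_2) = 1 + \alpha(|A|-1)$ at every state, so the left-hand side does not in fact depend on the particular $\pi_2$. Fixing a state $s\neq s_L$ with $q_1(s)>0$ (if $q_1(s)=0$ both sides of the per-state summand vanish) and writing $p_a = \pi_1(a|s)$ and $a^\star = \pi(s)$, it therefore suffices to establish the scalar inequality
\[
h_s(\pi_1) - 1 - \alpha(|A|-1) \leq (1+\alpha)\sum_{a\neq a^\star}\sqrt{p_a},
\]
since multiplying through by $\sqrt{q_1(s)}$ and using $\sqrt{q_1(s)}\sqrt{p_a} = \sqrt{q_1(s)\pi_1(a|s)} = \sqrt{q_1(s,a)}$ recovers exactly the per-state summands on both sides, and summing over $s\neq s_L$ then yields the lemma.

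To prove the scalar inequality I would split $h_s(\pi_1) = \sum_a\sqrt{p_a} + \alpha\sum_a\sqrt{1-p_a}$ and match the two sums against the two pieces of the constant $1 + \alpha(|A|-1)$ separately. For the first sum, pulling out the $a^\star$ term and using $\sqrt{p_{a^\star}}\le 1$ gives $\sum_a\sqrt{p_a} - 1 \le \sum_{a\neq a^\star}\sqrt{p_a}$. For the second sum, there are exactly $|A|-1$ indices with $a\neq a^\star$, so the trivial bound $\sqrt{1-p_a}\le 1$ applied to each of them yields $\alpha\sum_a\sqrt{1-p_a} - \alpha(|A|-1) = \alpha\sqrt{1-p_{a^\star}} + \alpha\sum_{a\neq a^\star}\rbr{\sqrt{1-p_a}-1} \le \alpha\sqrt{1-p_{a^\star}}$.

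The one genuinely substantive step — and the only place where structure is used — is controlling the residual $\alpha\sqrt{1-p_{a^\star}}$ by the right-hand side. Here I would use $1-p_{a^\star} = \sum_{a\neq a^\star}p_a$ together with subadditivity of the square root (a consequence of its concavity and $\sqrt{0}=0$), namely $\sqrt{\sum_{a\neq a^\star}p_a}\le \sum_{a\neq a^\star}\sqrt{p_a}$, to conclude $\alpha\sqrt{1-p_{a^\star}}\le \alpha\sum_{a\neq a^\star}\sqrt{p_a}$. Adding this to the first-sum bound gives precisely $(1+\alpha)\sum_{a\neq a^\star}\sqrt{p_a}$. I expect this subadditivity step to be the crux of the argument: the naive route of isolating $a^\star$ in the complementary Tsallis term and trying to show $\sqrt{p_{a^\star}} + \alpha\sqrt{1-p_{a^\star}} - 1 \le 0$ fails, since that quantity can be strictly positive (it peaks at $\sqrt{1+\alpha^2}-1$ when $p_{a^\star}=1/(1+\alpha^2)$); the fix is to keep the crude bound $\sqrt{1-p_a}\le 1$ on the off-diagonal terms and spend subadditivity only on the single surviving $a^\star$ term. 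Finally, since nothing in the derivation constrains the choice of $a^\star = \pi(s)$, the inequality holds for every mapping $\pi$ simultaneously, exactly as asserted.
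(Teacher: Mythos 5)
Your proof is correct and follows essentially the same route as the paper's: both reduce to the per-state inequality using $h_s(\pi_2)=1+\alpha(|A|-1)$, drop the nonpositive terms $\sqrt{p_{a^\star}}-1$ and $\alpha(\sqrt{1-p_a}-1)$ for $a\neq a^\star$, and handle the surviving $\alpha\sqrt{1-p_{a^\star}}$ via $\sqrt{\sum_{a\neq a^\star}p_a}\le\sum_{a\neq a^\star}\sqrt{p_a}$ before multiplying by $\sqrt{q_1(s)}$ and summing. The only difference is cosmetic grouping of terms.
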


\begin{proof}
Direct calculation shows:
\begin{align*}
&h_s(\pi_1) -  h_s(\pi_2) \\
&= h_s(\pi_1)  - 1 - \alpha(|A|-1) \\
&= \sqrt{\pi_1(\pi(s)|s)}+\alpha\sqrt{1-\pi_1(\pi(s)|s)} - 1 + \sum_{a \neq \pi(s)} \rbr{\sqrt{\pi_1(a|s)}+\alpha\sqrt{1-\pi_1(a|s)} - \alpha} \\
&\leq \alpha\sqrt{1-\pi_1(\pi(s)|s)} +  \sum_{a \neq \pi(s)}\sqrt{\pi_1(a|s)} \\
&= \alpha\sqrt{\sum_{a \neq \pi(s)} \pi_1(a|s)} +  \sum_{a \neq \pi(s)}\sqrt{\pi_1(a|s)} \\
&\leq (\alpha + 1)  \sum_{a \neq \pi(s)}\sqrt{\pi_1(a|s)}.
\end{align*}
Multiplying both sides by $\sqrt{q_1(s)}$ and summing over $s$ prove the lemma.
%
%
%
%
\end{proof} 

\begin{lemma}
\label{lem:state_reach_probabilty} 
For any policy $\pi$, define its associated reachability probability $p^\pi: S \times A \times S \rightarrow [0,1]$ as
\begin{equation}\label{eq:reachability}
p^\pi(s'|s,a) = 
\begin{cases}
0, & \text{if } k(s') \leq k(s), \\
P(s'|s,a), & \text{if }  k(s') = k(s) + 1, \\ 
\sum_{s_{m}\in S_{k(s')-1}} p^\pi(s_{m}|s,a)\sum_{a}\pi(a|s_{m})P(s'|s_{m},a), & \text{if } k(s') > k(s) + 1,
\end{cases}
\end{equation}
which is simply the probability of reaching state $s'$ after taking action $a$ at state $s$ and then following policy $\pi$.
For any state-action pair $(s,a)$, policy $\pi$, and $k = 0,\ldots, L-1$, we have 
\begin{equation}
\label{eq:state_reach_prob_bound}
\sum_{s'\in S_k} p^\pi(s'|s,a) \leq 1,
\end{equation}
which implies $\sum_{s \neq s_L}p^\pi(s'|s,a)\leq L$.
\end{lemma}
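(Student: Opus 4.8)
The plan is to prove the bound $\sum_{s'\in S_k} p^\pi(s'|s,a) \leq 1$ by induction on the layer index $k$, exploiting the recursive definition in \pref{eq:reachability}. Fix a state-action pair $(s,a)$ with $s \in S_j$ for some $j$, and fix a policy $\pi$. The claim \pref{eq:state_reach_prob_bound} is interesting only when $k = k(s') > k(s) = j$, since otherwise $p^\pi(s'|s,a) = 0$ and the sum is trivially zero. So I would set up the induction over the layers strictly above $j$.

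For the base case $k = j+1$, the definition gives $p^\pi(s'|s,a) = P(s'|s,a)$, and since $P(\cdot|s,a)$ is a probability distribution over the next layer, $\sum_{s'\in S_{j+1}} P(s'|s,a) = 1$, which gives the bound (in fact with equality). For the inductive step, suppose $\sum_{s_m \in S_{k-1}} p^\pi(s_m|s,a) \leq 1$ for some $k > j+1$. Using the recursive case of \pref{eq:reachability}, I would write
\[
\sum_{s'\in S_k} p^\pi(s'|s,a) = \sum_{s'\in S_k}\sum_{s_m\in S_{k-1}} p^\pi(s_m|s,a) \sum_{b\in A}\pi(b|s_m)P(s'|s_m,b).
\]
Then I would swap the order of summation to sum over $s'$ first. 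For fixed $s_m$ and $b$, $\sum_{s'\in S_k} P(s'|s_m,b) = 1$ because transitions from layer $k-1$ land in layer $k$ with total probability one. Next $\sum_{b\in A}\pi(b|s_m) = 1$ since $\pi(\cdot|s_m)$ is a distribution over actions. This collapses the inner double sum to $1$, leaving exactly $\sum_{s_m\in S_{k-1}} p^\pi(s_m|s,a)$, which is at most $1$ by the induction hypothesis. This completes the induction and establishes \pref{eq:state_reach_prob_bound}.

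The final implication $\sum_{s\neq s_L} p^\pi(s'|s,a) \leq L$ then follows by summing \pref{eq:state_reach_prob_bound} over the layers $k = 0, \ldots, L-1$: since the states $s' \neq s_L$ are partitioned into the $L$ layers $S_0,\ldots,S_{L-1}$ and each layer contributes at most $1$, the total is at most $L$. I expect no serious obstacle here; the only point requiring care is bookkeeping the interchange of summations and being explicit that the normalization of both $P(\cdot|s_m,b)$ over the next layer and $\pi(\cdot|s_m)$ over actions is what drives the telescoping. The layered structure of the MDP is exactly what guarantees these sums normalize cleanly layer by layer, so I would make sure to invoke it when claiming $\sum_{s'\in S_k}P(s'|s_m,b)=1$.
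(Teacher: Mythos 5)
Your proposal is correct and follows essentially the same route as the paper's proof: induction over layers with base case $k = k(s)+1$ using the normalization of $P(\cdot|s,a)$, and an inductive step that swaps summations so that $\sum_{s'}P(s'|s_m,b)=1$ and $\sum_b \pi(b|s_m)=1$ collapse the inner sums. The final summation over the $L$ layers to get the bound $L$ also matches the paper.
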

\begin{proof} 
\pref{eq:state_reach_prob_bound} is clear just based on the definition of $p^\pi$.
We provide a proof by induction for completeness.
Clearly, it holds for all layer $l$ with $l\leq k(s)$ and also $l = k(s)+1$ where 
$
\sum_{s'\in S_l} p^\pi(s'|s,a) = \sum_{s'\in S_l}P(s'|s,a) = 1. 
$
Now assume that \pref{eq:state_reach_prob_bound} holds for some layer $l\geq k(s) + 1$. 
For layer $l+1$, we have 
\[
\begin{split}
& \sum_{s'\in S_{l+1}} p^\pi(s'|s,a) \\
& = \sum_{s'\in S_{l+1}} \rbr{\sum_{s_{m}\in S_l} p^\pi(s_{m}|s,a)\sum_{a}\pi(a|s_{m})P(s'|s_{m},a) } \\ 
& = \sum_{s_{m}\in S_l}p^\pi(s_{m}|s,a) \rbr{\sum_{s'\in S_{l+1}}\sum_{a}\pi(a|s_{m})P(s'|s_{m},a) } \\
& = \sum_{s_{m}\in S_l}p^\pi(s_{m}|s,a) \leq 1,
\end{split}
\]	
finishing the proof.
\end{proof}

Based on the concept of reachability probability, we prove the following key induction lemma. 

\begin{lemma} \label{lem:induction_state_reach_prob} If a policy $\pi$ and non-negative functions $f: S\times A\rightarrow \fR_+ \cup \{0\}$ and $g: S\times A \rightarrow \fR_+ \cup \{0\}$ satisfy
\[
f(s) \leq \sum_{s' \in S_{k(s)-1}} \sum_{a'\in A} g(s',a') P(s|s',a') + \sum_{s' \in S_{k(s)-1}}f(s')\cbr{\sum_{a' \in A}\pi(a'|s')P(s|s',a')}, \forall s \neq s_0 
\]
and $f(s_0) = 0$, then we have for all $s\neq s_0$,
\begin{equation}
\label{eq:induction_ineq}
f(s) \leq \sum_{k=0}^{k(s)-1}\sum_{s'\in S_k}\sum_{a'} g(s',a') p^\pi (s|s',a'),
\end{equation}
where $p^\pi$ is the  reachability probability defined in \pref{eq:reachability}.
\end{lemma}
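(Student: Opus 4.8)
The plan is to establish \pref{eq:induction_ineq} by induction on the layer index $k(s)$, peeling off one layer of the assumed recursion for $f$ at a time and matching the resulting inner sums against the recursive definition \pref{eq:reachability} of the reachability probability $p^\pi$. Throughout I will repeatedly use that for a one-step transition, i.e.\ when $k(s) = k(s')+1$, the middle case of \pref{eq:reachability} gives $p^\pi(s|s',a') = P(s|s',a')$.

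For the base case take $s \in S_1$, so $k(s)=1$. The only predecessor layer is $S_0 = \{s_0\}$, and since $f(s_0)=0$ by assumption the recursion for $f$ collapses to $f(s) \leq \sum_{a'} g(s_0,a') P(s|s_0,a')$. As $k(s)=k(s_0)+1$, this right-hand side equals $\sum_{s'\in S_0}\sum_{a'} g(s',a')\,p^\pi(s|s',a')$, which is exactly the single $k=0$ slice of \pref{eq:induction_ineq}, establishing the claim in layer $1$.

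For the inductive step, suppose \pref{eq:induction_ineq} holds for every state in layer $k(s)-1$. Starting from the assumed recursion for $f(s)$, I would substitute the inductive hypothesis for each $f(s')$ with $s'\in S_{k(s)-1}$ (recalling $k(s')=k(s)-1$). This expresses the upper bound on $f(s)$ as two groups of terms. The first group, $\sum_{s'\in S_{k(s)-1}}\sum_{a'} g(s',a')P(s|s',a')$, coincides with the $k=k(s)-1$ slice of the target sum, again using $p^\pi(s|s',a')=P(s|s',a')$ for the one-step case. The second group, after bringing the sum over the source pairs $(s'',a'')$ in layers $k \leq k(s)-2$ to the outside, carries for each such $(s'',a'')$ the inner factor $\sum_{s'\in S_{k(s)-1}} p^\pi(s'|s'',a'')\sum_{a'}\pi(a'|s')P(s|s',a')$.

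The crux of the argument, and the step I expect to require the most care, is recognizing that this inner factor is precisely the recursive branch of \pref{eq:reachability}: because $s''$ lies in a layer with $k(s'')\leq k(s)-2$, we have $k(s)>k(s'')+1$, so the definition yields $p^\pi(s|s'',a'') = \sum_{s_m\in S_{k(s)-1}} p^\pi(s_m|s'',a'')\sum_{a}\pi(a|s_m)P(s|s_m,a)$, which matches the inner factor term for term upon renaming $s_m=s'$. Hence the second group collapses to $\sum_{k=0}^{k(s)-2}\sum_{s''\in S_k}\sum_{a''} g(s'',a'')\,p^\pi(s|s'',a'')$, and adding the first group produces exactly $\sum_{k=0}^{k(s)-1}\sum_{s'\in S_k}\sum_{a'} g(s',a')\,p^\pi(s|s',a')$, closing the induction. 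The only bookkeeping subtleties are that the second sum is empty when $k(s)=1$ (reducing to the base case) and that the layer inequality $k(s)>k(s'')+1$ needed to invoke the recursive branch indeed holds since $k(s'')\leq k(s)-2$; non-negativity of $f$ and $g$ guarantees all substitutions preserve the inequality direction.
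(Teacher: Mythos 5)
Your proposal is correct and follows essentially the same route as the paper's proof: induction over layers, substituting the inductive hypothesis for $f(s')$ in the assumed recursion, and recognizing the resulting inner sum $\sum_{s'\in S_{k(s)-1}} p^\pi(s'|s'',a'')\sum_{a'}\pi(a'|s')P(s|s',a')$ as the recursive branch of \pref{eq:reachability}. The bookkeeping points you flag (the one-step case $p^\pi=P$, the empty second group at $k(s)=1$, and non-negativity of the transition/policy coefficients preserving the inequality under substitution) are exactly the ones the paper's argument relies on.
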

\begin{proof} 
We prove the statement by induction. 
First, for $s \in S_1$, using the condition of the lemma we have
\[
\begin{split}
f(s) & \leq \sum_{s' \in S_{0}} \sum_{a'} g(s',a') P(s|s',a') + \sum_{s' \in S_{0}}f(s')\cbr{\sum_{a'}\pi(a'|s')P(s|s',a')} \\
& = \sum_{s' \in S_{0}} \sum_{a'} g(s',a') p^\pi(s|s',a') \\
\end{split}
\]
where the second line follows from the fact $f(s_0)=0$ and $p^\pi(s|s',a')=P(s|s',a')$ for $k(s)=k(s')+1$. 
This proves the base case.
	
Now assume that \pref{eq:induction_ineq} holds for all states in layer $l>0$. For $s \in S_{l+1}$, we have 
\[
\begin{split}
f(s) & \leq \sum_{s' \in S_{l}} \sum_{a'} g(s',a') P(s|s',a') + \sum_{s' \in S_l}f(s')\cbr{\sum_{a'}\pi(a'|s')P(s|s',a')} \\
& \leq \sum_{s' \in S_{l}} \sum_{a'} g(s',a') P(s|s',a') \\ 
& \; \; + \sum_{s' \in S_l} \rbr{\sum_{k=0}^{l-1}\sum_{s''\in S_k}\sum_{a''} g(s'',a'') p^\pi(s'|s'',a'')}\rbr{\sum_{a'}\pi(a'|s')P(s|s',a')} \\ 
& = \sum_{s' \in S_{l}} \sum_{a'} g(s',a') p^\pi(s|s',a') \\ 
& \; \; + \sum_{k=0}^{l-1}\sum_{s''\in S_k}\sum_{a''} g(s'',a'') \rbr{\sum_{s' \in S_l}p^\pi(s'|s'',a'') \sum_{a'}\pi(a'|s')P(s|s',a')} \\ 
&= \sum_{s' \in S_{l}} \sum_{a'} g(s',a') p^\pi(s|s',a') + \sum_{k=0}^{l-1}\sum_{s''\in S_k}\sum_{a''} g(s'',a'') p^\pi(s|s'',a'') \\ 
& = \sum_{k=0}^{l}\sum_{s'\in S_k} \sum_{a'} g(s',a') p^\pi(s|s',a'),
\end{split} 
\]
where the second step uses the induction hypothesis and the fourth step uses the definition of the reachability probability.
This finishes the induction.
\end{proof}

We now apply the induction lemma to prove the following two key lemmas.
\begin{lemma}
	\label{lem:sqrt_diff}
	For any policy $\pi_1$ and mapping $\pi_2: S \rightarrow A$, we have
	\begin{equation}
	\sum_{s\neq s_L }\sqrt{q_1(s)} - \sqrt{q_2(s)} 
	\leq \sqrt{|S|L} \sqrt{\sum_{s\neq s_L}\sum_{a \neq \pi_2(s)} q_1(s,a)}
	\end{equation}
	where we denote by $q_1 = q^\pi_1$ and $q_2 = q^\pi_2$ the  occupancy measures of $\pi_1$ and $\pi_2$ respectively. 
\end{lemma}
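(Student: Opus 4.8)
The plan is to reduce the claimed bound on differences of square roots to a bound on the differences $\left(q_1(s)-q_2(s)\right)_+:=\max\{0,q_1(s)-q_2(s)\}$ of the raw state marginals, and then to prove the latter via the induction lemma \pref{lem:induction_state_reach_prob}. For the reduction, I would first bound each summand by its positive part, and then use the elementary inequality $\sqrt{a}-\sqrt{b}\le\sqrt{a-b}$ (valid whenever $a\ge b\ge 0$) to get $\left(\sqrt{q_1(s)}-\sqrt{q_2(s)}\right)_+\le\sqrt{(q_1(s)-q_2(s))_+}$. Summing over the at most $|S|$ states $s\neq s_L$ and applying Cauchy--Schwarz then gives $\sum_{s\neq s_L}\left(\sqrt{q_1(s)}-\sqrt{q_2(s)}\right)\le\sqrt{|S|}\sqrt{\sum_{s\neq s_L}(q_1(s)-q_2(s))_+}$, so it suffices to establish $\sum_{s\neq s_L}(q_1(s)-q_2(s))_+\le L\sum_{s\neq s_L}\sum_{a\neq\pi_2(s)}q_1(s,a)$, after which the factors combine into $\sqrt{|S|L}$.

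The core of the argument is to set up a one-step recursion for $f(s):=(q_1(s)-q_2(s))_+$ so that \pref{lem:induction_state_reach_prob} applies with policy $\pi_2$ and source term $g(s',a'):=q_1(s',a')\Ind{a'\neq\pi_2(s')}$. The key computation exploits that $\pi_2$ is deterministic, so $q_2(s)=\sum_{s'\in S_{k(s)-1}}q_2(s')P(s|s',\pi_2(s'))$, while $q_1(s)=\sum_{s',a'}q_1(s',a')P(s|s',a')$. Splitting the latter sum according to whether $a'=\pi_2(s')$ and using $q_1(s',\pi_2(s'))=q_1(s')\pi_1(\pi_2(s')\mid s')\le q_1(s')$ produces exactly the hypothesis $f(s)\le\sum_{s',a'}g(s',a')P(s|s',a')+\sum_{s'}f(s')\sum_{a'}\pi_2(a'\mid s')P(s|s',a')$ of \pref{lem:induction_state_reach_prob}; the base case $f(s_0)=0$ holds since $q_1(s_0)=q_2(s_0)=1$, and both $f$ and $g$ are nonnegative as required.

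With the recursion in place, \pref{lem:induction_state_reach_prob} bounds $f(s)$ by $\sum_{k<k(s)}\sum_{s'\in S_k}\sum_{a'\neq\pi_2(s')}q_1(s',a')\,p^{\pi_2}(s|s',a')$. I would then sum over $s\neq s_L$, swap the order of summation, and invoke \pref{lem:state_reach_probabilty} in the form $\sum_{s\neq s_L}p^{\pi_2}(s|s',a')\le L$, which collapses the reachability factor into a clean $L$ and yields $\sum_{s\neq s_L}(q_1(s)-q_2(s))_+\le L\sum_{s\neq s_L}\sum_{a\neq\pi_2(s)}q_1(s,a)$. Combining this with the reduction from the first paragraph completes the proof.

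I expect the main obstacle to be the recursion step: correctly isolating the disagreement mass $\sum_{a'\neq\pi_2(s')}q_1(s',a')$ as the source $g$ while controlling the agreement contribution $q_1(s',\pi_2(s'))-q_2(s')$ by the propagated quantity $(q_1(s')-q_2(s'))_+$. The delicate points are that passing to positive parts is exactly what keeps $f$ and $g$ nonnegative (as \pref{lem:induction_state_reach_prob} demands), and that the deterministic structure of $\pi_2$ is essential for expressing $q_2(s)$ as a single-action pushforward. By contrast, the square-root-to-linear reduction and the final reachability bookkeeping via \pref{lem:state_reach_probabilty} should be routine.
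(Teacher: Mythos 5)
Your proposal is correct and follows essentially the same route as the paper: the same reduction via $\sqrt{a}-\sqrt{b}\le\sqrt{(a-b)_+}$ and Cauchy--Schwarz, the same choice of $f(s)=(q_1(s)-q_2(s))_+$ and $g(s,a)=\Ind{a\neq\pi_2(s)}q_1(s,a)$, the same invocation of \pref{lem:induction_state_reach_prob}, and the same use of \pref{lem:state_reach_probabilty} to collapse the reachability factor into $L$. The one genuine difference is that you propagate the recursion through $\pi_2$ (exploiting its determinism via $q_1(s',\pi_2(s'))\le q_1(s')$), whereas the paper propagates through $\pi_1$ and consequently needs a two-case analysis on the sign of $q_1(s')-q_2(s')$; your version is a mild simplification of that step, and both recursions are valid inputs to the induction lemma since it holds for an arbitrary policy.
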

\begin{proof}
   We first bound $\sqrt{q_1(s)} - \sqrt{q_2(s)}$ by $\sqrt{\mathbb{I}_{s} \rbr{q_1(s)-q_2(s)}} $ where $\mathbb{I}_{s} \triangleq \Ind{ q_1(s) \geq q_2(s)}$. 
	Define $f(s) = \mathbb{I}_{s}\rbr{q_1(s)-q_2(s)}$ and $g(s,a) = \Ind{a\neq \pi_2(s)}q_1(s,a)$. 
	Our goal is to prove for any $s\neq s_0$:
	\begin{equation}
	\label{eq:key_ineq_sqrt_diff}
	f(s) \leq \sum_{s' \in S_{k(s)-1}}\sum_{a' \in A}g(s',a')P(s|s',a') + \sum_{s' \in S_{k(s)-1}} f(s') \rbr{ \sum_{a'\in A}\pi_1(a'|s')P(s|s',a') }
	\end{equation}
	so that we can apply \pref{lem:induction_state_reach_prob} (clearly we have $f(s_0) = 0$).
	To prove \pref{eq:key_ineq_sqrt_diff}, consider a fixed state $s\in S_k$ for some $k>0$. We rewrite $q_1(s)-q_2(s)$ as 
	\[
	\begin{split}
	q_1(s) - q_2(s) & = \sum_{s' \in S_{k-1}}\sbr{q_1(s')\sum_{a'\in A}\pi_1(a'|s')P(s|s',a') - q_2(s')\sum_{a'\in A}\pi_2(s',a')P(s|s',a')}.
	\end{split}
	\]
	
	For state $s'\in S_{k-1}$ satisfying $q_1(s') \leq q_2(s')$, we have 
	\[
	\begin{split}
	& q_1(s')\sum_{a'\in A}\pi_1(a'|s')P(s|s',a') - q_2(s')\sum_{a'\in A}\pi_2(a'|s')P(s|s',a') \\
	& \leq q_1(s')\sum_{a'\in A}\pi_1(a'|s')P(s|s',a') - q_1(s')\sum_{a'\in A}\pi_2(a'|s')P(s|s',a') \\
	& = q_1(s')\rbr{\sum_{a'\in A}\rbr{\pi_1(a'|s')-\pi_2(a'|s')}P(s|s',a')} \\
	& \leq q_1(s')\rbr{\sum_{a'\neq \pi_2(s')}\rbr{\pi_1(a'|s')}P(s|s',a')} \\ 
	& = \sum_{a\neq \pi_2(s')}q_1(s',a')P(s|s',a').
	\end{split}
	\]
	where the forth line follows from the fact $\pi_1(a'|s') - \pi_2(a'|s')=\pi_1(a'|s') - 1\leq 0$ when $a'= \pi_2(s')$. 
	
	For the other states with $q_1(s') \geq q_2(s')$, we have
	\[
	\begin{split}
	& q_1(s')\sum_{a'\in A}\pi_1(a'|s')P(s|s',a') - q_2(s')\sum_{a'\in A}\pi_2(a'|s')P(s|s',a') \\
	& = \rbr{q_1(s')-q_2(s')}\sum_{a'\in A}\pi_1(a'|s')P(s|s',a') + q_2(s')\sum_{a'\in A}\rbr{\pi_1(a'|s') - \pi_2(a'|s')}P(s'|s',a) \\
	& \leq \rbr{q_1(s')-q_2(s')}\sum_{a'\in A}\pi_1(a'|s')P(s|s',a') + q_2(s')\sum_{a'\neq \pi_2(s')}\pi_1(a'|s') P(s'|s',a)  \\
	& \leq \mathbb{I}_{s'}\rbr{q_1(s')-q_2(s')}\sum_{a'\in A}\pi_1(a'|s')P(s|s',a') + \sum_{a'\neq \pi_2(s')}q_1(s',a') P(s'|s',a) 
	\end{split}
	\]
	where the last line is due to the condition $q_1(s')\geq q_2(s')$. 
	
	Combining these two cases together yields \pref{eq:key_ineq_sqrt_diff}. Therefore, applying \pref{lem:induction_state_reach_prob} gives
	\[
	\begin{split}
	\mathbb{I}_{s}\rbr{q_1(s)-q_2(s)} & \leq \sum_{k=0}^{k(s)-1}\sum_{s'\in S_k} \sum_{a'} g(s',a') p^{\pi_1}(s|s',a')\\
	& = \sum_{k=0}^{k(s)-1}\sum_{s'\in S_k} \sum_{a' \neq \pi_2(s')} q_1(s',a') p^{\pi_1}(s|s',a'),
	\end{split}
	\]
	for all $s \neq s_0$.
	Taking square root and summation over all states, we have 
	\[
	\sum_{s\neq s_L}\sqrt{\mathbb{I}_s\rbr{q_1(s) - q_2(s)}} 
	\leq \sum_{s\neq s_L} \sqrt{ \sum_{k = 0}^{k(s)-1} \sum_{s' \in S_k}   \sum_{a'\neq \pi_2(s')}q_1(s',a')p^{\pi_1}(s|s',a')}.
	\]
 Notice that
	\begin{align*}
	 \sum_{s\neq s_L} \sum_{k = 0}^{k(s)-1} \sum_{s' \in S_k}   \sum_{a'\neq \pi_2(s')}q_1(s',a')p^{\pi_1}(s|s',a') 
	&\leq \sum_{s' \neq s_L} \sum_{a'\neq \pi_2(s')}q_1(s',a') \rbr{\sum_{s\neq s_L}p^{\pi_1}(s|s',a')} \\
	&\leq L\sum_{s \neq s_L} \sum_{a\neq \pi_2(s)}q_1(s,a), 
	\end{align*}
	where the last step uses \pref{lem:state_reach_probabilty}.
	Finally by \Holder's inequality, we arrive at 
	\[
	\begin{split}
	\sum_{s\neq s_L }\sqrt{q_1(s)} - \sqrt{q_2(s)} &\leq \sum_{s\neq s_L} \sqrt{ \sum_{k = 0}^{k(s)-1} \sum_{s' \in S_k}   \sum_{a'\neq \pi_2(s')}q_1(s',a')p^{\pi_1}(s|s',a')}   \\
	& \leq  \rbr{L\sum_{s \neq s_L} \sum_{a\neq \pi_2(s)}q_1(s,a)}^{\nicefrac{1}{2}} \rbr{\sum_{s\neq s_L} 1}^{\nicefrac{1}{2}}  \\
	& = \sqrt{|S|L} \sqrt{\sum_{s \neq s_L} \sum_{a\neq \pi_2(s)}q_1(s,a)},
	\end{split}	
	\]
	which concludes the proof. 
\end{proof}

\begin{lemma} \label{lem:sqrt_diff_2} For two deterministic policies $\pi_1$ and $\pi_2$, we have 
	\[
	\sum_{s\neq s_L} \sqrt{q_1(s)} - \sqrt{q_2(s)} 
	\leq \sqrt{|S|L}\sqrt{\sum_{s \neq s_L}\sum_{a\neq \pi_1(s)}q_2(s,a)} 
	\] 
	where we denote by $q_1 = q^{\pi_1}$ and $q_2=q^{\pi_2}$ the occupancy measures of $\pi_1$ and $\pi_2$ respectively. 
\end{lemma}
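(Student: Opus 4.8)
The plan is to mirror the proof of \pref{lem:sqrt_diff}: reduce the signed sum to a sum of square roots of a nonnegative ``excess'' function, control that excess function through the induction machinery of \pref{lem:induction_state_reach_prob}, and finish with Cauchy--Schwarz together with the reachability bound of \pref{lem:state_reach_probabilty}. Since $\sqrt{a}-\sqrt{b}\le\sqrt{a-b}$ for $a\ge b\ge 0$, I would first bound $\sqrt{q_1(s)}-\sqrt{q_2(s)}\le\sqrt{f(s)}$ with $f(s)=\Ind{q_1(s)\ge q_2(s)}\,(q_1(s)-q_2(s))$, so that it suffices to control $\sum_{s\ne s_L}\sqrt{f(s)}$.

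The heart of the argument is verifying the hypothesis of \pref{lem:induction_state_reach_prob} for $f$ with the recursion taken along $\pi_1$. Using that both policies are deterministic, so that $q_i(s)=\sum_{s'}q_i(s')P(s\mid s',\pi_i(s'))$, I would split
\[
q_1(s)-q_2(s)=\sum_{s'\in S_{k(s)-1}}\big(q_1(s')-q_2(s')\big)P(s\mid s',\pi_1(s')) + \sum_{s'\in S_{k(s)-1}} q_2(s')\big(P(s\mid s',\pi_1(s'))-P(s\mid s',\pi_2(s'))\big).
\]
The first sum is at most $\sum_{s'} f(s')\,P(s\mid s',\pi_1(s'))=\sum_{s'}f(s')\sum_{a'}\pi_1(a'\mid s')P(s\mid s',a')$ (termwise, as replacing states with $q_1(s')<q_2(s')$ by $f(s')=0$ only raises the bound), which is exactly the recursion term. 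For the second sum I would drop the nonpositive $-P(s\mid s',\pi_2(s'))$ contribution, matching it to the source term $\sum_{s'}\sum_{a'} g(s',a')P(s\mid s',a')$ under the choice $g(s',a')=q_2(s')\,\Ind{a'=\pi_1(s')}\,\Ind{\pi_1(s')\ne\pi_2(s')}$, since then $\sum_{a'}g(s',a')P(s\mid s',a')=\Ind{\pi_1(s')\ne\pi_2(s')}\,q_2(s')P(s\mid s',\pi_1(s'))$. With $f(s_0)=0$, \pref{lem:induction_state_reach_prob} then yields $f(s)\le\sum_{k<k(s)}\sum_{s'\in S_k}\sum_{a'}g(s',a')\,p^{\pi_1}(s\mid s',a')$.

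To finish, I would apply Cauchy--Schwarz, $\sum_{s\ne s_L}\sqrt{f(s)}\le\sqrt{|S|\sum_{s\ne s_L}f(s)}$, and bound $\sum_{s\ne s_L}f(s)\le\sum_{s'}\sum_{a'}g(s',a')\sum_{s\ne s_L}p^{\pi_1}(s\mid s',a')\le L\sum_{s'}\sum_{a'}g(s',a')$ using \pref{lem:state_reach_probabilty}. The crucial bookkeeping step is that, because $\pi_2$ is deterministic, $\sum_{a'\ne\pi_1(s')}q_2(s',a')=q_2(s')\Ind{\pi_1(s')\ne\pi_2(s')}=\sum_{a'}g(s',a')$, so the total mass of $g$ coincides exactly with the target quantity. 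This produces $\sum_{s\ne s_L}\sqrt{f(s)}\le\sqrt{|S|L\sum_{s'\ne s_L}\sum_{a'\ne\pi_1(s')}q_2(s',a')}$, which is the claimed bound after chaining back through the first step.

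The main obstacle I anticipate is precisely the choice of $g$. The ``obvious'' choice $g(s,a)=\Ind{a\ne\pi_1(s)}q_2(s,a)$ that matches the target form fails the recursion hypothesis: its source term would require the false comparison $P(s\mid s',\pi_1(s'))\le 2P(s\mid s',\pi_2(s'))$. The fix is to place the mass of $g$ on the action $\pi_1(s')$ (capturing the transition-mismatch term $q_2(s')P(s\mid s',\pi_1(s'))$), and only afterward exploit the determinism of $\pi_2$ to see that this $g$ has the same total mass as the target sum over $a'\ne\pi_1(s')$. Once this substitution is made, the remaining Cauchy--Schwarz-plus-reachability bookkeeping is identical to that of \pref{lem:sqrt_diff}.
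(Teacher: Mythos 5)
Your proposal is correct and follows essentially the same route as the paper's proof: the same $f(s)=\Ind{q_1(s)\ge q_2(s)}(q_1(s)-q_2(s))$, the same choice $g(s',a')=\pi_1(a'|s')\sum_{a''\neq\pi_1(s')}q_2(s',a'')$ (your $\Ind{a'=\pi_1(s')}q_2(s')\Ind{\pi_1(s')\neq\pi_2(s')}$ is identical for deterministic $\pi_2$), the same appeal to \pref{lem:induction_state_reach_prob} and \pref{lem:state_reach_probabilty}, and the same Cauchy--Schwarz finish. The only cosmetic difference is that you verify the recursion hypothesis with a single add-and-subtract decomposition (noting the cross term vanishes when the policies agree), whereas the paper splits explicitly into the cases $\pi_1(s')=\pi_2(s')$ and $\pi_1(s')\neq\pi_2(s')$; your remark about why the naive $g(s,a)=\Ind{a\neq\pi_1(s)}q_2(s,a)$ fails is exactly the subtlety the paper's choice of $g$ is designed to handle.
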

\begin{proof}
	The proof is similar to that of \pref{lem:sqrt_diff}. 
	First note that $\sqrt{q_1(s)} - \sqrt{q_2(s)}  \leq \sqrt{\mathbb{I}_s\rbr{q_1(s)-q_2(s)}} $ where
   $\mathbb{I}_s \triangleq \Ind{q_1(s) \geq q_2(s)}$. 
	Define $f(s) = \mathbb{I}_s\rbr{q_1(s) - q_2(s)}$ and $g(s,a) = \pi_1(a|s) \sum_{a'\neq \pi_1(s)}q_2(s,a')$.
	Our goal is to prove the following inequality
	\[
	f(s) \leq \sum_{s' \in S_{k(s)-1}}\sum_{a'\in A}g(s',a')P(s|s',a') + \sum_{s' \in S_{k(s)-1}}f(s')\rbr{\sum_{a'\in A}\pi_1(a'|s')P(s|s',a')}
	\] 
	for all $s\neq s_L$ so that we can apply \pref{lem:induction_state_reach_prob} again ($f(s_0) = 0$ holds clearly).
	To show this, consider a fixed state $s\in S_k$ for some $k>0$ and rewrite the term $q_1(s)-q_2(s)$ as
	\[
	\begin{split}
	q_1(s) - q_2(s) & = \sum_{s'\in S_{k-1}} \sum_{a'} \rbr{q_1(s',a')P(s|s',a') - q_2(s',a')P(s|s',a')} \\ 
	& = \sum_{s'\in S_{k-1}} \rbr{q_1(s')P(s|s',\pi_1(s')) - q_2(s')P(s|s',\pi_2(s'))}
	\end{split}
	\]
	since both $\pi_1$ and $\pi_2$ are deterministic polices.	
	Then for the states $s'\in S_{k-1}$ with $\pi_1(s') = \pi_2(s')$, we have
	\[
	q_1(s')P(s|s',\pi_1(s')) - q_2(s')P(s|s',\pi_2(s')) \leq \mathbb{I}_{s'}\rbr{q_1(s') - q_2(s')} P(s|s',\pi_1(s'))  
	\]
	For the other states with $\pi_1(s') \neq \pi_2(s')$, we have 
	\[
	\begin{split}
	& q_1(s')P(s|s',\pi_1(s')) - q_2(s')P(s|s',\pi_2(s')) \\ 
	& \leq \rbr{q_1(s') - q_2(s')} P(s|s',\pi_1(s'))  +  q_2(s') P(s|s',\pi_1(s')) - q_2(s') P(s|s',\pi_2(s')) \\ 
	& \leq \rbr{q_1(s') - q_2(s')} P(s|s',\pi_1(s'))  +  q_2(s') P(s|s',\pi_1(s')) - q_2(s',\pi_1(s')) P(s|s',\pi_1(s')) 
	\\ & = \rbr{q_1(s') - q_2(s')} P(s|s',\pi_1(s')) + \sbr{\sum_{a\neq \pi_1(s')}q_2(s',a)} P(s|s',\pi_1(s')) \\ 
	& = \rbr{q_1(s') - q_2(s')} \sum_{a'}\pi_1(a'|s') P(s|s',a') + \sum_{a'}\pi_1(a'|s')\sbr{\sum_{a\neq \pi_1(s')}q_2(s',a)}P(s|s',a') \\
	& \leq \mathbb{I}_{s'}\rbr{q_1(s') - q_2(s')} \sum_{a'}\pi_1(a'|s') P(s|s',a') + \sum_{a'}g(s',a')P(s|s',a'),
	\end{split}
	\]
	where the third line follows from the fact that $- q_2(s') P(s|s',\pi_2(s')) \leq - q_2(s',\pi_1(s')) P(s|s',\pi_1(s'))$, which holds because the right-hand side is simply zero when $\pi_1(s') \neq \pi_2(s')$.

	
	Combining these two cases proves the desired inequality for all states. Applying \pref{lem:induction_state_reach_prob}, we arrive at
	\[
	f(s) \leq \sum_{k=0}^{k(s)-1}\sum_{s'\in S_k}\sum_{a'\in A}g(s',a')p^{\pi_1}(s|s',a'). 
	\]
	By the same arguments used in the proof of \pref{lem:sqrt_diff}, we have shown
	\[
	\sum_{s\neq s_L} \sqrt{q_1(s)} - \sqrt{q_2(s)} \leq \sqrt{|S|L} \sqrt{ \sum_{s\neq s_L}\sum_{a\in A} g(s,a)}.
	\]
	Noticing that
	\[
	\sum_{s\neq s_L}\sum_{a\in A} g(s,a) = \sum_{s\neq s_L}\rbr{\sum_{a\in A}\pi_1(a|s)}\sbr{\sum_{a'\neq \pi_1(s)}q_2(s,a')} = \sum_{s\neq s_L}\sum_{a'\neq \pi_1(s)}q_2(s,a')
	\]
	finishes the proof. 
\end{proof}

Finally, we are ready to prove \pref{lem:key_lemma_bound_penalty}.
\begin{proof}[Proof of \pref{lem:key_lemma_bound_penalty}]	
Recall the calculation at the beginning of this section:
\[
\begin{split}
\phi_{H}(\opt) - \phi_{H}(q_t) & = \sum_{s \neq s_L}\sqrt{q_t(s)}h_s(\pi_t) - \sum_{s \neq s_L}\sqrt{\opt(s)}h_s(\optpi) \\
& = \sum_{s \neq s_L}\sqrt{q_t(s)}\rbr{h_s(\pi_t)-h_s(\optpi)} + \rbr{1 + \alpha(|A|-1)}\sum_{s \neq s_L}\rbr{\sqrt{q_t(s)} - \sqrt{\opt(s)}}. 
\end{split}
\]
Using \pref{lem:penalty_term_1} with $\pi_1 = \pi_t$ and $\pi_2 = \optpi$,
we bound the first term by $(1+\alpha) \sum_{s\neq s_L}\sum_{a \neq \pi(s)}\sqrt{q_t(s,a)}$ for any mapping $\pi: S \rightarrow A$.
Then we decompose the term $\sum_{s \neq s_L}\rbr{\sqrt{q_t(s)} - \sqrt{\opt(s)}}$ as
\[
\begin{split}
\sum_{s \neq s_L}\rbr{\sqrt{q_t(s)} - \sqrt{\opt(s)}} & = \sum_{s \neq s_L}\rbr{\sqrt{q_t(s)} - \sqrt{q^\pi(s)}} +  \sum_{s \neq s_L}\rbr{\sqrt{q^\pi(s)} - \sqrt{\opt(s)}} \\
& \leq \sqrt{|S|L}\rbr{\sqrt{\sum_{s \neq s_L}\sum_{a\neq \pi(s)}q_t(s,a)} + \sqrt{\sum_{s \neq s_L}\sum_{a\neq \pi(s)}\opt(s,a)}} \\
& \leq 2\sqrt{|S|L\sum_{s \neq s_L}\sum_{a\neq \pi(s)}q_t(s,a)+\opt(s,a)}
\end{split} 
\]
where the second line is by using \pref{lem:sqrt_diff} with $\pi_1 = \pi_t $ and $\pi_2 =\pi$,
and \pref{lem:sqrt_diff_2} with $\pi_1 = \pi$ and $\pi_2=\optpi $.
This provides one upper bound for $\sum_{s \neq s_L}\rbr{\sqrt{q_t(s)} - \sqrt{\opt(s)}}$.
On the other hand, it can also be trivially bounded as
\[
\sum_{s\neq s_L}\sqrt{q_t(s)}-\sqrt{\opt(s)} \leq \sum_{s\neq s_L}\sqrt{q_t(s)} \leq \sqrt{|S|L}
\]
where the second step uses Cauchy-Schwarz inequality and the fact $\sum_{s\neq s_L}q_t(s) = L$. 
Combining everything shows that $\phi_{H}(\opt) - \phi_{H}(q_t)$ is bounded by
\[
(1+\alpha)\sum_{s\neq s_L}\sum_{a\neq \pi(s)}\sqrt{q_t(s,a)} + (1+\alpha |A|)\sqrt{|S|L}\min\cbr{2\sqrt{\sum_{s\neq s_L}\sum_{a\neq \pi(s)}q_t(s,a)+\opt(s,a)}\;,1},
\]
finishing the proof.
\end{proof}

\section{Proof of \pref{lem:key_lemma_bound_stability}}
\label{app:appendix_proof_stability}


In this section, we provide the proof for \pref{lem:key_lemma_bound_stability}.
Throughout the section, we use the shorthand $H \triangleq \nabla^{2}\phi_{H}(q_t)$.
The key is clearly to analyze the Hessian inverse $H^{-1}$, which is done in \pref{app:Hessian_inverse}.
With a better understanding of the Hessian inverse, we then finish the proof in \pref{app:stability_proof}.

\subsection{Analyzing the Hessian inverse}
\label{app:Hessian_inverse}

To facilitate discussions, we first introduce some matrix notation.
We see the Hessian $H$ as a matrix in $\fR^{(|S||A|)\times(|S||A|)}$ in the natural way.\footnote{Technically, $S$ should be $S\setminus \{s_L\}$ instead.}
For subsets $D, E \subseteq S\times A$, we use $\fR^{D\times E}$ to represent the set of matrices (in $\fR^{|D|\times |E|}$ ) with the elements in $D$ indexing their rows and the elements in $E$ indexing their columns.
The notation $M((s,a), (s',a'))$ represents the entry of a matrix $M$ in the row indexed by $(s,a)$ and in the column indexed by $(s',a')$.
Let $U_k = \{(s,a): s \in S_k, a \in A\}$ for $k = 0, \ldots, L-1$, and $U_{j:k} = U_j \cup \cdots \cup U_k$.
We use similar notations for vectors, and define $\mathbf{0}_U \in \fR^U$ be the all-zero vector with elements in $U$ indexing its coordinates.

Define diagonal matrices $I_k, C_k, D_k \in \fR^{U_k \times U_k}$ as
\[
\begin{split}
I_k & = diag\cbr{1: (s,a) \in U_k}, \\
C_k & = diag\cbr{ c(s,a): (s,a) \in U_k }, \\
D_k & = diag\cbr{ d(s,a): (s,a) \in U_k }, 
\end{split}
\]
where \[
c(s,a) = \frac{\alpha}{4(q(s) - q(s,a))^{\nicefrac{3}{2}}} \quad\text{and}\quad 
 d(s,a) = \frac{1}{4q(s,a)^{\nicefrac{3}{2}}}.
\]
Also define transition matrices $P_k \in\fR^{U_{k-1} \times U_k}$ such that $P_k((s,a),(s',a')) = P(s'|s,a)$,
and $\widetilde{P}_k \in \fR^{U_{0:k-1} \times U_k}$ such that $P_k((s,a),(s',a')) = P(s'|s,a)$ if $s' \in S_{k(s)+1}$ and $P_k((s,a),(s',a')) = 0$ otherwise.

Our first step is to write $H$ in a recursive way with the help of a sequence of matrices:

\begin{lemma}\label{lem:def_M}
Define matrices $M_k \in \fR^{U_{0:k} \times U_{0:k}}$ for $k=0, \ldots, L-1$ recursively as
\begin{equation}\label{eq:def_M}
	M_k = \begin{pmatrix}
	M_{k-1} + \widetilde{P}_k C_k \widetilde{P}_k^\top & -\widetilde{P}_k C_k \\
	-C_k \widetilde{P}_k^\top & C_k + D_k 
	\end{pmatrix}
	= \begin{pmatrix}
	M_{k-1}  &0 \\
	0 &  D_k 
	\end{pmatrix} +
	\begin{pmatrix}
	\widetilde{P}_k \\
	-I_t
	\end{pmatrix}
	C_k 
	\begin{pmatrix}
	\widetilde{P}_k^\top &
	-I_t
	\end{pmatrix}
\end{equation}	
for $k = 1, \ldots, L-1$, and $M_0 = C_0 + D_0$.
Then we have $H = M_{L-1}$.
\end{lemma}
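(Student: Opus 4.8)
The plan is to prove $H = M_{L-1}$ by matching quadratic forms rather than individual entries, since \pref{eq:quadratic_form} already provides a clean closed form for $w^\top H w$. Both matrices are symmetric: $H$ as a Hessian, and $M_{L-1}$ because the recursion \pref{eq:def_M} starts from the diagonal $M_0 = C_0 + D_0$ and each step adds the symmetric rank update displayed in \pref{eq:def_M} (a term of the form $V C_k V^\top$ with $C_k$ diagonal). Hence it suffices to show $w^\top M_{L-1} w = w^\top H w$ for all $w$, and then polarization upgrades equal quadratic forms of symmetric matrices to the matrix equality $M_{L-1} = H$.

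Concretely, I would prove by induction on $k$ that for every $w \in \fR^{U_{0:k}}$,
\[
w^\top M_k w = \frac{1}{4} \sum_{s\,:\,k(s)\le k}\sum_a \rbr{\frac{w(s,a)^2}{q(s,a)^{\nicefrac{3}{2}}} + \frac{\alpha\,(h(s)-w(s,a))^2}{(q(s)-q(s,a))^{\nicefrac{3}{2}}}},
\]
with $h(s)$ exactly as in \pref{eq:quadratic_form} (note that for $k(s)\le k$ this $h(s)$ only involves coordinates of $w$ in $U_{0:k-1}$, so the statement is well posed). The base case $k=0$ is immediate: $S_0=\{s_0\}$ forces $h(s_0)=0$, so the right-hand side collapses to $\sum_a \rbr{d(s_0,a)+c(s_0,a)}w(s_0,a)^2 = w^\top(D_0+C_0)w$. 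For the inductive step I would split $w=(w^-,w^k)$ along $U_{0:k-1}$ and $U_k$ and use the second form of \pref{eq:def_M} to write
\[
w^\top M_k w = (w^-)^\top M_{k-1} w^- + (w^k)^\top D_k w^k + \big(\widetilde{P}_k^\top w^- - w^k\big)^\top C_k \big(\widetilde{P}_k^\top w^- - w^k\big).
\]
The first term is handled by the induction hypothesis (layers up to $k-1$); the second contributes exactly the $d$-terms of layer $k$; and the third contributes the $c$-terms of layer $k$, because for $(s,a)\in U_k$ the key identity $(\widetilde{P}_k^\top w^-)(s,a) = \sum_{s'\in S_{k-1},a'} P(s|s',a') w(s',a') = h(s)$ holds by the zero-padded definition of $\widetilde{P}_k$ (nonzero rows only from layer $k-1$). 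Summing the three contributions reproduces the layer-$\le k$ sum, closing the induction; taking $k=L-1$ and comparing with \pref{eq:quadratic_form} then yields $w^\top M_{L-1}w = w^\top H w$ for all $w$.

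I expect the only real obstacle to be the index bookkeeping in the inductive step: verifying that the padding convention of $\widetilde{P}_k$ makes $\widetilde{P}_k^\top w^-$ evaluate to precisely the coupling quantity $(h(s))_{(s,a)\in U_k}$, and confirming that the couplings between consecutive layers below $k$ are already encoded inside $M_{k-1}$ by the induction hypothesis rather than being double counted. A fully equivalent but more tedious alternative would be to expand the recursion into closed-form blocks and match them entry-by-entry against \pref{eq:derivative_1}--\pref{eq:derivative_3}; the quadratic-form route avoids this and isolates the single nontrivial identity $\widetilde{P}_k^\top w^- = (h(s))_{(s,a)\in U_k}$.
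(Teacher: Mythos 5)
Your proof is correct, but it takes a genuinely different route from the paper's. The paper verifies $H = M_{L-1}$ entry-by-entry: its inductive claim is that $M_k$ contains exactly the Hessian entries for index pairs in $U_{0:k}$ minus the contributions coming from layer $k+1$, and it matches each block of the recursion ($\widetilde{P}_k C_k \widetilde{P}_k^\top$, $-\widetilde{P}_k C_k$, $C_k + D_k$) against the explicit derivative formulas \pref{eq:derivative_1}--\pref{eq:derivative_3}. You instead match quadratic forms, proving by induction that $w^\top M_k w$ equals the layer-truncated version of \pref{eq:quadratic_form} and then invoking symmetry plus polarization to upgrade this to matrix equality. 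Your key identity $(\widetilde{P}_k^\top w^-)(s,a) = h(s)$ is exactly right given the zero-padding of $\widetilde{P}_k$, your truncation at $k(s)\le k$ correctly excludes the layer-$(k+1)$ terms $\alpha h(s'')^2/(q(s'')-q(s'',a''))^{\nicefrac{3}{2}}$ that would otherwise appear for $w$ supported on $U_{0:k}$ (this is the same "missing terms" bookkeeping the paper does at the entry level), and the base case and symmetry arguments are sound. What each approach buys: yours reuses the already-established quadratic form \pref{eq:quadratic_form} and reduces the whole verification to one clean identity per layer, at the cost of the extra (trivial) symmetry/polarization step; the paper's is more mechanical but identifies directly which Hessian entries live in which block, which is arguably more transparent when one later manipulates those blocks individually (as in \pref{lem:M_inverse} and \pref{lem:H_and_N}).
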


\begin{proof}
The proof is by a direct verification based on the calculation of the Hessian done in \pref{lem:hessian_derivaties}.
The claim is that $M_k$ consists of all the second-order derivatives with respect to $(s,a), (s', a') \in U_{0:k}$, but without the terms involving $P(s''|s,a)$ or $P(s''|s',a')$ for $s'' \in S_{k+1}$.
To see this, note that this is clearly true for $M_0$ based on \pref{lem:hessian_derivaties}.
Suppose this is true for $M_{k-1}$ and consider $M_k$.

We first show that the block $\widetilde{P}_k C_k \widetilde{P}_k^\top$ corresponds to \pref{eq:derivative_2} plus the last term of \pref{eq:derivative_3}, that is 
\begin{align*}
& \rbr{\widetilde{P}_k C_k \widetilde{P}_k^\top}((s,a),(s',a')) \\ & = 
\sum_{(s_1,a_1)\in U_k} \sum_{(s_2,a_2)\in U_k} \widetilde{P}_k((s,a);(s_1,a_1)) \cdot C_k((s_1,a_1);(s_2,a_2)) \cdot \widetilde{P}_k((s',a');(s_2,a_2))
\\
& = \sum_{(s_1,a_1)\in U_k} \widetilde{P}_k((s,a),(s_1,a_1)) \cdot C_k((s_1,a_1),(s_1,a_1)) \cdot \widetilde{P}_k((s',a'),(s_1,a_1)) \\ & \quad + \sum_{(s_1,a_1)\in U_k} \sum_{(s_2,a_2)\in U_k, (s_1,a_1)\neq (s_2,a_2)}  \widetilde{P}_k((s,a),(s_1,a_1)) \cdot C_k((s_1,a_1),(s_2,a_2)) \cdot \widetilde{P}_k((s',a'),(s_1,a_1)) \\
& = \sum_{(s_1,a_1)\in U_k} P(s_1|s,a) \cdot \frac{\alpha}{4(q(s_1) - q(s_1,a_1))^{\nicefrac{3}{2}}} \cdot P(s_1|s',a')  \\
& = \sum_{s'' \in S_k} \sum_{a'' \in A} \frac{\alpha P(s''|s,a)P(s''|s',a')}{4(q(s'') - q(s'',a''))^{\nicefrac{3}{2}}}
\end{align*}
where the third step uses the fact that $C_k((s_1,a_1),(s_2,a_2))=0$ when $(s_1,a_1)\neq(s_2,a_2)$.

Then we verify that the blocks  $ -\widetilde{P}_k C_k$ and $-C_k \widetilde{P}_k^\top$ correspond to \pref{eq:derivative_1}. Direct calculation shows that, for $(s,a) \in U_{k}$ and $(s',a') \in U_{k-1}$, 
\begin{align*}
\rbr{ -\widetilde{P}_k C_k}((s',a'),(s,a)) & = -\sum_{(s'',a'')\in U_k} \widetilde{P}_k( (s',a'),(s'',a'')) C_k((s'',a''),(s,a)) \\ & = - \frac{\alpha}{4(q(s) - q(s,a))^{\nicefrac{3}{2}}} \cdot P(s|s',a') .
\end{align*}

Finally, the block $ C_k + D_k $ corresponds to the first two terms of \pref{eq:derivative_3}.
This finishes the proof.
\end{proof}

To study $H^{-1} = M_{L-1}^{-1}$, we next write $M_k^{-1}$ in terms of $M_{k-1}^{-1}$.

\begin{lemma} \label{lem:M_inverse}
The inverse of $M_k$ (defined in \pref{eq:def_M}) is
\begin{align}
M_k^{-1} &= \begin{pmatrix}M_{k-1}^{-1} & 0\\ 0 & D_k^{-1} \end{pmatrix} - \begin{pmatrix}M_{k-1}^{-1}\widetilde{P}_k \\ -D_k^{-1} \end{pmatrix}
\rbr{ C_k^{-1} + D_k^{-1}+\widetilde{P}_k^\top M^{-1}_{t-1}\widetilde{P}_k}^{-1}  
\begin{pmatrix}\widetilde{P}_k^\top M_{k-1}^{-1} & -D_k^{-1}\end{pmatrix}
\notag \\& =\begin{pmatrix}
M_{k-1}^{-1} - M_{k-1}^{-1} \widetilde{P}_k W_k \widetilde{P}_k^\top  M_{k-1}^{-1} & M_{k-1}^{-1} \widetilde{P}_k W_k D_k^{-1}\\
D_k^{-1} W_k \widetilde{P}_k^\top M_{k-1}^{-1}  & D_k^{-1} - D_k^{-1} W_k D_k^{-1} 
\end{pmatrix}   \label{eq:M_k_inv_decomp}
\end{align}
where $W_k = ( C_k^{-1} + D_k^{-1} + \widetilde{P}_k^\top M_{k-1}^{-1} \widetilde{P}_k )^{-1}$. 
\label{lem:M_k_inverse_decomp}
\end{lemma}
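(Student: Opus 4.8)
The plan is to recognize that the recursion \pref{eq:def_M} writes $M_k$ as a block-diagonal matrix perturbed by a symmetric low-rank term, and then to invert it by a single application of the Woodbury matrix identity. Concretely, set $A = \begin{pmatrix} M_{k-1} & 0 \\ 0 & D_k \end{pmatrix}$ and $U = \begin{pmatrix} \widetilde{P}_k \\ -I_k \end{pmatrix}$, so that the second form in \pref{eq:def_M} reads exactly $M_k = A + U C_k U^\top$ (a symmetric update, since $V = \rbr{\widetilde{P}_k^\top\;\; -I_k} = U^\top$). This is the whole content of the first step, and everything afterward is mechanical.

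Before inverting, I would record the invertibility of all pieces. The matrices $C_k$ and $D_k$ are diagonal with strictly positive entries $c(s,a)$ and $d(s,a)$: these are positive because $q_t$ lies in the interior of $\Omega$, being the minimizer of a regularizer that contains the log-barrier $\phi_L$, so $q_t(s,a)>0$ and $q_t(s)-q_t(s,a)=\sum_{b\neq a}q_t(s,b)>0$. Hence $C_k, D_k \succ 0$, and $M_{k-1}\succ 0$ by the convexity of $\phi_H$ from \pref{lem:hessian_derivaties} together with an outer induction on $k$ (the base case $M_0 = C_0+D_0\succ 0$ is immediate). Consequently $A \succ 0$, and the inner matrix $C_k^{-1}+U^\top A^{-1}U$ is a sum of positive-definite matrices and is therefore invertible, so every inverse appearing below is well defined.

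Next I would apply Woodbury, $(A+U C_k U^\top)^{-1} = A^{-1} - A^{-1}U\rbr{C_k^{-1}+U^\top A^{-1}U}^{-1}U^\top A^{-1}$, after computing its two ingredients $A^{-1}U = \begin{pmatrix} M_{k-1}^{-1}\widetilde{P}_k \\ -D_k^{-1} \end{pmatrix}$ and $U^\top A^{-1}U = \widetilde{P}_k^\top M_{k-1}^{-1}\widetilde{P}_k + D_k^{-1}$. The latter identity shows that the inner inverse is precisely $W_k = \rbr{C_k^{-1}+D_k^{-1}+\widetilde{P}_k^\top M_{k-1}^{-1}\widetilde{P}_k}^{-1}$, which yields the first displayed equality in the lemma verbatim. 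Finally I would carry out the $2\times 2$ block multiplication $\rbr{A^{-1}U}\,W_k\,\rbr{A^{-1}U}^\top$ and subtract it from $A^{-1}$ block by block: the top-left block gives $M_{k-1}^{-1} - M_{k-1}^{-1}\widetilde{P}_k W_k \widetilde{P}_k^\top M_{k-1}^{-1}$, the two off-diagonal blocks give $\pm M_{k-1}^{-1}\widetilde{P}_k W_k D_k^{-1}$ (the sign from the $-D_k^{-1}$ entry of $A^{-1}U$ flipping the leading minus to give the stated signs), and the bottom-right block gives $D_k^{-1} - D_k^{-1} W_k D_k^{-1}$, matching \pref{eq:M_k_inv_decomp}.

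Since the argument is one application of Woodbury followed by routine block bookkeeping, there is no genuine analytic obstacle. The only points demanding care are the dimension/indexing conventions — keeping track that $\widetilde{P}_k \in \fR^{U_{0:k-1}\times U_k}$, so that $M_{k-1}^{-1}\widetilde{P}_k$ and $\widetilde{P}_k^\top M_{k-1}^{-1}\widetilde{P}_k$ are conformable — and reading the evident typo $I_t$ as $I_k$ in \pref{eq:def_M}. I would present the invertibility bookkeeping first so that the Woodbury step and the block expansion can then be stated cleanly.
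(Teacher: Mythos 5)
Your proposal is correct and follows essentially the same route as the paper: a single application of the Woodbury matrix identity with $A$ the block-diagonal matrix, $U=V^\top$ the stacked $\bigl(\widetilde{P}_k;\,-I_k\bigr)$ factor, and $B=C_k$, followed by the block multiplication to obtain \pref{eq:M_k_inv_decomp}. The only difference is that you additionally record the positive-definiteness/invertibility of the pieces, which the paper leaves implicit; this is a harmless (and welcome) bit of extra rigor, not a different argument.
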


\begin{proof}
The first equality is by the Woodbury matrix identity 
\[
(A+UBV)^{-1} = A^{-1} - A^{-1}U(B^{-1}+VA^{-1}U)^{-1}VA^{-1}
\]
and plugging in the definition of $M_k$ from \pref{eq:def_M} with 
\[
A = \begin{pmatrix}
	M_{k-1}  &0 \\
	0 &  D_k 
	\end{pmatrix},
	\quad
U = V^\top = \begin{pmatrix}
	\widetilde{P}_k \\
	-I_t
	\end{pmatrix},
\quad\text{and}\quad B =	C_k .
\]	
The second equality is by direct calculation.
\end{proof}

The bottom right block of $M_k^{-1}$ plays a key role in the analysis, and is denoted by
\begin{equation}\label{eq:def_N}
N_k = D_k^{-1} - D_k^{-1} W_k D_k^{-1}
\end{equation}
for $k=1, \ldots, L-1$, and $N_0 = M_0^{-1}$.
The next lemma shows that we can focus on $N_k$ when analyzing specific quadratic forms of $H^{-1}$.

\begin{lemma}\label{lem:H_and_N}
For any vector $w_k\in \fR^{U_k}$, we have 
\[
	\begin{pmatrix}
	\mathbf{0}_{U_{0:k-1}}^\top,w_k^\top, \mathbf{0}_{U_{k+1:L-1}}^\top
	\end{pmatrix} H^{-1} \begin{pmatrix}
	\mathbf{0}_{U_{0:k-1}} \\ w_k \\\mathbf{0}_{U_{k+1:L-1}}
	\end{pmatrix} \leq w_k^\top N_k w_k,
\]
	where $N_k$ is defined in \pref{eq:def_N}. 
\end{lemma}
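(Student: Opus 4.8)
The plan is to reduce the padded quadratic form against $H^{-1}=M_{L-1}^{-1}$ to the quadratic form against $N_k$ one layer at a time, exploiting the recursive block structure of $M_j^{-1}$ in \pref{eq:M_k_inv_decomp}. For $j\geq k$ let $v^{(j)}\in\fR^{U_{0:j}}$ denote the vector supported only on $U_k$, equal to $w_k$ there and zero elsewhere. Then the left-hand side of the lemma is exactly $(v^{(L-1)})^\top M_{L-1}^{-1} v^{(L-1)}$ (using $H=M_{L-1}$ from \pref{lem:def_M}), while by \pref{eq:def_N} the bottom-right block of $M_k^{-1}$ is $N_k$, so $(v^{(k)})^\top M_k^{-1} v^{(k)} = w_k^\top N_k w_k$ since $v^{(k)}$ is supported on $U_k$ alone. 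It therefore suffices to show $(v^{(L-1)})^\top M_{L-1}^{-1} v^{(L-1)}\leq (v^{(k)})^\top M_k^{-1} v^{(k)}$.

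The core step is a one-layer descent. For every $j$ with $k<j\leq L-1$, the vector $v^{(j)}$ equals $(v^{(j-1)},\mathbf{0}_{U_j})$, i.e.\ it vanishes on the newly appended block $U_j$. Substituting $M_j^{-1}$ in the block form \pref{eq:M_k_inv_decomp} and using that the $U_j$-coordinates of $v^{(j)}$ are zero, only the top-left block contributes, giving
\[
(v^{(j)})^\top M_j^{-1} v^{(j)} = (v^{(j-1)})^\top M_{j-1}^{-1} v^{(j-1)} - (v^{(j-1)})^\top M_{j-1}^{-1}\widetilde{P}_j W_j \widetilde{P}_j^\top M_{j-1}^{-1} v^{(j-1)}.
\]
If $W_j\succeq 0$, the subtracted term is a quadratic form in $W_j$ evaluated at $\widetilde{P}_j^\top M_{j-1}^{-1} v^{(j-1)}$ and hence nonnegative, so $(v^{(j)})^\top M_j^{-1} v^{(j)} \leq (v^{(j-1)})^\top M_{j-1}^{-1} v^{(j-1)}$. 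Chaining this inequality from $j=L-1$ down to $j=k+1$ then yields the desired bound, since the chain terminates at $(v^{(k)})^\top M_k^{-1} v^{(k)} = w_k^\top N_k w_k$.

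The main obstacle is justifying $W_j\succeq 0$, where $W_j = (C_j^{-1}+D_j^{-1}+\widetilde{P}_j^\top M_{j-1}^{-1}\widetilde{P}_j)^{-1}$; for this it suffices that each $M_j$ be positive definite. I would prove $M_j\succ 0$ by induction along \pref{eq:def_M}. The base case $M_0 = C_0+D_0$ is a positive diagonal matrix. For the step, write
\[
M_j = \begin{pmatrix} M_{j-1} & 0 \\ 0 & D_j \end{pmatrix} + \begin{pmatrix} \widetilde{P}_j \\ -I_j \end{pmatrix} C_j \begin{pmatrix} \widetilde{P}_j^\top & -I_j \end{pmatrix},
\]
where the first term is block-diagonal with $M_{j-1}\succ 0$ (induction hypothesis) and $D_j\succ 0$ (positive diagonal), hence positive definite, and the second term is positive semidefinite because $C_j\succeq 0$; thus $M_j\succ 0$. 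Consequently $M_{j-1}^{-1}\succ 0$, so $\widetilde{P}_j^\top M_{j-1}^{-1}\widetilde{P}_j\succeq 0$ and $C_j^{-1}+D_j^{-1}+\widetilde{P}_j^\top M_{j-1}^{-1}\widetilde{P}_j$ is positive definite, making $W_j$ positive definite and in particular positive semidefinite, as required.

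Finally, the boundary case $k=L-1$ is immediate: the descent chain is then empty and the claimed inequality holds with equality, since $v^{(L-1)}$ is supported on $U_{L-1}$ and $N_{L-1}$ is precisely the bottom-right block of $M_{L-1}^{-1}=H^{-1}$. This completes the plan; the only genuinely non-routine ingredient is the positive-definiteness induction that licenses discarding the $W_j$-term at each layer.
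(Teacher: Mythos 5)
Your proof is correct. It differs in mechanism from the paper's: the paper works at the level of the Loewner order on $H$ itself, noting that dropping the positive semidefinite rank-update term in \pref{eq:def_M} gives $M_j \succeq \mathrm{blockdiag}(M_{j-1}, D_j)$, iterating this to get $H \succeq \mathrm{blockdiag}(M_k, D_{k+1},\ldots,D_{L-1})$, and then inverting both sides (using anti-monotonicity of matrix inversion) so that the $U_k$-block of the upper bound is exactly $N_k$. You instead telescope the specific quadratic form through the explicit block formula \pref{eq:M_k_inv_decomp}, discarding at each layer the nonnegative correction $(v^{(j-1)})^\top M_{j-1}^{-1}\widetilde{P}_j W_j \widetilde{P}_j^\top M_{j-1}^{-1} v^{(j-1)}$. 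Both exploit the same recursive layered structure; yours trades the inversion anti-monotonicity step for the positive semidefiniteness of $W_j$, which you justify via an induction showing $M_j \succ 0$ — a point the paper leaves implicit (it follows there from the convexity established in \pref{lem:hessian_derivaties}), so your argument is in that one respect slightly more self-contained. The paper's version is shorter and yields the stronger matrix inequality $H^{-1} \preceq \mathrm{blockdiag}(M_k^{-1}, D_{k+1}^{-1},\ldots,D_{L-1}^{-1})$ rather than only the bound on this particular quadratic form, but for the purposes of \pref{lem:H_and_N} the two are equivalent.
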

\begin{proof} 
Based on \pref{eq:def_M} and the fact that $C_k$ is positive definite, we have
\[
M_k \succeq \begin{pmatrix}
	M_{k-1}  &0 \\
	0 &  D_k 
	\end{pmatrix}.
\]
Repeatedly using this fact, we can show
\[
H = M_{L-1} \succeq 
\begin{pmatrix}
	M_{k}  & 0  & \cdots & 0 \\
	0 &  D_{k+1} & \cdots & 0 \\
	\vdots & \vdots & \ddots & \vdots \\
	0 & 0 & \cdots & D_{L-1} \\
\end{pmatrix},
\]
and thus
\[
H^{-1} = M_{L-1}^{-1} \preceq 
\begin{pmatrix}
	M_{k}^{-1}  & 0  & \cdots & 0 \\
	0 &  D_{k+1}^{-1} & \cdots & 0 \\
	\vdots & \vdots & \ddots & \vdots \\
	0 & 0 & \cdots & D_{L-1}^{-1} \\
\end{pmatrix}.
\]

Note that for the last matrix above, the block with rows and columns indexed by elements in $U_k$ is exactly $N_k$, based on \pref{lem:M_inverse}. 
Thus, taking the quadratic form on both sides with respect to the vector $	\begin{pmatrix}
	\mathbf{0}_{U_{0:k-1}}^\top,w_k^\top, \mathbf{0}_{U_{k+1:L-1}}^\top
	\end{pmatrix} $ finishes the proof.
\end{proof}

Finally, we point out some important properties of $N_k$.
\begin{lemma}\label{lem:N_properties}
The matrix $N_k$ (defined in \pref{eq:def_N}) is positive definite and satisfies
\[
N_k \preceq D_k^{-1}  \quad\text{and}\quad N_k \preceq C_k^{-1}+P_k^\top N_{k-1} P_k.
\]
\end{lemma}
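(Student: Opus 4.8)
The plan is to prove all three assertions simultaneously by induction on $k$, carrying the inductive hypothesis that $N_{k-1}$ is positive definite. First I would record a preliminary fact used throughout: every $M_k$ is positive definite. This follows from the rank-update form in \pref{eq:def_M}, which writes $M_k$ as the block-diagonal matrix $\mathrm{diag}(M_{k-1},D_k)$ plus a positive semidefinite correction (since $C_k\succ0$); starting from $M_0=C_0+D_0\succ0$, positive definiteness propagates, so all the inverses appearing in \pref{lem:M_inverse} and in the definition \pref{eq:def_N} of $N_k$ are well defined. The base case $k=0$ is then immediate: $N_0=(C_0+D_0)^{-1}\succ0$, and $C_0\succeq0$ gives $C_0+D_0\succeq D_0$, hence $N_0\preceq D_0^{-1}$ (the second inequality is vacuous at $k=0$, as $N_{-1}$ and $P_0$ are undefined).

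The crucial step, and the one I expect to be the main obstacle, is a structural identity that lets the recursion close on $N_k$ alone rather than on the full inverse $M_{k-1}^{-1}$. The point is that $\widetilde{P}_k$ is supported only on the block of rows indexed by $U_{k-1}$, where it coincides with $P_k$ (for $(s,a)\in U_{k-1}$ one has $k(s)+1=k$, so $s'\in S_k$ always, whereas for $(s,a)\in U_{0:k-2}$ the entry vanishes), so $\widetilde{P}_k=\left(\begin{smallmatrix}0\\ P_k\end{smallmatrix}\right)$. Since $N_{k-1}$ is \emph{by definition} the bottom-right $U_{k-1}\times U_{k-1}$ block of $M_{k-1}^{-1}$ identified in \pref{lem:M_inverse}, matching this sparsity pattern against that block yields $\widetilde{P}_k^\top M_{k-1}^{-1}\widetilde{P}_k=P_k^\top N_{k-1}P_k$. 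Consequently $W_k=(C_k^{-1}+D_k^{-1}+P_k^\top N_{k-1}P_k)^{-1}$, and introducing the shorthands $A=C_k^{-1}+P_k^\top N_{k-1}P_k$ and $B=D_k^{-1}$ — both positive definite, using $N_{k-1}\succ0$ from the inductive hypothesis — the definition \pref{eq:def_N} reads $N_k=B-B(A+B)^{-1}B$.

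The final ingredient is the parallel-sum identity: for positive definite $A,B$ one has $B-B(A+B)^{-1}B=B(A+B)^{-1}A=(A^{-1}+B^{-1})^{-1}$, which I would verify by multiplying the product $B(A+B)^{-1}A$ on the right by $A^{-1}+B^{-1}$ and simplifying via $I+AB^{-1}=(A+B)B^{-1}$ to obtain the identity. Hence $N_k=(A^{-1}+B^{-1})^{-1}$, which is manifestly positive definite, closing the induction. For the two inequalities I would use operator monotonicity of inversion: from $A^{-1}+B^{-1}\succeq A^{-1}\succ0$ and $A^{-1}+B^{-1}\succeq B^{-1}\succ0$, inverting (which reverses the order) gives $N_k\preceq A=C_k^{-1}+P_k^\top N_{k-1}P_k$ and $N_k\preceq B=D_k^{-1}$, exactly the two claimed bounds.

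In short, once the structural identity $\widetilde{P}_k^\top M_{k-1}^{-1}\widetilde{P}_k=P_k^\top N_{k-1}P_k$ is in place, recognizing $N_k$ as the parallel sum of $A$ and $B$ reduces both semidefinite inequalities to the textbook facts $A\!:\!B\preceq A$ and $A\!:\!B\preceq B$, and gives positive definiteness for free. The only genuinely delicate verification is the sparsity-pattern bookkeeping behind that identity; the algebra of the parallel sum is routine.
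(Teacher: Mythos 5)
Your proof is correct and follows essentially the same route as the paper: both reduce $N_k$ to the closed form $\bigl(D_k + (C_k^{-1}+P_k^\top N_{k-1}P_k)^{-1}\bigr)^{-1}$ (the paper via Woodbury applied to $W_k$, you via the equivalent parallel-sum identity) and read off both inequalities from anti-monotonicity of matrix inversion. You additionally make explicit two points the paper leaves implicit — the sparsity argument giving $\widetilde{P}_k^\top M_{k-1}^{-1}\widetilde{P}_k=P_k^\top N_{k-1}P_k$, and a self-contained inductive proof of positive definiteness where the paper instead invokes \pref{lem:H_and_N} — both of which check out.
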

\begin{proof}
The fact that $N_k$ is positive definite is directly implied by \pref{lem:H_and_N}.
To prove the rest of the statement, we first apply Woodbury matrix identity to write $W_k$ as
\begin{align*}
W_k =  ( D_k^{-1} + C_k^{-1} + \widetilde{P}_k^\top M_{k-1}^{-1} \widetilde{P}_k )^{-1}
 &= D_k - D_k ( D_k + (C_k^{-1} +\widetilde{P}_k^\top M_{k-1}^{-1} \widetilde{P}_k )^{-1})^{-1} D_k \\
 &= D_k - D_k ( D_k + (C_k^{-1} + P_k^\top N_{k-1}^{-1} P_k )^{-1})^{-1} D_k. 
\end{align*}
Plugging this back into the definition of $N_k$ gives:
\[
N_k = D_k^{-1} - D_k^{-1} W_k D_k^{-1} =  ( D_k + (C_k^{-1} + P_k^\top N_{k-1}^{-1} P_k )^{-1})^{-1},
\]
which shows $N_k \preceq D_k^{-1}$ and $N_k \preceq C_k^{-1}+P_k^\top N_{k-1} P_k$.
\end{proof}

\subsection{Bounding the stability term}
\label{app:stability_proof}

With the tools from the previous section, we are now ready to bound the stability term.
We will use the following lemma to relate the quadratic form of $H^{-1}$ to only its diagonal entries.

\begin{lemma}\label{lem:quadratic_form_diagonal}
If $M \in \fR^{d\times d}$ is a positive semi-definite matrix, then for any $w\in \fR^d$ with non-negative coordinates, we have
\[
w^\top M w \leq \rbr{\sum_{j=1}^d w(j)}\sum_{i=1}^d M(i,i)w(i). 
\]
\end{lemma}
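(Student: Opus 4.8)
The plan is to prove the inequality by showing that the gap
\[
\rbr{\sum_{j=1}^d w(j)}\sum_{i=1}^d M(i,i)w(i) - w^\top M w
\]
is non-negative, and to do so by a symmetrization argument that reduces the whole claim to the positive semi-definiteness of $M$ applied to the difference vectors $e_i - e_j$. First I would expand both quantities as double sums over the index pairs $(i,j)$. The product term on the left becomes $\sum_{i,j} M(i,i)\,w(i)\,w(j)$, while $w^\top M w = \sum_{i,j} M(i,j)\,w(i)\,w(j)$, so the gap is exactly $\sum_{i,j}\rbr{M(i,i) - M(i,j)}w(i)\,w(j)$.

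The key step is then to symmetrize the diagonal contribution. Relabeling the summation indices gives $\sum_{i,j} M(i,i)\,w(i)\,w(j) = \tfrac{1}{2}\sum_{i,j}\rbr{M(i,i)+M(j,j)}w(i)\,w(j)$, and since a positive semi-definite matrix is symmetric, $\sum_{i,j} M(i,j)\,w(i)\,w(j)$ is already symmetric in $(i,j)$. Combining the two, the gap equals
\[
\tfrac{1}{2}\sum_{i,j}\rbr{M(i,i)+M(j,j)-2M(i,j)}w(i)\,w(j).
\]
The quantity in the parentheses is precisely $(e_i-e_j)^\top M (e_i-e_j)$, which is non-negative because $M\succeq 0$; and each weight $w(i)\,w(j)$ is non-negative by the hypothesis that $w$ has non-negative coordinates. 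Hence every summand is non-negative, the gap is non-negative, and the claimed inequality follows.

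I do not expect any serious obstacle here: the only nontrivial idea is recognizing that the diagonal-minus-entry expression should be written in the symmetrized form $M(i,i)+M(j,j)-2M(i,j)$ so that it matches the quadratic form $(e_i-e_j)^\top M(e_i-e_j)$. The two hypotheses are used in exactly one place each — positive semi-definiteness to make the matrix factor non-negative and the sign of $w$ to make the weight factor non-negative — so once the symmetrization is in place the rest is a one-line conclusion.
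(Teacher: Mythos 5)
Your proof is correct and is essentially the same as the paper's: both arguments reduce to the inequality $M(i,i)+M(j,j)-2M(i,j) = (e_i-e_j)^\top M(e_i-e_j)\geq 0$ and use the non-negativity of $w$ to apply it entrywise after symmetrizing the double sum. The only difference is presentational --- you show the gap is a sum of non-negative terms, while the paper bounds $M(i,j)\leq \tfrac{1}{2}(M(i,i)+M(j,j))$ first and then sums --- which is the same computation read in the other direction.
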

\begin{proof}
Since $M$ is positive semi-definite, we have for any $i, j$, $(e_i - e_j)^\top M (e_i - e_j) \geq 0$, which implies 
\[
M(i,j) = M(j, i) \leq \frac{M(i,i) + M(j,j)}{2}.
\]
Therefore, 
\[
w^\top M w = \sum_{i,j} w(i) M(i,j) w(j) \leq
\frac{1}{2}\sum_{i,j} w(i) (M(i,i) + M(j,j)) w(j) = \rbr{\sum_{j=1}^d w(j)}\sum_{i=1}^d M(i,i)w(i),
\]
where we use the nonnegativity of $w$ in the second step as well.
\end{proof}

We now bound $\|\hatl_t\|_{H^{-1}}$ in terms of the diagonal entries of $N_k$.
\begin{lemma}\label{lem:stability_first_step}
\pref{alg:main_alg} guarantees
\[
\E\sbr{\norm{\hatl_t}_{H^{-1}}} 
\leq L\E\sbr{\sum_{k=0}^{L-1}\sum_{(s,a)\in U_k}{\frac{N_k((s,a), (s,a))}{q_t(s,a)}}}.
\]
\end{lemma}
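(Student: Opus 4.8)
The plan is to bound the quadratic form $\norm{\hatl_t}^2_{H^{-1}} = \hatl_t^\top H^{-1}\hatl_t$ by decomposing the loss estimator across layers and then invoking \pref{lem:H_and_N} to replace the genuinely non-diagonal full Hessian inverse $H^{-1}$ by its far more tractable bottom-right blocks $N_k$. Concretely, I would write $\hatl_t = \sum_{k=0}^{L-1} v_k$, where $v_k \in \fR^{U_k}$ (extended by zeros outside $U_k$) is the restriction of $\hatl_t$ to the state-action pairs of layer $k$. Since $H^{-1}$ is positive semi-definite, Cauchy-Schwarz in its $H^{-1}$-norm form, together with the triangle inequality and the fact that there are exactly $L$ layers, gives
\[
\hatl_t^\top H^{-1}\hatl_t \;=\; \nbr{\sum_{k=0}^{L-1} v_k}^2_{H^{-1}} \;\le\; L \sum_{k=0}^{L-1} v_k^\top H^{-1} v_k .
\]
This is precisely where the factor $L$ in the statement is born.

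Next, because each $v_k$ is supported on the single layer $U_k$, \pref{lem:H_and_N} applies directly and yields $v_k^\top H^{-1} v_k \le v_k^\top N_k v_k$, trading the full inverse for the block $N_k$. Here I would exploit the crucial structural consequence of bandit feedback: in episode $t$ the learner visits exactly one state-action pair $(s_k,a_k)$ per layer, so $\hatl_t(s,a)$, and hence $v_k$, is nonzero only at that single coordinate. Consequently the quadratic form collapses to its diagonal, $v_k^\top N_k v_k = \sum_{(s,a)\in U_k}\hatl_t(s,a)^2\, N_k((s,a),(s,a))$, since every off-diagonal contribution vanishes. (Were one to avoid the single-support argument, \pref{lem:quadratic_form_diagonal} together with the positive definiteness of $N_k$ from \pref{lem:N_properties} would reduce the quadratic form to its diagonal entries as well.)

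Finally I would take expectations via the tower rule, conditioning on the history up to episode $t$ so that $q_t$, and therefore $H$ and every $N_k$, is deterministic. Using that $(s,a)$ is visited with probability exactly $q_t(s,a)$ under $\pi_t$ and that $\ell_t(s,a)\in[0,1]$, the unbiased-estimator computation gives $\E_t[\hatl_t(s,a)^2] = \ell_t(s,a)^2/q_t(s,a) \le 1/q_t(s,a)$. Pulling the deterministic factor $N_k((s,a),(s,a))$ out of the conditional expectation then produces
\[
\E_t\sbr{\norm{\hatl_t}^2_{H^{-1}}} \;\le\; L\sum_{k=0}^{L-1}\sum_{(s,a)\in U_k}\frac{N_k((s,a),(s,a))}{q_t(s,a)},
\]
and taking the outer expectation yields the claim.

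I expect the conceptual crux to be the reduction from the non-diagonal $H^{-1}$ to the manageable blocks $N_k$; this is exactly what \pref{lem:H_and_N} provides, and it is the step where the layered recursive structure of the Hessian pays off. The remaining points are routine but worth checking carefully: correctly counting the $L$ layers in the Cauchy-Schwarz step (so the prefactor is $L$, not $L^2$), and verifying that the single-support property kills all cross terms so that only the diagonal entries $N_k((s,a),(s,a))$ survive.
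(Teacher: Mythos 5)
Your proof is correct, and while it leans on the same key structural lemma as the paper (\pref{lem:H_and_N}), the route to the diagonal entries is genuinely different. The paper keeps the full quadratic form over all layers and applies \pref{lem:quadratic_form_diagonal} once, to the positive semi-definite matrix with entries $H^{-1}((s,a),(s',a'))/(q_t(s,a)q_t(s',a'))$ and nonnegative weights $\Ind{s,a}\ell_t(s,a)$; the factor $L$ there arises from $\sum_{s,a}\Ind{s,a}\ell_t(s,a)\leq L$, i.e., the realized episode loss, and \pref{lem:H_and_N} is then invoked coordinate-by-coordinate on the surviving diagonal entries of $H^{-1}$. You instead split $\hatl_t$ by layer, pay the factor $L$ through Cauchy--Schwarz over the $L$ layers, apply \pref{lem:H_and_N} to each layer-supported piece before diagonalizing, and collapse each $v_k^\top N_k v_k$ to a single diagonal entry using the bandit-feedback fact that exactly one pair per layer is visited. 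Both arguments are valid and yield the identical bound after the conditional-expectation step $\E_t[\hatl_t(s,a)^2]=\ell_t(s,a)^2/q_t(s,a)\leq 1/q_t(s,a)$; the paper's version is marginally sharper in that its prefactor is really the realized loss $\sum_k \ell_t(s_k,a_k)\le L$ (though this slack is never exploited downstream), while yours avoids \pref{lem:quadratic_form_diagonal} entirely in this step (it is still indispensable later, in the proof of \pref{lem:stability_second_step}). One small caveat on your parenthetical fallback: applying \pref{lem:quadratic_form_diagonal} to $v_k^\top N_k v_k$ with weight vector $v_k$ produces the prefactor $\sum_j v_k(j)=\hatl_t(s_{k},a_{k})$, which is unbounded in general and only collapses to the desired diagonal expression because of the very single-support property you were proposing to avoid; so it is not truly an independent alternative, though it does no harm here.
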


\begin{proof}
Recall the definition of $\hatl_t$: $\hatl_t(s,a) = \frac{\ell_t(s,a)}{q_t(s,a)}\Ind{s, a}$ where we use the shorthand $\Ind{s, a} = \Ind{s_{k(s)}=s,a_{k(s)}= a}$. Therefore, we have
	\begin{equation}
	\begin{split}
	\E\sbr{\norm{\hatl_t}_{H^{-1}}} & = \E\sbr{\sum_{s,a}\sum_{s',a'}\frac{H^{-1}\rbr{(s,a), (s',a')}}{q_t(s,a)q_t(s',a')}\Ind{s,a}\ell_t(s,a)\Ind{s',a'}\ell_t(s',a')}\\
	& \leq \E\sbr{\rbr{\sum_{s',a'}\Ind{s',a'}\ell_t(s',a')}\sum_{s,a}\rbr{\frac{H^{-1}\rbr{(s,a),(s,a)}}{q_t(s,a)^2}}\Ind{s,a}\ell_t(s,a)},
  \end{split}
	\end{equation}
where in the last step we use \pref{lem:quadratic_form_diagonal} with $M$ being a matrix in the same shape of $H$ and with entry $M((s,a),(s',a')) = \frac{H^{-1}((s,a),(s',a'))}{q_t(s,a) q_t(s',a')}$ (which is clearly positive definite).
Using the fact $\sum_{s',a'}\Ind{s',a'}\ell(s',a') \leq L$, $H^{-1}\rbr{(s,a),(s,a)} \geq 0$, and $\ell_t(s,a) \in [0,1]$, we continue with
\begin{align*}
\E\sbr{\norm{\hatl_t}_{H^{-1}}} 
&\leq L\E\sbr{\sum_{s,a}\rbr{\frac{H^{-1}\rbr{(s,a),(s,a)}}{q_t(s,a)^2}}\Ind{s,a}} \\
&= L\E\sbr{\sum_{s,a}{\frac{H^{-1}\rbr{(s,a),(s,a)}}{q_t(s,a)}}} \\
&\leq L\E\sbr{\sum_{k=0}^{L-1}\sum_{(s,a)\in U_k}{\frac{N_k((s,a), (s,a))}{q_t(s,a)}}}, 
\end{align*}
where in the last step we use \pref{lem:H_and_N}.
\end{proof}

Next, we continue to bound the term involving $N_k$ using the following lemma.
\begin{lemma}\label{lem:stability_second_step}
\pref{alg:main_alg} guarantees
\[
\sum_{k=0}^{L-1}\sum_{(s,a)\in U_k}{\frac{N_k((s,a), (s,a))}{q_t(s,a)}} \leq
8eL\rbr{\sqrt{L}+ \frac{1}{\alpha L}} \sum_{s\neq s_L}\sum_{a\neq \pi(s)} \sqrt{q_t(s,a)}
\]
for any mapping $\pi$ from $S$ to $A$.
\end{lemma}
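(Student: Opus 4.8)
The plan is to bound each diagonal entry $N_k((s,a),(s,a))$ using the two positive–semidefinite inequalities from \pref{lem:N_properties}, namely $N_k\preceq D_k^{-1}$ and $N_k\preceq C_k^{-1}+P_k^\top N_{k-1}P_k$, and to treat the comparator action $a=\pi(s)$ separately from the remaining actions $a\neq\pi(s)$. For $a\neq\pi(s)$ the diagonal inequality $N_k\preceq D_k^{-1}$ already does the job: it gives $N_k((s,a),(s,a))\leq 4q_t(s,a)^{\nicefrac32}$, hence $\frac{N_k((s,a),(s,a))}{q_t(s,a)}\leq 4\sqrt{q_t(s,a)}$, which feeds directly into the target sum $\sum_{s}\sum_{a\neq\pi(s)}\sqrt{q_t(s,a)}$ and is only a lower-order contribution.

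The difficult part, and the main obstacle, is the comparator action $a=\pi(s)$, whose naive $D_k^{-1}$ bound $4\sqrt{q_t(s,\pi(s))}$ is exactly the large quantity we are forbidden to charge. Here I would invoke the recursive inequality $N_k((s,\pi(s)),(s,\pi(s)))\leq C_k^{-1}((s,\pi(s)),(s,\pi(s)))+v_s^\top N_{k-1}v_s$, where $v_s(s',a')=P(s\mid s',a')$. The first piece equals $\frac{4(q_t(s)-q_t(s,\pi(s)))^{\nicefrac32}}{\alpha}=\frac{4(\sum_{a\neq\pi(s)}q_t(s,a))^{\nicefrac32}}{\alpha}$, built solely from the non-comparator mass at $s$ and therefore of the right shape (and the source of the $\nicefrac{1}{\alpha}$ factor). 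The second piece propagates the cost to the previous layer; I would bound it through $N_{k-1}\preceq D_{k-1}^{-1}$ to get $v_s^\top N_{k-1}v_s\leq\sum_{s',a'}P(s\mid s',a')^2\cdot 4q_t(s',a')^{\nicefrac32}\leq 4\sum_{s',a'}P(s\mid s',a')\,q_t(s',a')^{\nicefrac32}$, which exposes exactly the transition-weighted structure needed downstream. The genuine tension is that dividing these bounds by $q_t(s,\pi(s))$ blows up when the algorithm places little mass on the comparator's action; to control this I would take, at each comparator entry, the \emph{minimum} of the $D_k^{-1}$ bound and the $C_k^{-1}+\text{recursion}$ bound, so that the small-mass regime is absorbed by $4\sqrt{q_t(s,\pi(s))}$ while the large-mass regime is absorbed by the vanishing $C_k^{-1}$ term.

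To organize the propagation I would set up an induction over the $L$ layers in the spirit of the penalty proof: define a function $f(s)$ equal to the accumulated propagated comparator cost up to state $s$ and a nonnegative source $g(s',a')$ supported on non-comparator actions $a'\neq\pi(s')$, verify the one-step inequality required by \pref{lem:induction_state_reach_prob}, and conclude $f(s)\leq\sum_{k<k(s)}\sum_{s'\in S_k}\sum_{a'}g(s',a')\,p^\pi(s\mid s',a')$. Summing over $s$ and using the reachability bound $\sum_{s\neq s_L}p^\pi(s\mid s',a')\leq L$ from \pref{lem:state_reach_probabilty} collapses the reachability-weighted double sum into $L\sum_{s'}\sum_{a'\neq\pi(s')}g(s',a')$, and a Hölder/Cauchy--Schwarz step identical to the one in \pref{lem:sqrt_diff} turns this $\ell_1$ mass into the square-root sum $\sum_{s'}\sum_{a'\neq\pi(s')}\sqrt{q_t(s',a')}$ while producing the $\sqrt L$ factor (the $\sqrt L$ thus tracks the $D_k^{-1}$ contributions propagated across layers, whereas $\nicefrac{1}{\alpha L}$ tracks the $C_k^{-1}$ contributions). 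I expect the overall constant $e$ to emerge from a $\rbr{1+\nicefrac1L}^{L}\leq e$ estimate, since each of the at most $L$ recursion steps should inflate the running bound by a factor $\rbr{1+\nicefrac1L}$.

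The step I anticipate being genuinely delicate is precisely the interface between the division by $q_t(s,\pi(s))$ and the recursion: one must show that the propagated quantity $v_s^\top N_{k-1}v_s$, once divided, still matches the layered source structure demanded by \pref{lem:induction_state_reach_prob} without leaving behind an uncontrolled $\nicefrac{1}{q_t(s,\pi(s))}$ factor. This is where the identity $q_t(s)=\sum_{s',a'}q_t(s',a')P(s\mid s',a')=\langle q_t,v_s\rangle$ is used to absorb the normalization, and where the per-entry choice between the two bounds on $N_k$ must be reconciled with the reachability weighting so that the small- and large-mass regimes are handled simultaneously. Everything then plugs back into \pref{lem:stability_first_step} to yield the stated bound for any mapping $\pi$.
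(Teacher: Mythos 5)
Your high-level architecture matches the paper's: handle $a\neq\pi(s)$ with $N_k\preceq D_k^{-1}$ (giving $4\sqrt{q_t(s,a)}$ per entry), handle $a=\pi(s)$ via the recursive bound $N_k\preceq C_k^{-1}+P_k^\top N_{k-1}P_k$ with a case split on whether $\frac{q_t(s)-q_t(s,\pi(s))}{q_t(s,\pi(s))}$ exceeds $\nicefrac{1}{L}$, and accumulate a $(1+\nicefrac{1}{L})^L\leq e$ inflation across layers. You also correctly attribute $\sqrt{L}$ to the $D_k^{-1}$ branch and $\nicefrac{1}{\alpha L}$ to the $C_k^{-1}$ branch. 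However, there is a genuine gap at exactly the step you flag as delicate, and your concrete plan for it would fail. You propose to bound the propagated term by $v_s^\top N_{k-1}v_s\leq v_s^\top D_{k-1}^{-1}v_s\leq 4\sum_{s',a'}P(s\mid s',a')\,q_t(s',a')^{\nicefrac{3}{2}}$. After dividing by $q_t(s,\pi(s))\approx q_t(s)=\sum_{s',a'}P(s\mid s',a')q_t(s',a')$, this yields a transition-weighted average of $4\sqrt{q_t(s',a')}$ over \emph{all} actions at layer $k-1$, including the comparator actions $a'=\pi(s')$. Those terms $\sqrt{q_t(s',\pi(s'))}$ are precisely what the target bound forbids you to charge, and they cannot be fed into \pref{lem:induction_state_reach_prob} as a source $g$ supported on non-comparator actions. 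Collapsing $N_{k-1}$ to $D_{k-1}^{-1}$ destroys the structure that lets the comparator action be treated specially at the previous layer, so the induction does not close.

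The paper's resolution is different and is the key missing idea: it never replaces $N_{k-1}$ by $D_{k-1}^{-1}$ inside the recursion. Instead it controls the off-diagonal entries of $N_{k-1}$ appearing in $v_s^\top N_{k-1}v_s$ by applying the positive-semidefiniteness fact $M(i,j)\leq\frac{M(i,i)+M(j,j)}{2}$ (\pref{lem:quadratic_form_diagonal}) to the rescaled matrix with entries $\frac{N_{k-1}((s_1,a_1),(s_2,a_2))}{q_t(s_1,a_1)q_t(s_2,a_2)}$, which gives
\[
v_s^\top N_{k-1}v_s \;\leq\; q_t(s)\sum_{(s',a')\in U_{k-1}}P(s\mid s',a')\,\frac{N_{k-1}((s',a'),(s',a'))}{q_t(s',a')}.
\]
This keeps the recursion closed in the quantity $\frac{R(s',a')}{q_t(s',a')}$ itself, so the same two-case analysis can be reapplied at layer $k-1$; the factor $\frac{q_t(s)}{q_t(s,\pi(s))}\leq 1+\nicefrac{1}{L}$ in the large-mass case is what produces the $e$ after $L$ layers. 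The paper then sums the resulting per-layer inequality directly (using $\sum_{s\in S_k}P(s\mid s',a')=1$) rather than invoking the reachability-probability machinery of \pref{lem:induction_state_reach_prob}, which in any case is built for quantities linear in $q$ and would not deliver the required sum-of-square-roots form here. Without the diagonal-domination step your argument leaves an uncontrolled comparator contribution at every layer, so the proof as proposed does not go through.
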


\begin{proof}
For notational convenience, define $R(s,a) = N_{k(s)}((s,a), (s,a))$.
We first prove that for any $k=1, \ldots, L-1$, 
\begin{equation}\label{eq:R}
	\sum_{(s,a)\in U_k}\frac{R(s,a)}{q_t(s,a)} \leq 8\rbr{\sqrt{L} + \frac{1}{\alpha L}} \sum_{s \in S_k}\sum_{a\neq \pi(s)} \sqrt{q_t(s,a)} + \rbr{ 1 + \frac{1}{L}} \sum_{(s', a')\in U_{k-1}}\frac{R(s',a')}{q_t(s',a')},
\end{equation}
and for $k = 0$,	
\begin{equation}\label{eq:R0}
\sum_{a \in A}\frac{R(s_0,a)}{q_t(s_0,a)} \leq 8\rbr{ \sqrt{L}+ \frac{1}{\alpha L}} \sum_{a\neq \pi(s)}\sqrt{q(s_0,a)},
\end{equation}
where $\pi$ is any mapping from $S$ to $A$.
Indeed, repeatedly applying \eqref{eq:R} and using the fact $(1+\nicefrac{1}{L})^L \leq e$ show
\[
\sum_{(s,a)\in U_k}\frac{R(s,a)}{q_t(s,a)} \leq 8e\rbr{\sqrt{L}+ \frac{1}{\alpha L}}\sum_{l=0}^k\sum_{s\in S_l}\sum_{a\neq \pi(s)} \sqrt{q_t(s,a)},
\]
and summing over $k$ finishes the proof.

To prove \pref{eq:R0}, note that by definition, when $s=s_0$ we have
\[
R(s,a) = \frac{4}{q_t(s,a)^{-\nicefrac{3}{2}} + \alpha(q_t(s)-q_t(s,a))^{-\nicefrac{3}{2}}} \leq 4\min\cbr{q_t(s,a)^{\nicefrac{3}{2}}, \frac{(q_t(s)-q_t(s,a))^{\nicefrac{3}{2}} }{\alpha}}.
\]
Now consider two cases, if $\frac{q_t(s)-q_t(s,\pi(s))}{q_t(s,\pi(s))} \leq \frac{1}{L}$, then
\[
\frac{R(s,\pi(s))}{q_t(s,\pi(s))} \leq \frac{4}{\alpha L}\sqrt{\sum_{a\neq  \pi(s)}q_t(s,a)}
\leq \frac{4}{\alpha L}\sum_{a\neq  \pi(s)} \sqrt{q_t(s,a)}.
\]
On the other hand, if $\frac{q_t(s)-q_t(s,\pi(s))}{q_t(s,\pi(s))} > \frac{1}{L}$, then $q_t(s,\pi(s)) \leq L(q_t(s)-q_t(s,\pi(s)))$ and 
\[
\frac{R(s,\pi(s))}{q_t(s,\pi(s))} \leq 4\sqrt{q_t(s,\pi(s))}
\leq 4\sqrt{L}\sqrt{q_t(s)-q_t(s,\pi(s))} \leq 4\sqrt{L}\sum_{a\neq  \pi(s)}\sqrt{q_t(s,a)}.
\]
Combining the two cases and also the fact $\sum_{a\neq\pi(s)}\frac{R(s,a)}{q_t(s,a)} \leq 4\sum_{a\neq  \pi(s)}\sqrt{q_t(s,a)} $ proves \pref{eq:R0}.
	
   It remains to prove \pref{eq:R}.
   First, using the fact $N_k \preceq D_k^{-1}$ from \pref{lem:N_properties}, we again have 
   \begin{equation}\label{eq:R_bound}
   R(s,a) \leq 4q_t(s,a)^{\nicefrac{3}{2}}.
   \end{equation}
   At the same time, using another fact $N_k \preceq C_k^{-1} + P_k^\top N_{k-1} P_k$ from \pref{lem:N_properties} and shorthand $R(s,a,s',a') \triangleq N_{k(s)}((s,a), (s',a'))$, we have
	\begin{equation*}
	\begin{split}
	R(s,a) &\leq  \frac{4(q(s)-q(s,a))^{\nicefrac{3}{2}} }{\alpha} + \sum_{(s_1,a_1), (s_2,a_2)\in U_{k-1}}P(s|s_1,a_1)P(s|s_2,a_2)R(s_1,a_1,s_2,a_2) \\ 
	&=  \frac{4(q(s)-q(s,a))^{\nicefrac{3}{2}} }{\alpha} \\ &\quad  + \sum_{(s_1,a_1), (s_2,a_2)\in U_{k-1}}P(s|s_1,a_1)q_t(s_1,a_1)P(s|s_2,a_2)q_t(s_2,a_2)\frac{R(s_1,a_1,s_2,a_2)}{q_t(s_1,a_1)q_t(s_2,a_2)} \\ 
	&\leq  \frac{4(q(s)-q(s,a))^{\nicefrac{3}{2}} }{\alpha} \\ &\quad +\rbr{\sum_{(s_2,a_2)\in U_{k-1}}P(s|s_2,a_2)q_t(s_2,a_2)} \sum_{(s_1,a_1)\in U_{k-1}}P(s|s_1,a_1)q_t(s_1,a_1)\frac{R(s_1,a_1)}{q(s_1,a_1)^2} \\ 
	&=  \frac{4(q(s)-q(s,a))^{\nicefrac{3}{2}} }{\alpha} + q_t(s)\sum_{(s_1,a_1)\in U_{k-1}}P(s|s_1,a_1)\frac{R(s_1,a_1)}{q_t(s_1,a_1)},
	\end{split}
	\end{equation*}
	where the second inequality is by applying \pref{lem:quadratic_form_diagonal} again, with $M \in\fR^{U_{k-1}\times  U_{k-1}} $ such that $M((s_1,a_1), (s_2,a_2)) = \frac{R(s_1,a_1,s_2,a_2)}{q_t(s_1,a_1)q_t(s_2,a_2)}$ (which is positive definite by \pref{lem:N_properties}).
	Again, we fix $s$ and consider two cases.
	First, if $\frac{q_t(s)-q_t(s,\pi(s))}{q_t(s,\pi(s))} \leq \frac{1}{L}$, then 
	\begin{equation*}
	\label{eq:bound_optimal_case_2}
	\begin{split}
	\frac{R(s,\pi(s))}{q_t(s,\pi(s))} & \leq \frac{4(q_t(s)-q_t(s,\pi(s)))^{\nicefrac{3}{2}}}{\alpha q_t(s,\pi(s))} + \frac{q_t(s)}{q_t(s,\pi(s))}\sum_{(s_1,a_1)\in U_{k-1}}P(s|s_1,a_1)\frac{R(s_1,a_1)}{q_t(s_1,a_1)} \\
	& \leq \frac{4(q_t(s)-q_t(s,\pi(s)))}{\alpha   q_t(s,\pi(s))}  \sum_{a \neq \pi(s)}\sqrt{q_t(s,a)} + \rbr{1 + \frac{1}{L}} \sum_{(s_1,a_1)\in U_{k-1}}P(s|s_1,a_1)\frac{R(s_1,a_1)}{q_t(s_1,a_1)} \\ 
	& \leq \frac{4}{\alpha L}  \sum_{a \neq \pi(s)}\sqrt{q_t(s,a)} + \rbr{1 + \frac{1}{L}} \sum_{(s_1,a_1)\in U_{k-1}}P(s|s_1,a_1)\frac{R(s_1,a_1)}{q_t(s_1,a_1)}.
	\end{split}
	\end{equation*}
   
   On the other hand, if $\frac{q_t(s)-q_t(s,\pi(s))}{q_t(s,\pi(s))} > \frac{1}{L}$ and thus $q_t(s,\pi(s)) \leq L(q_t(s)-q_t(s,\pi(s)))$, then using \pref{eq:R_bound} we have
	\begin{equation*}
	\label{eq:bound_optimal_case_1}
	\frac{R(s,\pi(s))}{q(s,\pi(s))} \leq 4\sqrt{q(s,\pi(s))} \leq 4\sqrt{L \sum_{a \neq \pi(s)}q_t(s,a)}\leq 4\sqrt{L}\sum_{a \neq \pi(s)}\sqrt{q_t(s,a)}.
	\end{equation*}
	Combining the two cases and also $\sum_{a\neq \pi(s)}\frac{R(s,a)}{q_t(s,a)} \leq 4\sum_{a \neq \pi(s)}\sqrt{q_t(s,a)}$ (using  \pref{eq:R_bound}  again) leads to
	\[
	\sum_{a}\frac{R(s,a)}{q_t(s,a)}  \leq 8\rbr{\sqrt{L} +\frac{1}{\alpha L} } \sum_{a \neq \pi(s)}\sqrt{q_t(s,a)}+ \rbr{1 + \frac{1}{L}} \sum_{(s_1,a_1)\in U_{k-1}}P(s|s_1,a_1)\frac{R(s_1,a_1)}{q_t(s_1,a_1)}.
	\]

	Finally, we sum over all $s \in S_k$ and obtain
	\[
	\begin{split}
	&\sum_{(s,a) \in U_k} \frac{R(s,a)}{q_t(s,a)} \\
	& \leq 8\rbr{\sqrt{L} +\frac{1}{\alpha L} } \sum_{s \in S_k}\sum_{a\neq \pi(s)} \sqrt{q_t(s,a)} + \rbr{ 1 + \frac{1}{L}} \sum_{s \in S_k}\sum_{(s',a')\in U_{k-1}}P(s|s',a')\frac{R(s',a')}{q_t(s',a')} \\
	& =  8\rbr{\sqrt{L} +\frac{1}{\alpha L}} \sum_{s \in S_k}\sum_{a\neq \pi(s)} \sqrt{q_t(s,a)} + \rbr{ 1 + \frac{1}{L}} \sum_{(s',a')\in U_{k-1}}\frac{R(s',a')}{q_t(s',a')}.
	\end{split}
	\]
	This proves \pref{eq:R} and thus finishes the proof.
\end{proof}

We are now ready to finish the proof for \pref{lem:key_lemma_bound_stability}.
\begin{proof}[Proof of \pref{lem:key_lemma_bound_stability}]
Combining \pref{lem:stability_first_step} and \pref{lem:stability_second_step}, we prove 
\[
\E\sbr{\|\hatl_t\|^2_{\nabla^{-2}\phi_{H}(q_t)}} \leq \E\sbr{8eL^2\rbr{\sqrt{L} + \frac{1}{\alpha \cdot L} } \sum_{s\neq s_L}\sum_{a\neq \pi(a)}\sqrt{q_t(s,a)}}.
\]
It thus remains to prove the other bound
\[
\E\sbr{\|\hatl_t\|^2_{\nabla^{-2}\phi_{H}(q_t)}} \leq 4\sqrt{L|S||A|}.
\]
This is simply by considering only the regular ${\nicefrac{1}{2}}$-Tsallis entropy part of the regularizer: $\phi_{D}(q) = -\sum_{s,a}\sqrt{q(s,a)}$.
Specifically, by \pref{lem:hessian_derivaties} we have $\nabla^{2}\phi_{H}(q_t) \succeq \nabla^{2}\phi_{D}(q_t)$, and thus
	\[
	\begin{split}
	\E\sbr{\norm{\hatl_t}_{\nabla^{-2}\phi_{H}(q_t)}^2} & \leq \E\sbr{\norm{\hatl_t}_{\nabla^{-2}\phi_{D}(q_t)}^2} \\
	& = \E\sbr{\sum_{s,a} \frac{4q_t(s,a)^{\nicefrac{3}{2}}}{q_t(s,a)^2}\Ind{s,a}\ell_t(s,a)^2} \\ 
	& \leq 4\E\sbr{\sum_{s, a} \sqrt{q_t(s,a)} } \\ 
	& \leq 4 \sqrt{L|S||A|},
	\end{split}
	\]
	where the last step uses the Cauchy-Schwarz inequality. 
\end{proof}

\end{document}